\newtheorem{theorem}{Theorem} % [section]
\newtheorem{definition}{Definition}
\definecolor{darkred}{RGB}{150,0,0}
\def\Algnameunderline{\textcolor{darkred}{T}ransferable \textcolor{darkred}{V}ision \textcolor{darkred}{E}xplainer}
\def\Algnameabbrnormal{TVE}
\def\Algnameabbr{\texttt{TVE}}
\def\Title{\Algnameabbrnormal{}: Learning Meta-attribution for Transferable Vision Explainer}
\title{\Title{}}
\begin{document}

\author{Guanchu Wang$^{1}$, 
Yu-Neng Chuang$^{1}$, 
Fan Yang$^{2}$, 
Mengnan Du$^{3}$,
Chia-Yuan Chang$^{4}$,
\\
Shaochen Zhong$^{1}$,
Zirui Liu$^{1}$,
Zhaozhuo Xu$^{5}$,
Kaixiong Zhou$^{6}$,
Xuanting Cai$^{7}$,
Xia Hu$^{1}$
\\
\small
$^1$Rice University,
$^2$Wake Forest University,
$^3$New Jersey Institute of Technology,
$^4$Texas A\&M University,
\\
\small
$^5$Stevens Institute of Technology,
$^6$North Carolina State University,
$^7$Meta Platforms, Inc.
\\
\small
\texttt{\{gw22,yc146,shaochen.zhong,zl105,xia.hu\}@rice.edu}; \texttt{yangfan@wfu.edu};
\texttt{mengnan.du@njit.edu};
\\
\small
\texttt{cychang@tamu.edu};
\texttt{zxu79@stevens.edu};
\texttt{kzhou22@ncsu.edu};
\texttt{caixuanting@meta.com}
}

\date{}
\maketitle

\begin{abstract}

\centering
\begin{minipage}{0.75\textwidth} 

Explainable machine learning significantly improves the transparency of deep neural networks.
However, existing work is constrained to explaining the behavior of individual model predictions, and lacks the ability to transfer the explanation across various models and tasks.
This limitation results in explaining various tasks being time- and resource-consuming.
To address this problem, we introduce a \Algnameunderline{}~(\Algnameabbr{}) that can effectively explain various vision models in downstream tasks.
Specifically, the transferability of \Algnameabbr{} is realized through a pre-training process on large-scale datasets towards learning the meta-attribution.
This meta-attribution leverages the versatility of generic backbone encoders to comprehensively encode the attribution knowledge for the input instance, which enables \Algnameabbr{} to seamlessly transfer to explain various downstream tasks, without the need for training on task-specific data.
Empirical studies involve explaining three different architectures of vision models across three diverse downstream datasets. 
The experimental results indicate \Algnameabbr{} is effective in explaining these tasks without the need for additional training on downstream data.
The source code is available at 
\url{https://github.com/guanchuwang/TVE}.

\end{minipage}

\end{abstract}

\section{Introduction}
\label{sec:intro}

Explainable machine learning (ML) contributes to enhancing the transparency of deep neural networks~(DNNs) for human comprehension~\cite{du2019techniques}.
It significantly facilitates the deployment of DNNs to high-stake scenarios where model explanations are required, such as loan approvals~\cite{steel2010web}, healthcare~\cite{chang2023towards}, and targeted advertisement~\cite{yang2018towards}.
In these fields, explainable DNN decisions are particularly important, given the practical needs of stakeholders and regulatory requirements, 
such as the General Data Protection Regulation~(GDPR)~\cite{goodman2017european}.

To overcome the black-box nature of DNNs, existing work of explainable ML can be categorized into two groups.
The first group of work focuses on constructing local explanation based on perturbation of the target black-box model, like LIME~\cite{ribeiro2016should}, GradCAM~\cite{selvaraju2017grad}, and Integrated Gradient~\cite{sundararajan2017axiomatic}.
These pieces of work rely on resource-intensive procedures like sampling or backpropagation of the target black-box model~\cite{liu2021synthetic}, leading to undesirable trade-off between the efficiency and interpretation fidelity~\cite{chuang2023efficient}.
Another group leverages the knowledge of explanation values to train DNN-based explainers, such as FastSHAP~\cite{jethani2021fastshap}, CORTX~\cite{chuang2023cortx}, and LARA~\cite{rong2023efficient, wang2022accelerating}.  
Such arts capable of efficiently generating explanations for an entire batch of instances through a single, streamlined feed-forward operation of the DNN-based explainer.
However, they are constrained to explaining individual black box models, and often lack the ability to transfer the explainer across various models or tasks.
These constraints lead to a time and resource-intensive process in practical scenarios, as they require the development and training of separate explainers for each specific task.

\begin{figure}
\centering
\!\!\!\!\!\!\!\!
\subfigure[]{
\includegraphics[width=0.4\linewidth]{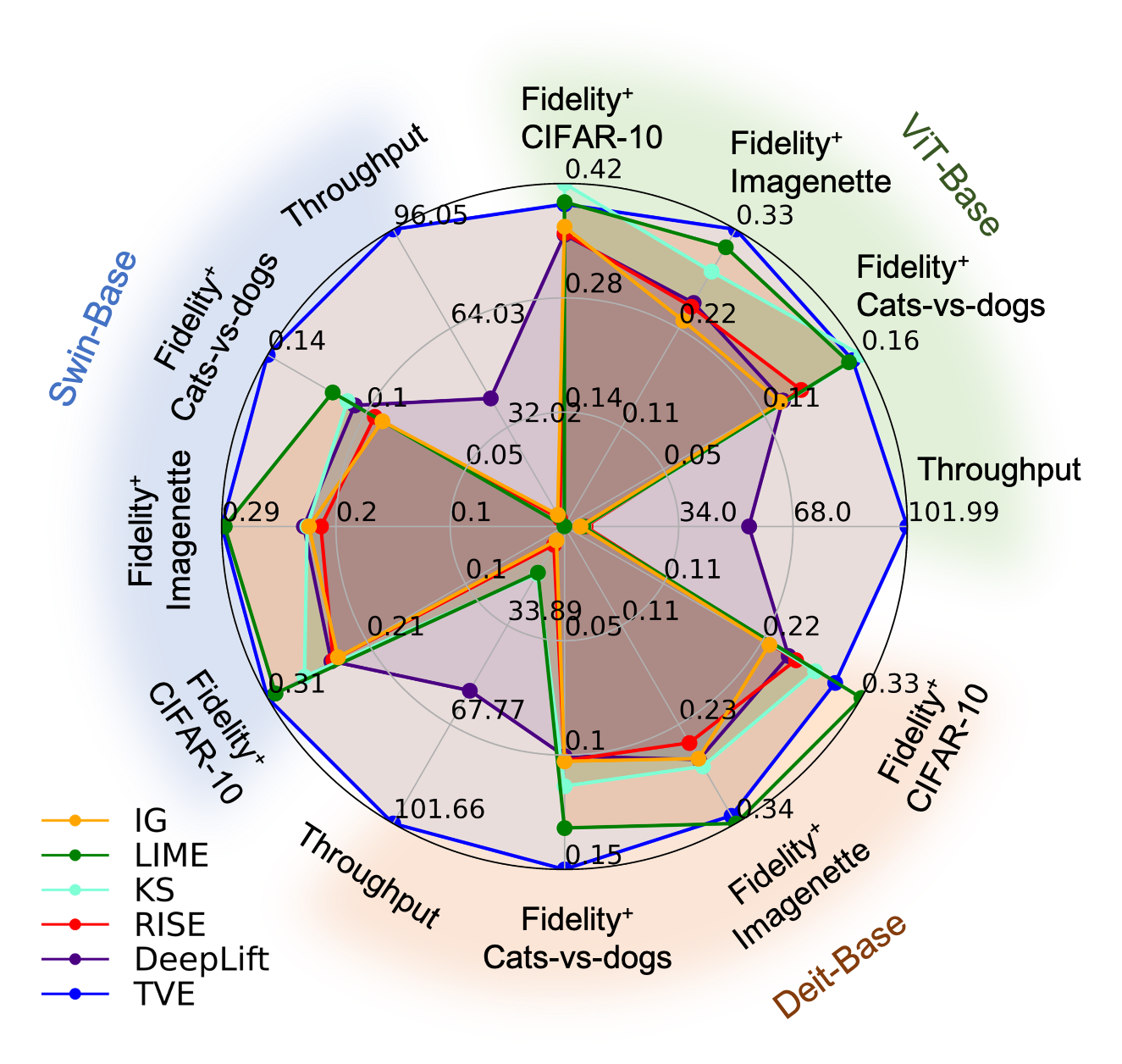}
\label{fig:fidelity-radar}
}
% \!\!\!\!\!\!\!\!
\subfigure[]{
\raisebox{0.1\height}{\includegraphics[width=0.5\textwidth]{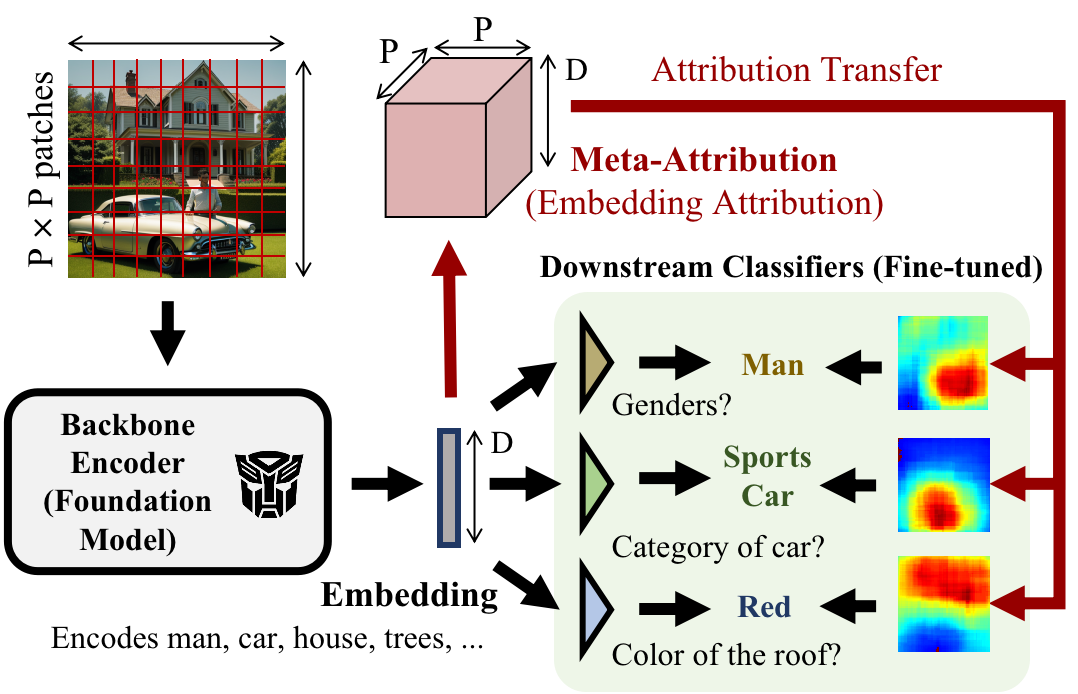}}
\label{fig:geneirc_xai_motivation}
}
\caption{Performance of \Algnameabbr{} in explaining ViT-B, Swin-B, and Deit-B on the Cats-vs-dogs, Imagenette, and CIFAR-10 datasets.
\textit{Fidelity$^+$ score} refers to the area under Fidelity$^+$-sparsity curve. (b) Illustration of attribution transfer. In this framework, the backbone can be a ViT encoder; and the downstream classifiers can be MLPs.
The embedding vector comprehensively encodes the features of input image.
Motivated by this, the meta-attribution comprehensively encapsulates the importance of each input patch to each element of the embedding vector.
This enables it to seamlessly transfer for explaining various downstream tasks.}
\end{figure}

To address the lack of transferability in explainers, we introduce a \Algnameunderline{} (\Algnameabbr{}).
The primary goal of \Algnameabbr{} is to achieve transferability through a pre-training process on large-scale image datasets, such that it can seamlessly explain various downstream tasks, as long as such tasks are within the scope of pre-training data distribution.
The construction of such transferable explainers introduces two non-trivial challenges:
\textbf{CH1.} Without task-specific exposure during the pre-training, how to ensure the universal effectiveness of explainer for various downstream tasks? 
\textbf{CH2.} How to adapt the explainer to a specific  task without fine-tuning on the task-specific data?

Our work effectively tackles these challenges.
To address CH1, we introduce a novel concept, named \textit{meta-attribution}, as a foundation for explaining various downstream tasks.
Specifically, the meta-attribution versatilely encodes the attribution knowledge for the input instance via exhaustively attributing each dimension of instance embedding. 
This knowledge is reusable for explaining various downstream tasks. 
It guides the pre-training of \Algnameabbr{} on large-scale image datasets, ensuring the universal effectiveness of \Algnameabbr{}.
After the pre-training, in response to CH2, we propose a \textit{transfer rule} to adapt the meta-attribution to explaining downstream tasks, without the need for additional training on task-specific data.
Figure~\ref{fig:fidelity-radar} shows the comprehensive performance of \Algnameabbr{} pre-trained on the {\small \texttt{ImageNet}} dataset and transferred to the {\small \texttt{Cats-vs-dogs}}, {\small \texttt{Imagenette}}, and {\small \texttt{CIFAR-10}} datasets, where \Algnameabbr{} shows competitive fidelity and efficiency compared with state-of-the-art methods.
To summarize, our work makes the following contributions:
\begin{itemize}[leftmargin=4mm, topsep=10pt]
% \setlength{\parskip}{1pt}
% \setlength{\parsep}{0pt}
% \setlength{\itemsep}{0pt}

    % \item We propose the definition of patch attribution and demonstrate its consistency with the infidelity of explanation from both theoretical and experimental perspectives.

    \item \textbf{Attribution transfer.}
    We propose a framework of attribution transfer, with a meta-attribution as foundations, and a transfer rule for explaining the downstream tasks.
    
    % for pre-training the transferable explainer.
    % Then, we propose a transfer rule to utilize the meta-attribution for explaining downstream tasks.
    % within the scope of pre-training data distribution.

    \item \textbf{Transferable explainer.} We build a transferable explainer \Algnameabbr{} that explains various downstream tasks without the need for training on the task-specific data.

    \item \textbf{Theoretical foundation.} \!\! We validate the~pre-training of \Algnameabbr{} can minimize the explanation error bound aligned with the $\mathcal{V}$-information-based explanation.
    
    % task-independent generic attribution aligned with the interpretation fidelity for \Algnameabbr{} that can be efficiently calculated and effectively guide the training of transferable explainer.
    % \item We propose a rule-based method to adopt the meta-attribution to explain downstream tasks, and show its theoretical consistency with the fidelity.
    
    % \item We propose \Algnameabbr{} to learn meta-attribution, and theoretically minimizing the explanation error bound on downstream tasks.
    
    % \Algnameabbr{} generates the task-aligned explanation based on a combination of the transferable explainer with the classification head of target model, without additional training on downstream tasks.

    % 
    % \item 

    \item \textbf{Competitive performance in explaining various downstream tasks.}
    The pre-trained \Algnameabbr{} shows promising results in explaining three architectures of vision Transformer across three downstream datasets.
    Significantly, the strong transferability of \Algnameabbr{} facilitates efficient and flexible deployment to various downstream scenarios.
    
    % compared to existing methods that are constrained to explaining individual models
    
    % the experiment results indicate \Algnameabbr{} can effectively explain the tasks without additional training on task-specific data.
    
    % to improve the faithfulness of explanations.
    
\end{itemize}

\section{Notations}

We introduce the notations for the problem formulation.

% In this section, and concept of attribution transfer.
% for the problem formulation and evaluation metrics of interpretation faithfulness and efficiency.
% that motivate the design of our GNN explanation framework.

% 
% \subsection{Notations}
\label{sec:notation}

\paragraph{Target model.}
We focus on the explanation of vision models$: \mathcal{X} \to \mathcal{Y}_t$ in this work, where $\mathcal{X} = \mathbb{N}^{\mathrm{W}\times\mathrm{W}}$ denote the spatial space of $\mathrm{W} \!\times\! \mathrm{W}$ pixels; $\mathbb{N}$ denote the space of a single pixel with three channels; and $\mathcal{Y}_t$ denotes the label space. 
Moreover, we follow most of existing work~\cite{he2022masked} and implementation of DNNs~\cite{wolf-etal-2020-transformers} to consider the target model as $f_{t} = H_t \circ G$, where the backbone encoder ${G}(\bullet) \! : \! \mathbb{N}^{\mathrm{W}\times\mathrm{W}} \! \to \! \mathbb{R}^{\mathrm{D}}$ is pre-trained on large-scale datasets; and the classifier ${H}_t(\bullet) \! : \! \mathbb{R}^{\mathrm{D}} \!\to\! \mathcal{Y}_t$~is finetuned on a specific task $t$.
It maybe worth noting that although we follow the transfer learning setting~\cite{chilamkurthy2017transfer, chen2020simple} to freeze the backbone encoder ${G}(\bullet)$ during the fine-tuning of $f_t$. 
Our experiment results in Section~\ref{sec:adaptation_finetune_backbone} further show that the proposed transferable explanation framework also shows effectiveness in the scenario where the target model is fully fine-tuned on downstream data.

% he2020momentum
% the classifier ${H}_t(\bullet)$ depends on a specific task $t$.

% This framework is effective for most existing vision models such as convolutional neural networks~\cite{he2016deep, huang2017densely} and vision transformers~\cite{dosovitskiy2020image, liu2021swin}.
% Thereby, the pre-training of ${G}(\bullet)$ is considered to be task-independent, and the finetuning of ${H}(\bullet)$ is task dependent.

% We also consider the classification head ${H}(\bullet): \mathcal{H} \to \mathcal{Y}$ finetuned on the downstream task, such that the classifier $f_{t} = {G} \circ {H}$.
% \footnote{The parameters of a generic image encoder can be downloaded from ??? or ???, or ???, etc.}

\paragraph{Image Patching.} 
We follow existing work~\cite{lundberg2017unified} to consider the patch-wise attribution of model prediction, i.e. the importance of each patch.
Specifically, we follow existing work~\cite{chuang2023cortx, jethani2021fastshap} to split each image $\boldsymbol{x}_k$ into $\mathrm{P} \!\times\! \mathrm{P}$ patches in a grid pattern, where each patch has $\mathrm{C} \!\times\! \mathrm{C}$ pixels; and $\mathrm{W} \!=\! \mathrm{C}\mathrm{P}$.
Let $\mathcal{Z}(\boldsymbol{x}_k) \!=\! \{ {z}_{i,j} | 1 \!\leq\! i, j \!\leq\! \mathrm{P} \}$ denote the patches of an image $\boldsymbol{x}_k \!\in\! \mathcal{X}$, where a patch ${z} \!\in\! \mathbb{N}^{\mathrm{C} \times \mathrm{C}}$ aligns with continuous $\mathrm{C} \!\times\! \mathrm{C}$ pixels of the image.
Moreover, we define $\mathcal{N}({z}) \!\subseteq\! \mathcal{Z}(\boldsymbol{x}_k)$ as the neighbors of a patch $z$ within the grid space, because a patch together with its neighbors have richer semantic content for model explanation. 
In this work, we follow the vision transformer~\cite{dosovitskiy2020image} to split the image patches with $\mathrm{P} \!=\! 14$ for $224 \!\times\! 224$ input images from the ImageNet dataset; and we consider $\mathcal{N}({z})$ as the zero-, one-, and two-hop neighbors of the patch $z$.

% where $\mathrm{d}(z, z')$ defines the number of patches between the patches $z$ and $z'$~(including $z'$).
% $\mathcal{N}({z}) = \{ z' ~|~ \mathrm{d}(z, z') \leq \xi \}$ 
% \footnote{\color{red} $\mathrm{C}$ controls the strength of inter-patch interaction taken into account for attribution. The inter-patch interaction is the representation of the semantic information provided by patch $z$ together with its neighbor patches $\mathcal{N}(z)$.}; and 

% The attribution of a patch ${z} \sim \mathcal{Z}(\boldsymbol{x}_k)$ is denoted as $\phi_{k, {z}}$.

% $i$-th row and $j$-th column of the image.

\paragraph{Model Perturbation.}
\!\!\!\! $f_{t}( \mathcal{S}; \boldsymbol{x}_k, y)$ represents the output~of $f_{t}$ on class $y$, with a perturbed instance as the input.
The patch subset $\mathcal{S} \!\subseteq\! \mathcal{Z}(\boldsymbol{x}_k)$ controls the perturbation. 
Specifically, the pixels belonging to the patches $z \!\in\! \mathcal{Z}(\boldsymbol{x}_k) \setminus \mathcal{S}$~are removed and take $0$, which is approximately the average value of normalized pixels.
For example, $f_{t}( \mathcal{N}(z); \!\boldsymbol{x}_k, y)$ defines the output of $f_t$ based on the perturbed input, where the pixels not belonging to the neighbors of patch $z$ take $0$.

\paragraph{Feature Attribution.}
This work focuses on the feature attribution of target models $f_t$ for providing explanations.
The feature attribution process involves generating importance scores, denoted as $\phi_{k,y,z}$ for each patch $z \in \mathcal{Z}(\boldsymbol{x}_k)$ of the input image $\boldsymbol{x}_k \in \mathcal{X}$, to indicate its importance to the model prediction $f_{t}( \mathcal{Z}(\boldsymbol{x}_k); \boldsymbol{x}_k, y)$ on class $y$.

\section{Feature Attribution can Transfer}
\label{sec:advantage}
% In this section, we propose the basic idea of attribution transfer, and show it advantages of flexible to explain various downstream tasks.

The motivation behind attribution transfer stems from model transfer in vision tasks~\cite{chilamkurthy2017transfer, he2020momentum, he2022masked}. 
Specifically, it arises from the observation that a generic backbone encoder possesses the capability to capture essential features of input images and represent them as embedding vectors.
This versatility enables the backbone to effectively adapt to a wide range of downstream tasks within the scope of pre-training data distribution.
As shown in Figure~\ref{fig:geneirc_xai_motivation}, information of \texttt{man}, \texttt{car}, \texttt{house} encoded in the embedding vector enables the detection of \texttt{gender}, \texttt{car}, and \texttt{building} in three different downstream scenarios, respectively.
Despite the demonstrated transferability of the backbone encoder, existing research has challenges in achieving `transferable explainer' across different tasks.
To bridge this gap and streamline the explanation process, we propose a  \emph{meta-attribution} that can be applied across various tasks, resulting in a significant reduction in the cost associated with generating explanations.

% a generic encoder, in essence, represents a potent image encoder that transforms input images into embedding vectors, often pre-trained using self-supervised learning on large-scale datasets to attain high capacity. e.g. the ViT, MAE, and ResNet encoders pre-trained on the ImageNet dataset. 
% for explaining downstream tasks requires to individually analyze the pipeline of pre-trained encoder and fine-tuned classifier in each task.
% In particular, the explaining process often relies on perturbations or gradient sampling of the pipeline, which is inefficient and complicated.

The \emph{meta-attribution} is defined as a tensor that versatilely encodes the reusable attribution knowledge for explaining downstream tasks.
As shown in Figure~\ref{fig:geneirc_xai_motivation}, we illustrate the meta-attribution as a three-dimensional tensor. 
A simple and effective method in this work is attributing the importance of input patches to each element of the embedding vector for the meta-attribution.
As shown in Figure~\ref{fig:geneirc_xai_motivation}, each $\mathrm{P} \!\times\! \mathrm{P}$ slice of this tensor corresponds to $\mathrm{P} \!\times\! \mathrm{P}$ patches within the input image, encoding their importance to a specific dimension of the embedding vector.
In this way, the meta-attribution inherits the adaptability of the embedding vector, making it versatile enough to adapt various explanation tasks in downstream scenarios.
For instance, the meta-attribution encodes the attribution knowledge for the \texttt{man} and \texttt{car} components encoded in the embedding vector, such that it can transfer to explain the \texttt{car} classification and \texttt{gender} detection in downstream scenarios.
The versatility of meta-attribution can effectively address the CH1 described in Section~\ref{sec:intro}.
We formalize the attribution transfer in Sections~\ref{sec:define_attr_transfer}.

% Instead of individually explaining each task, we propose to generate a meta-attribution and subsequently utilizing this versatile information to explain various downstream tasks.

% most of the image's key components, these essential attributions are embedded within the meta-attribution tensors.

% , as depicted in Figure~\ref{fig:geneirc_xai_motivation}.

%%%%%%%%%%%%%%%%%%%%%%%%%%%%%%%%%%%%%%%%%%%%%%%%%%%%%%%%%%%%%%%%%%%%%%
%%%%%%%%%%%%%%%Move to the Introduction%%%%%%%%%%%%%%%%
%%%%%%%%%%%%%%%%%%%%%%%%%%%%%%%%%%%%%%%%%%%%%%%%%%%%%%%%%%%%%%%%%%%%%%

% As the generic encoder adapts to a wide range of downstream tasks, 
% This allows our framework to be easily deployed to real-world scenarios.

% the advantages of generic encoder
% e.g. as shown in Figure~\ref{fig:geneirc_xai_motivation}, 
% that can transfer to different downstream tasks
% This is termed as meta-attribution in this work.
% Figure~\ref{fig:geneirc_xai_motivation}
% The embedding 

\section{Meta-attribution Transfer}
\label{sec:define_attr_transfer}

In this section, we begin with the explanation definition by following the $\mathcal{V}$-information theory~\cite{xu2020theory, hewitt2021conditional, chen2022rev}.
Then, we introduce the definition of meta-attribution in Definition~\ref{definition:transferable_attribution}. 
Finally, we propose a transfer rule to adapt the meta-attribution to explaining specific downstream tasks in Definition~\ref{definition:attribution_transfer}.

% Finally, we demonstrate its consistency with the fidelity in Theorem~\ref{theorem:fidelity}, which provides a theoretical foundation for evaluating the explanation performance of our proposed attribution definition.

% In this section, we first propose the definition of generic attribution in Definition~\ref{definition:generic_explanation}.
% Then, we show its alignment with Condition~\ref{condition:task_agonistic} in Remark~\ref{rk:satisfy_condition1}. 
% This indicates the latent attribution of Definition~\ref{definition:generic_explanation} can be disentangled from downstream tasks. 
% Finally, we show its consistency with the fidelity in Theorem~\ref{theorem:patch-attribution}, which is required by condition~\ref{condition:fidelity_alignment}.
% This provides a theoretical guarantee of interpretation performance for the proposed definition of attribution.

\subsection{$\mathcal{V}$-Information-based Explanation}
\label{sec:v-information}

The importance of a patch $z \!\in\! \mathcal{Z}(\boldsymbol{x}_k)$ to downstream~model $f_t(\boldsymbol{x}_k)$~is formulated into the conditional mutual~information $I(\mathcal{N}(z); Y_t ~|~ B)$ between $\mathcal{N}(z)$ and $Y_t$, given~the state of remaining patches $B \!\subseteq\! \mathcal{Z}(\boldsymbol{x}_k) \!\setminus\! \mathcal{N}(z)$~\cite{chen2022rev}.
Here, $Y_t \!\sim\! f_t(\boldsymbol{x}_k)$ denotes the variable corresponding to~the model ouput.
However, estimating this mutual information accurately poses a challenge due to the unknown distribution of $\mathcal{N}(z)$ and $B$.
To address this challenge, we adopt an information-theoretic framework introduced in works by~\cite{xu2020theory, hewitt2021conditional}, known to as conditional $\mathcal{V}$-information $I_{\mathcal{V}}(\mathcal{N}(z) \!\to\! Y_t)$.
In particular, it redirects the computation of mutual information to a certain predictive model within function space $\mathcal{V}$, as defined by:
{
\begin{align}
I_{\mathcal{V}}(\mathcal{N}(z) \to Y_t ~|~ B ) = H_{\mathcal{V}}(Y_t ~|~ B ) - H_{\mathcal{V}}(Y_t ~|~ \mathcal{N}(z), B),
\nonumber
\end{align}} 
\!\!\! where $\mathcal{V}$-entropy $H_{\mathcal{V}}(Y_t ~|~ B )$ takes the lowest entropy over the function space $\mathcal{V}$, which is given by
{
\begin{align}
\label{eq:V_entropy}
H_{\mathcal{V}}(Y_t ~|~ B ) = \inf_{f \in \mathcal{V}} \mathbb{E}_{y \sim \mathcal{Y}_t} [-\log f( B ; \boldsymbol{x}_k, y)].
\!\!\!\!\!\!\!\!\!\!
\end{align}}
% where $ \inf_{f \in \mathcal{V}} [\bullet]$ takes the minimal entropy over the function space $f \in \mathcal{V}$.

Note that the $\mathcal{V}$-information explanation should align with a pre-trained target model $f_{t} \in \mathcal{V}$ and a specific class label $y$.
The $\mathcal{V}$-entropy should take its value at $f_{t}$ and $y$, instead of the infimum expectation value, for the explanation.
Therefore, we relax the $\mathcal{V}$-entropy terms $H_{\mathcal{V}}(Y_t ~|~ B)$ and $H_{\mathcal{V}}(Y_t ~|~ \mathcal{N}(z), B)$ into $-\log f_{t}(B ; \boldsymbol{x}_k, y)$ and $-\log f_{t}(\mathcal{N}(z) \cup B ; \boldsymbol{x}_k, y)$, respectively~\cite{ethayarajh2022understanding}, for aligning the explanation with the target model $f_{t} \in \mathcal{V}$ and class label $y$.
In this way, the attribution of patch $z$ aligned with class $y$ is defined as follows:
{
\begin{align}
\label{eq:information_attribution3}
\!\!\!\! \phi_{k, y, {z}} &= \mathbb{E}_{B \subseteq \mathcal{Z}(\boldsymbol{x}_k) \setminus \mathcal{N}(z)} [ -\log f_{t}(B ; \boldsymbol{x}_k, y) \!+\! \log f_{t}(\mathcal{N}(z) \!\cup\! B ; \boldsymbol{x}_k, y) ]. \!\!
\end{align}}

% Here, $-\log f_{t}(B ; \boldsymbol{x}_k, y)$ is also to the infimum value $\inf_{f \in \mathcal{V}} [-\log f( B ; \boldsymbol{x}_k, y)]$ as well, because the target model $f_t$ has been well trained on downstream data $\boldsymbol{x}_k \in \mathcal{D}_{t}$ towards minimizing the cross-entropy aligned with the labels, i.e. $f_{t} = \arg\min_{f \in \mathcal{V}} \mathbb{E}_{y \sim \mathcal{Y}} [-\log f(\boldsymbol{x}_k, y)]$.
% , where $y$ takes the predicted label
% Regarding the patches $B \subseteq \mathcal{Z}(\boldsymbol{x}_k) \setminus \mathcal{N}(z)$, we consider the average of ???

It is impossible to enumerate the state of $B$ over $B \subseteq \mathcal{Z}(\boldsymbol{x}_k) \setminus \mathcal{N}(z)$ in Equation~(\ref{eq:information_attribution3}).
We follow existing work \cite{mitchell2022sampling} to approximate it into two antithetical states to simplify the computation~\cite{mitchell2022sampling}. These cases involve considering the state of $B$ to be entirely remaining patches $\mathcal{Z}(\boldsymbol{x}_k) \setminus \mathcal{N}(z)$ or empty set $\varnothing$, narrowing down the enumeration of $B \subseteq \mathcal{Z}(\boldsymbol{x}_k) \setminus \mathcal{N}(z)$ to $B \sim \{ \mathcal{Z}(\boldsymbol{x}_k) \setminus \mathcal{N}(z), \varnothing \}$ in Equation~(\ref{eq:information_attribution3}).
Based on our numerical studies in Appendix~\ref{appendix:B-sampling-experiment}, the approximate attribution shows positive correlation with the exact value, which indicates the approximation does not affect the quality of attribution.
To summarize, we approximate the attribution value of patch $z$ aligned with class $y$ as follows:
{
\begin{align}
\label{eq:information_attribution1}
\!\!\!\!\phi_{k, y, {z}} &\approx \mathbb{E}_{B \sim \{ \mathcal{Z}(\boldsymbol{x}_k) \setminus \mathcal{N}(z), \varnothing \}} [ -\log f_{t}(B ; \boldsymbol{x}_k, y)  + \log f_{t}(\mathcal{N}(z) \cup B ; \boldsymbol{x}_k, y) ],
\\
\label{eq:information_attribution2}
&\!\!\!\! \sim  \log f_{t}(\mathcal{N}(z) ; \boldsymbol{x}_k, y) \!-\! \log f_{t}(\mathcal{Z}(\boldsymbol{x}_k) \!\setminus\! \mathcal{N}(z) ; \boldsymbol{x}_k, y), \!\!\!\!\!\!\!\!
\end{align}}
\!\! where the terms $f_{t}(\mathcal{Z}(\boldsymbol{x}_k) ; \boldsymbol{x}_k, y)$ and $f_{t}(\varnothing ; \boldsymbol{x}_k, y)$ in Equation~(\ref{eq:information_attribution1}) are constant given $\boldsymbol{x}_k$ and $y$, thus being omitted in Equation~(\ref{eq:information_attribution2}).
Intuitively, the explanation of patch $z$ depends on the gap of logit values, where  $\mathcal{N}(z)$ and background patches $\mathcal{Z}(\boldsymbol{x}_k) \setminus \mathcal{N}(z)$ are taken as the input.

\subsection{Definition of Meta-attribution}
\label{sec:transferable_attr}

We introduce the concept of meta-attribution, formally defined in Definition~\ref{definition:transferable_attribution}. 
Note that Equation~(\ref{eq:information_attribution2}) relies on the downstream target model $f_{t}$, which is task-related.
The purpose of meta-attribution is to disentangle the task-specific aspect of the attribution from Equation~(\ref{eq:information_attribution2}). 
This disentanglement renders the meta-attribution to be task-independent, as a foundation for explaining various tasks.

\begin{definition}[\textbf{Meta-attribution}]
\label{definition:transferable_attribution}
\! Given a backbone encoder $G$, the meta-attribution for a patch $z \!\in\! \mathcal{Z}(\boldsymbol{x}_k)$,~$\boldsymbol{x}_k \!\in\! \mathcal{X}$, is represented by two tensors $\textbf{\textsl{g}}_{k,z}$ and $\textbf{\textrm{h}}_{k,z}$ as follows:
{
\begin{align}
\label{eq:transferable_attribution}
\textbf{\textsl{g}}_{k,z} &= {G}( \mathcal{N}({z}); \boldsymbol{x}_k ),
\nonumber
\\
\textbf{\textrm{h}}_{k,z} &= {G}( \mathcal{Z}(\boldsymbol{x}_k) \setminus \mathcal{N}({z}); \boldsymbol{x}_k ).
\end{align}}
\end{definition}

% , and propose a rule-based method for explaining downstream tasks in Definition~\ref{definition:attribution_transfer}.
% replacing the $f_t$ from

Following Definition~\ref{definition:transferable_attribution}, the meta-attribution is defined as the input tensors of the logarithmic functions in Equation~(\ref{eq:information_attribution2}), where the task-specific model $f_t$ is replaced into the backbone encoder $G$ to disentangle the meta-attribution with specific tasks.
This disentanglement enables the meta-attribution to transfer across various downstream tasks.

% is derived using the backbone encoder $G$, facilitating its application across different tasks.
% which is consistent with our illustration in as in Section~\ref{sec:advantage}.
% Intuitively, the patch-level meta-attribution is defined as the perturbed output of generic encoder, where the input image is masked by the neighbors of target patch $z$, as given in Definition~\ref{definition:transferable_attribution}.
% Since each input image $\boldsymbol{x}_k$ is splitted into $\mathrm{P} \times \mathrm{P}$ patches, 
% Moreover, Definition~\ref{definition:transferable_attribution} is independent with the task-aligned classifier $H$, which enables the meta-attribution $\textbf{\textsl{g}}_{k}$ and $\textbf{\textrm{h}}_{k}$ to adapt to multiple downstream tasks.

\subsection{Transfer to Task-aligned Explanation}
\label{sec:attr_transfer}

% \begin{definition}[Task-specific Attribution]
% \label{definition:attribution_transfer}
% For a patch $z \in \mathcal{Z}(\boldsymbol{x}_k)$, $\boldsymbol{x}_k \in \mathcal{X}$, the attribution of task-specific inference $f_{t}(\boldsymbol{x}_k)$ is defined as follow:
% \begin{align}
% \phi_{k, {z}}
% &= \mathbb{E}_{{z}' \sim \mathcal{N}({z})} \big[ f_{t}( \mathbbm{1}_{\mathcal{N}({z}')}  \odot \boldsymbol{x}_k) - f_{t}( \mathbbm{1}_{\mathcal{Z}(\boldsymbol{x}_k) \setminus \mathcal{N}({z}')}  \odot \boldsymbol{x}_k) \big]
% % \\
% % &= \mathbb{E}_{{z}' \sim \mathcal{N}({z})} \big[ (G \circ H )( \mathbbm{1}_{\mathcal{N}({z}')}  \odot \boldsymbol{x}_k) - (G \circ H )(  \mathbbm{1}_{\mathcal{Z}(\boldsymbol{x}_k) \setminus \mathcal{N}({z}')}  \odot \boldsymbol{x}_k) \big], 
% % \nonumber
% % \\
% % &= \mathbb{E}_{{z}' \sim \mathcal{N}({z})} \big[ H(\textbf{\textsl{g}}_{k,z}) - H(\textbf{\textrm{h}}_{k,z}) \big], 
% \end{align}
% where we consider $\mathcal{N}({z}) = \{ z' ~|~ \mathrm{d}(z, z') \leq \mathrm{C} \xi \}$ for neighbor patches of ${z}$; and $\mathrm{d}(z, z')$ denotes the distance central pixel.
% \end{definition}
% To generate the explanation for downstream tasks from the meta-attribution,
To explain the downstream tasks, we propose a transfer rule in Definition~\ref{definition:attribution_transfer} to adapt the meta-attribution to explaining downstream tasks.
This rule-based transfer method can effectively address the CH2 described in Section~\ref{sec:intro}, without the need for additional training on task-specific data. 
% The theoretical analysis of Definition~\ref{definition:attribution_transfer} is given in Theorem~\ref{theorem:patch-attribution}.

% 
\begin{definition}[\textbf{Attribution Transfer}]
\label{definition:attribution_transfer}
If the task-specific function is given by $f_{t} = H_t \circ G$, then the explanation of $f_{t}(\boldsymbol{x}_k)$ on class $y$ is generated by
{
\begin{align}
\label{eq:attribution_trasfer}
% \phi_{k, {z}} = \frac{1}{2\xi^2} \sum_{{z}' \in \mathcal{N}({z})} \sum_{i \in \{ 1,-1 \} } (G \circ H )\Big( \big[ (i+1) \mathbbm{1}_{\mathcal{N}({z})} + (i-1) \mathbbm{1}_{\mathcal{Z}(\boldsymbol{x}_k) \setminus \mathcal{N}({z}')} \big]  \odot \boldsymbol{x}_k \Big) 
% \phi_{k, {z}} = \mathbb{E}_{{z}' \sim \mathcal{N}({z})} \big[ \log H(\textbf{\textsl{g}}_{k,z}; y) - \log H(\textbf{\textrm{h}}_{k,z}; y) \big],
\phi_{k, y, {z}} = \log H_t(\textbf{\textsl{g}}_{k,z}; y) - \log H_t(\textbf{\textrm{h}}_{k,z}; y),
\end{align}} 
\!\!\!\! where $\textbf{\textsl{g}}_{k,z}$ and $\textbf{\textrm{h}}_{k,z}$ are the meta-attribution given by Equation~(\ref{eq:transferable_attribution}); and $G$ and $H_t$ represent the backbone encoder and fine-tuned classifier on task $t$, respectively.
% Proof in Appendix~???.
\end{definition}

% for patch $z \in \mathcal{Z}(\boldsymbol{x}_k)$;
% According to Proposition~\ref{definition:attribution_transfer}, 

Following Definition~\ref{definition:attribution_transfer}, we can straightforwardly achieve the solution of $\phi_{k,y,z}$ to be consistent with Equation~(\ref{eq:information_attribution2})\footnote{We follow Definition~\ref{definition:attribution_transfer} to have $\phi_{k,y,z} = \log H_t(\textbf{\textsl{g}}_{k,z}; y) - \log H_t(\textbf{\textrm{h}}_{k,z}; y) = \log f_{t}(\mathcal{N}(z) ; \boldsymbol{x}_k, y) - \log f_{t}(\mathcal{Z}(\boldsymbol{x}_k) \setminus \mathcal{N}(z) ; \boldsymbol{x}_k, y)$ that is consistent with Equation~(\ref{eq:information_attribution2}).}.
This alignment to $\phi_{k,y,z}$ can effectively explain downstream task $t$ following the definition of conditional $\mathcal{V}$-information $I_{\mathcal{V}}(\mathcal{N}(z) \to Y_t ~|~ B)$, as described in Section~\ref{sec:v-information}.

\section{Learning Meta-attribution}

In this section, we introduce the details of \Algnameunderline{}~(\Algnameabbr{}).
Specifically, \Algnameabbr{} pre-trains a DNN-based transferable explainer $E(\bullet ~|~ \theta)$ on large-scale image dataset to comprehensively learn the knowledge of meta-attribution. 
After the pre-training, \Algnameabbr{} can transfer to various downstream tasks for end-to-end generating task-aligned explanation.
To assess its performance, we theoretically analyze
the explanation error in Theorem~\ref{theorem:head-attribution}.

% , which is significantly more efficient than individual explaining the tasks
% defined in Definition~\ref{definition:transferable_attribution}
% In this way, the downstream explanation can be generated though a pipeline of generating meta-attribution and rule-based adaptation to downstream tasks, which is significantly more efficient than individual explaining different tasks.
% Moreover, the pre-trained transferable explainer can adapt to a
% wide range of downstream tasks where the generic encoder has been deployed,
% without additional training processes. This adaptability enhances practicality
% for real-world applications
% Following this direction, \Algnameabbr{} employs a DNN-based transferable explainer $E(\bullet ~|~ \theta)$ to generate the perturbed embedding.

% In this way, the transferable explainer $E(\bullet ~|~ \theta)$ can be deployed to various downstream tasks for explaining the target models \emph{without additional training on the downstream data}.

% Notably, the training process of $E(\bullet ~|~ \theta)$ is decoupled from the task-specific classifier $H(\bullet)$. 
% 

\subsection{Explainer Pre-training}

% Moreover, let $\textbf{\textsl{g}}_{k} = [\textbf{\textsl{g}}_{k,z} ~|~ z \in \mathcal{Z}(\boldsymbol{x}_k)]$ and $\textbf{\textrm{h}}_{k} = [\textbf{\textrm{h}}_{k,z} ~|~ z \in \mathcal{Z}(\boldsymbol{x}_k)]$ denote the meta-attribution for the instance $\boldsymbol{x}_k$, which is a collection of patch-level meta-attribution given in Definition~\ref{definition:transferable_attribution}.
% Representing the meta-attribution for instance $\boldsymbol{x}_k$, $\textbf{\textsl{g}}_{k}$ and $\textbf{\textrm{h}}_{k}$ are 3D tensors in the shape of $\mathrm{P} \times \mathrm{P} \times \mathrm{D}$, where $\mathrm{D}$ denotes the output dimension of the backbone encoder $G$, as defined in Section~\ref{sec:notation}. 

% : \mathcal{X} \to \mathbb{R}^{2 \times \mathrm{P} \times \mathrm{P} \times \mathrm{D}}
 % \in \mathbb{R}^{\mathrm{P} \times \mathrm{P} \times \mathrm{D}}

\Algnameabbr{} employs a DNN-based explainer $E(\bullet ~|~ \theta)$ to generate the meta-attribution tensors.
Specifically, the explainer $E(\bullet \mid \theta)$ produces two tensors for the meta-attribution, denoted as $[\hat{\textbf{\textsl{g}}}_{k}, \hat{\textbf{h}}_{k}] = E(\boldsymbol{x}_k \mid \theta)$, where $\hat{\textbf{\textsl{g}}}_{k} \!=\! [\hat{\textbf{\textsl{g}}}_{k, z} \!\in\! \mathbb{R}^{\mathrm{D}} | z \!\in\! \mathcal{Z}(\boldsymbol{x}_k)]$ and $\hat{\textbf{h}}_{k} \!=\! [\hat{\textbf{h}}_{k, z} \!\in\! \mathbb{R}^{\mathrm{D}} | z \!\in\! \mathcal{Z}(\boldsymbol{x}_k)]$ represent collections of meta-attribution for an instance $\boldsymbol{x}_k$.
Each pair of elements $( \hat{\textbf{\textsl{g}}}_{k, z}, \hat{\textbf{h}}_{k, z} )$ contribute to predicting the meta-attribution $( \textbf{\textsl{g}}_{k, z}, \textbf{h}_{k, z})$ defined in Definition~\ref{definition:transferable_attribution}.
Pursuant to this objective, \Algnameabbr{} updates the parameters of explainer $E(\bullet \mid \theta)$ to minimize the following loss function:
{
\setlength\abovedisplayskip{2mm}
\setlength\belowdisplayskip{2mm}
\begin{align}
% \vspace{2mm}
\label{eq:pretrain_loss_function}
% \mathcal{L}_{\theta}(\boldsymbol{x}_k) = \mathbb{E}_{{z} \sim \mathcal{Z}(\boldsymbol{x}_k)} \Big[ \big|\big| \hat{\textbf{\textsl{g}}}_{k, {z}} - {G}( \mathbbm{1}_{\mathcal{N}({z})}  \odot \boldsymbol{x}_k ) \big|\big|_2^2 
% + \big|\big| \hat{\textbf{h}}_{k, {z}} - {G}( \mathbbm{1}_{\mathcal{Z}(\boldsymbol{x}_k) \setminus \mathcal{N}({z})}  \odot \boldsymbol{x}_k ) \big|\big|_2^2 \Big]
\!\! \mathcal{L}_{\theta}(\boldsymbol{x}_k) \!=\! \mathbb{E}_{{z} \sim \mathcal{Z}(\boldsymbol{x}_k)} \big[ || \hat{\textbf{\textsl{g}}}_{k, {z}} \!-\! \textbf{\textsl{g}}_{k, {z}} \big|\big|_2^2 
\!+\! \big|\big| \hat{\textbf{h}}_{k, {z}} \!-\! \textbf{h}_{k, {z}} ||_2^2 \big], \!\!
\end{align}}
\!\! where $\textbf{\textsl{g}}_{k, {z}}$ and $\textbf{h}_{k, {z}}$ are defined in Definition~\ref{definition:transferable_attribution}.

Algorithm~\ref{alg:pretrain} summarizes one epoch of pre-training the transferable explainer $E(\bullet \mid \theta)$.
Specifically, \Algnameabbr{} first samples a mini-batch of image patches~(lines 2); then follows Definition~\ref{definition:transferable_attribution} to generate the meta-attribution~(lines 3); finally updates the parameters of $E(\bullet \mid \theta)$ to minimize the loss function given by Equation~(\ref{eq:pretrain_loss_function})~(line 4).
The iteration ends with the convergence of $E(\bullet \mid \theta)$.
Notably, the pre-training of $E(\bullet ~|~ \theta)$ is guided by the meta-attribution instead of specific tasks.
This empowers the trained $E(\bullet ~|~ \theta)$ to remain impartial towards specific tasks, providing the flexibility for seamless adaptation across various downstream tasks.

% output of backbone encoder $G$ 
% such that it has the capacity of being transferred to multiple downstream tasks after the pre-training.
% According to Equation~(\ref{eq:pretrain_loss_function}), 

% transferable explainer
% \begin{wrapfigure}{R}{0.99\textwidth}
% \vspace{-15mm}
\begin{flushright}
% \begin{minipage}{0.99\textwidth}

\begin{algorithm}
\caption{One epoch of \Algnameabbr{} pre-training}
\label{alg:pretrain}
\textbf{Input:} Pre-training dataset $\mathcal{D}$. \\
\textbf{Output:} Transferable explainer $E(\bullet \mid \theta^*)$.\\
\vspace{-3mm}
\begin{algorithmic}[1]

\FOR{$\boldsymbol{x}_k \sim \mathcal{D}$}

% \STATE Sample an image $\boldsymbol{x}_k \sim \mathcal{D}$.
% \STATE Sample images $\boldsymbol{x}_k \sim \mathcal{X}$.

\STATE Sample patches ${z} \sim \mathcal{Z}(\boldsymbol{x}_k)$.

\STATE Generate $\textbf{\textsl{g}}_{k,z}$ and $\textbf{h}_{k, {z}}$ following Definition~\ref{definition:transferable_attribution}.
% \begin{equation} 
% \pi(\boldsymbol{x}_k, {z}) = {G}\big( \mathbbm{1}_{\mathcal{N}({z})}  \odot \boldsymbol{x}_k ~\big|~ \theta_{G} \big) \otimes \mathbbm{1}_{\{ {z} \}}
% \nonumber
% \end{equation}

% \STATE Predict the generic attribution 
% \begin{equation} 
% \hat{\pi}(\boldsymbol{x}_k, {z}) = E(\boldsymbol{x}_k \mid \theta) \odot ( \mathbbm{1}_{\mathrm{D}} \otimes \mathbbm{1}_{\{ {z} \}} )
% \nonumber
% \end{equation}

\STATE Update $E(\bullet ~|~ \theta)$ to minimize Equation~(\ref{eq:pretrain_loss_function}).
% \begin{equation} 
%     \theta^*_{\textsl{E}} = \arg\min_{\theta \in \Theta} \big|\big| \hat{\pi}(\boldsymbol{x}_k, {z}) - \pi(\boldsymbol{x}_k, {z}) \big|\big|_2^2.
%     \nonumber
% \end{equation}
% Minimize the loss function given by Equation~(\ref{eq:pretrain_loss_function}).

\ENDFOR
\end{algorithmic}
\end{algorithm}
% \end{minipage}

\end{flushright}
% \vspace{-6mm}
% \end{wrapfigure}

\subsection{Generating Task-aligned Explanation}

\Algnameabbr{} follows Definition~\ref{definition:attribution_transfer} to generate the task-aligned explanation.
Specifically, to explain the inference process $(H_t \circ G)(\boldsymbol{x}_{k})$ in task $t$, \Algnameabbr{} first adopts the pre-trained transferable explainer to generate the meta-attribution $[\hat{\textbf{\textsl{g}}}_{k}, \hat{\textbf{h}}_{k}] = E(\boldsymbol{x}_k \mid \theta)$; then takes the value of $\hat{\textbf{\textsl{g}}}_{k, z}$ and $\hat{\textbf{h}}_{k, z}$ into Equation~(\ref{eq:attribution_trasfer}) to estimate the importance of each patch $z \in \mathcal{Z}(\boldsymbol{x}_k)$ to the inference result on class $y$.
To summarize, \Algnameabbr{} generates the attribution of a patch $z \in \mathcal{Z}(\boldsymbol{x}_k)$ by
{\setlength\abovedisplayskip{2mm}
\setlength\belowdisplayskip{2mm}\begin{align}
\label{eq:estimate_generic_attribution}
\hat{\phi}_{k, y, {z}}
= \log H_t( \hat{\textbf{\textsl{g}}}_{k, z}; y ) - \log H_t(  \hat{\textbf{h}}_{k, z}; y ).
\end{align}}
\!\! Let $\hat{\boldsymbol{\phi}}_{k,y} = [\hat{\phi}_{k,y,z} ~|~ z \!\in\! \mathcal{Z}(\boldsymbol{x}_k)]$ denote the $\mathrm{P} \!\times\! \mathrm{P}$ explanation heatmap for the image $\boldsymbol{x}_k$, indicating the importance of all patches in $\boldsymbol{x}_k$ to class $y$.
\Algnameabbr{} can efficiently generate the entire heatmap $\hat{\boldsymbol{\phi}}_{k,y}$ for the image $\boldsymbol{x}_k$ through a single feed forward pass
: $\hat{\boldsymbol{\phi}}_{k,y} = \log H_t( \hat{\textbf{\textsl{g}}}_{k}; y ) - \log H_t( \hat{\textbf{h}}_{k}; y )$, where $\hat{\textbf{\textsl{g}}}_{k}$ and $\hat{\textbf{h}}_{k}$ are generated by $[\hat{\textbf{\textsl{g}}}_{k}, \hat{\textbf{h}}_{k}] = E(\boldsymbol{x}_k ~|~ \theta)$.

% , where each element $\hat{\phi}_{k,y,z}$ is given by Equation~(\ref{eq:estimate_generic_attribution})
% {\setlength\abovedisplayskip{2mm}
% \setlength\belowdisplayskip{2mm}\begin{align}
% \label{eq:heatmap}
% \hat{\boldsymbol{\phi}}_{k,y} = \log H_t( \hat{\textbf{\textsl{g}}}_{k}; y ) - \log H_t( \hat{\textbf{h}}_{k}; y ).
% \end{align}}

In particular, $H_t( \bullet; y )$ in Equation~(\ref{eq:estimate_generic_attribution}) encodes the knowledge of downstream task $t$.
This knowledge significantly enables the explanation to align with the task $t$ \emph{without the need for additional training on the task-specific data}.

\subsection{Theoretical Analysis}

The theoretical analysis focuses on understanding the behavior of estimation error $|\hat{\phi}_{k, y, {z}} - \phi_{k, y, {z}}|$ during the \Algnameabbr{} pre-training, where $\phi_{k, y, {z}}$ takes the $\mathcal{V}$-Information-aligned explanation defined in Section~\ref{sec:v-information}.
Specifically, we examine the following two distinct cases to understand how the reduction in the pre-training loss function $\mathcal{L}_{\theta}(\boldsymbol{x}_k)$ diminishes the estimation error~{\small $|\hat{\phi}_{k, y, {z}} \!-\! \phi_{k, y, {z}}|$}.

% As the pre-training loss function  $\mathcal{L}_{\theta}(\boldsymbol{x}_k)$ approaches zero, Equation~(\ref{eq:pretrain_loss_function}) indicates $\hat{\textbf{\textsl{g}}}_{k,z}$ and $\hat{\textbf{h}}_{k,z}$ gradually converge~towards $\textbf{\textsl{g}}_{k,z}$ and $\textbf{h}_{k,z}$, respectively.

% 
\paragraph{Ideal Case.}
We ideally consider $\mathcal{L}_{\theta}(\boldsymbol{x}_k) \!\!\to\!\! 0$ in this case. 
According to Equation~(\ref{eq:pretrain_loss_function}), we have that $\hat{\textbf{\textsl{g}}}_{k,z} \!\!\to\!\! \textbf{\textsl{g}}_{k,z}$ and $\hat{\textbf{h}}_{k,z} \!\!\to\!\! \textbf{h}_{k,z}$. 
Then, the relations {\tiny $\frac{{H}_t(\hat{\textbf{\textsl{g}}}_{k,z}; y)}{{H}_t(\textbf{\textsl{g}}_{k,z}; y)} \!\!\to\!\! 1$} and {\tiny $\frac{{H}_t(\textbf{h}_{k,z}; y)}{{H}_t(\hat{\textbf{h}}_{k,z}; y)} \!\!\to\!\! 1$} are established.
In this context, we have $| \hat{\phi}_{k, y, {z}} \!-\! \phi_{k, y, {z}} | \!\to\! 0$ according to Equations~(\ref{eq:attribution_trasfer}) and (\ref{eq:estimate_generic_attribution}).
This indicates $\hat{\phi}_{k, y, {z}}$ exactly converges to $\phi_{k, y, {z}}$ in the ideal scenario.

\paragraph{Practical Case.}
Without loss of generality, we consider $\mathcal{L}_{\theta}(\boldsymbol{x}_k)$ is not reduced to zero in this case.
Specifically, Equation~(\ref{eq:pretrain_loss_function}) indicates the reduction of $\mathcal{L}_{\theta}(\boldsymbol{x}_k)$ leads to {\small $\hat{\textbf{\textsl{g}}}_{k,z}$} and {\small $\hat{\textbf{h}}_{k,z}$} gradually approach {\small $\textbf{\textsl{g}}_{k,z}$} and {\small $\textbf{h}_{k,z}$}, respectively.
As a result, the values of {\tiny $\frac{{H}_t(\hat{\textbf{\textsl{g}}}_{k,z}; y)}{{H}_t(\textbf{\textsl{g}}_{k,z}; y)}$} and {\tiny $\frac{{H}_t(\textbf{h}_{k,z}; y)}{{H}_t(\hat{\textbf{h}}_{k,z}; y)}$} gradually converge to~a narrower range around 1.
We formulate this trend by~assuming their values to be bounded within a range of {\tiny $1\!-\!\epsilon \!\leq\! \frac{{H}_t(\hat{\textbf{\textsl{g}}}_{k,z}; y)}{{H}_t(\textbf{\textsl{g}}_{k,z}; y)}, \! \frac{{H}_t(\textbf{h}_{k,z}; y)}{{H}_t(\hat{\textbf{h}}_{k,z}; y)} \!\leq\! 1\!+\!\epsilon$}, where $0 \leq \epsilon \ll 1$.
Under these assumptions, we establish the upper bound of $|\hat{\phi}_{k, y, {z}} \!-\! \phi_{k, y, {z}}|$ in Theorem~\ref{theorem:head-attribution}, with a detailed proof in Appendix~\ref{appendix:theorem-proof}. 
This allows us to understand the behavior of estimation error in practical cases where $\mathcal{L}_{\theta}(\boldsymbol{x}_k)$ is not reduced to zero.

\begin{theorem}[Explanation Error Bound]
\label{theorem:head-attribution}
Given the classifier ${H}_t(\bullet; \bullet)$ of the downstream task, if the output of classifier ${H}_t(\hat{\textbf{\textsl{g}}}_{k,z}; y)$ and ${H}_t(\hat{\textbf{h}}_{k,z}; y)$ fall within the range of $1-\epsilon \leq \frac{{H}_t(\hat{\textbf{\textsl{g}}}_{k,z}; y)}{{H}_t(\textbf{\textsl{g}}_{k,z}; y)}, \frac{{H}_t(\textbf{h}_{k,z}; y)}{{H}_t(\hat{\textbf{h}}_{k,z}; y)} \leq 1+\epsilon$, then, the upper bound of explanation error is given by 
{\setlength\abovedisplayskip{2mm}
\setlength\belowdisplayskip{2mm}\begin{equation}
\label{eq:estimation_error_bound}
\mathbb{E}_{\boldsymbol{x}_k \sim \mathcal{D}_{t}, y \sim \mathcal{Y}_t, {z} \sim \mathcal{Z}(\boldsymbol{x}_k)} | \hat{\phi}_{k, y, {z}} - \phi_{k, y, {z}} | \leq \frac{2 \epsilon}{1-\epsilon},
\end{equation}}
\!\! where $\hat{\phi}_{k, y, {z}}$ and $\phi_{k, y, {z}}$ are given by Equation~(\ref{eq:estimate_generic_attribution}) and~(\ref{eq:attribution_trasfer}), respectively; and $\mathcal{D}_{t}$ denotes the downstream dataset.
\end{theorem}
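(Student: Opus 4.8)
The plan is to reduce Theorem~\ref{theorem:head-attribution} to an elementary one-variable estimate on the logarithm of the two ratios appearing in the hypothesis. Set $r_1 = \frac{{H}_t(\hat{\textbf{\textsl{g}}}_{k,z}; y)}{{H}_t(\textbf{\textsl{g}}_{k,z}; y)}$ and $r_2 = \frac{{H}_t(\textbf{h}_{k,z}; y)}{{H}_t(\hat{\textbf{h}}_{k,z}; y)}$, both positive since ${H}_t(\bullet; y)$ is a probability-type output. First I would subtract Equation~(\ref{eq:attribution_trasfer}) from Equation~(\ref{eq:estimate_generic_attribution}) and regroup the four logarithms, turning $\log {H}_t(\hat{\textbf{h}}_{k,z}; y) - \log {H}_t(\textbf{h}_{k,z}; y)$ into $-\log r_2$, so that
\[
\hat{\phi}_{k, y, {z}} - \phi_{k, y, {z}} = \log r_1 + \log r_2 .
\]
The triangle inequality then gives $|\hat{\phi}_{k, y, {z}} - \phi_{k, y, {z}}| \le |\log r_1| + |\log r_2|$, with $r_1, r_2 \in [1-\epsilon, 1+\epsilon]$ by hypothesis.

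Next I would bound $|\log r|$ uniformly over $r \in [1-\epsilon, 1+\epsilon]$. Monotonicity of $\log$ gives $\log(1-\epsilon) \le \log r \le \log(1+\epsilon)$, hence $|\log r| \le \max\{-\log(1-\epsilon),\ \log(1+\epsilon)\}$. The key elementary step is $-\log(1-\epsilon) = \log\!\big(1 + \tfrac{\epsilon}{1-\epsilon}\big) \le \tfrac{\epsilon}{1-\epsilon}$ via $\log(1+x) \le x$, while $\log(1+\epsilon) \le \epsilon \le \tfrac{\epsilon}{1-\epsilon}$ since $1-\epsilon \le 1$; thus $|\log r| \le \tfrac{\epsilon}{1-\epsilon}$. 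Applying this to $r_1$ and $r_2$ and summing yields the pointwise bound $|\hat{\phi}_{k, y, {z}} - \phi_{k, y, {z}}| \le \tfrac{2\epsilon}{1-\epsilon}$, which holds for every fixed $\boldsymbol{x}_k$, $y$, and ${z}$.

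Finally, since the right-hand side is a constant independent of $\boldsymbol{x}_k$, $y$, ${z}$, monotonicity of the expectation immediately gives $\mathbb{E}_{\boldsymbol{x}_k \sim \mathcal{D}_t,\, y \sim \mathcal{Y}_t,\, {z} \sim \mathcal{Z}(\boldsymbol{x}_k)} |\hat{\phi}_{k, y, {z}} - \phi_{k, y, {z}}| \le \tfrac{2\epsilon}{1-\epsilon}$, which is the claim. I do not expect a substantive obstacle: the argument is essentially bookkeeping plus one convexity-type inequality, and the only delicate points are (i) orienting the two ratios to match the hypothesis after the regrouping, and (ii) noting that the $\tfrac{1}{1-\epsilon}$ factor in the stated bound comes from $\log\tfrac{1}{1-\epsilon} \le \tfrac{\epsilon}{1-\epsilon}$ rather than the cruder $|\log r| \le \epsilon$, which only controls the $r \ge 1$ branch.
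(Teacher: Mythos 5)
Your proposal is correct and follows essentially the same route as the paper: the identical decomposition of $\hat{\phi}_{k,y,z} - \phi_{k,y,z}$ into the sum of the two log-ratios, and the same underlying elementary inequality $\log(1+x) \le x$. The only organizational difference is that you apply the triangle inequality and bound each $|\log r_i|$ uniformly by $\epsilon/(1-\epsilon)$, whereas the paper bounds the signed difference from above (by $2\epsilon$) and from below (by $-2\epsilon/(1-\epsilon)$) separately and then takes the maximum of the two magnitudes; both arguments yield the identical final bound.
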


\paragraph{Intuition of Theorem~\ref{theorem:head-attribution}.}
The value of $\epsilon$ reduces as~the~pre-training loss function  $\mathcal{L}_{\theta}(\boldsymbol{x}_k)$ decreases.
This reduction in $\epsilon$ explicitly lowers the estimation error bound $\frac{2 \epsilon}{1-\epsilon}$ aligned with the $\mathcal{V}$-Information-aligned explanation $\phi_{k, y, {z}}$ on downstream tasks. 
This underscores the \Algnameabbr{} pre-training can significantly enhance the explanations for downstream tasks.

\section{Experiment Results}

In this section, we conduct experiments to evaluate \Algnameabbr{} by answering the following research questions:
\textbf{RQ1:} How does \Algnameabbr{} perform compared with state-of-the-art baseline methods in terms of the fidelity?
\textbf{RQ2:} How does \Algnameabbr{} perform in explaining fully fine-tuned target model on down-stream datasets?
\textbf{RQ3:} How is the transferability of \Algnameabbr{} across different downstream datasets?
\textbf{RQ4:} Do both pre-training and attribution transfer in \Algnameabbr{} contribute to explaining downstream tasks?

% terms $\textbf{\textsl{g}}_k$ and $\textbf{h}_k$ in Definition~\ref{definition:transferable_attribution} both contribute to explaining down-stream tasks?

 % compared with existing amortized explanation methods

\begin{figure*}
\centering
\subfigcapskip=-0mm
\subfigure[\small Imagenette]{
\includegraphics[width=0.3\linewidth]{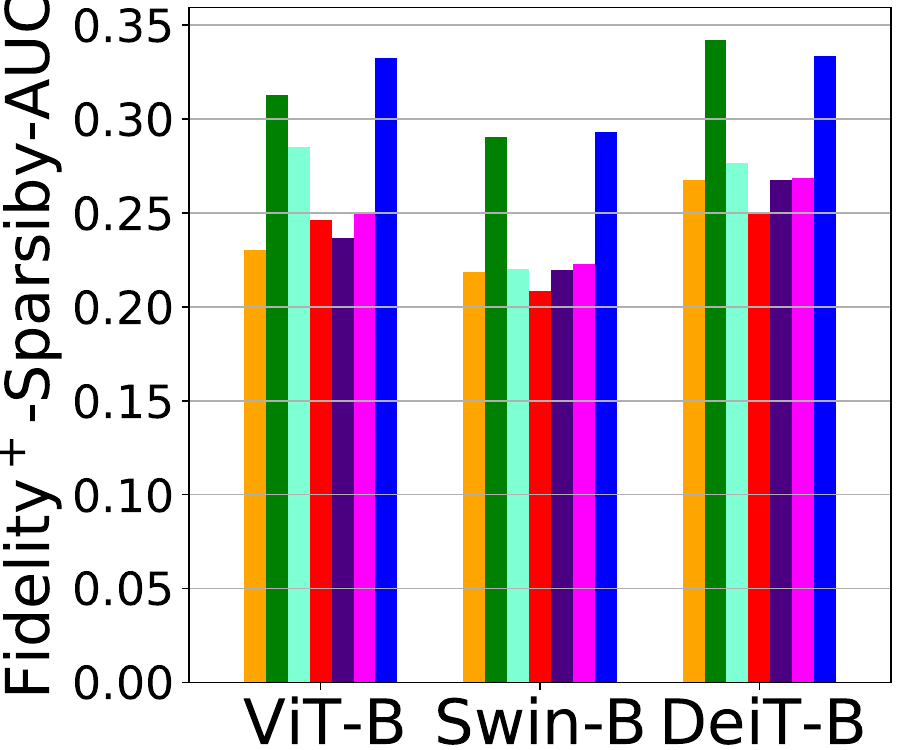}
}
\subfigure[\small Cats-vs-dogs]{
\includegraphics[width=0.3\linewidth]{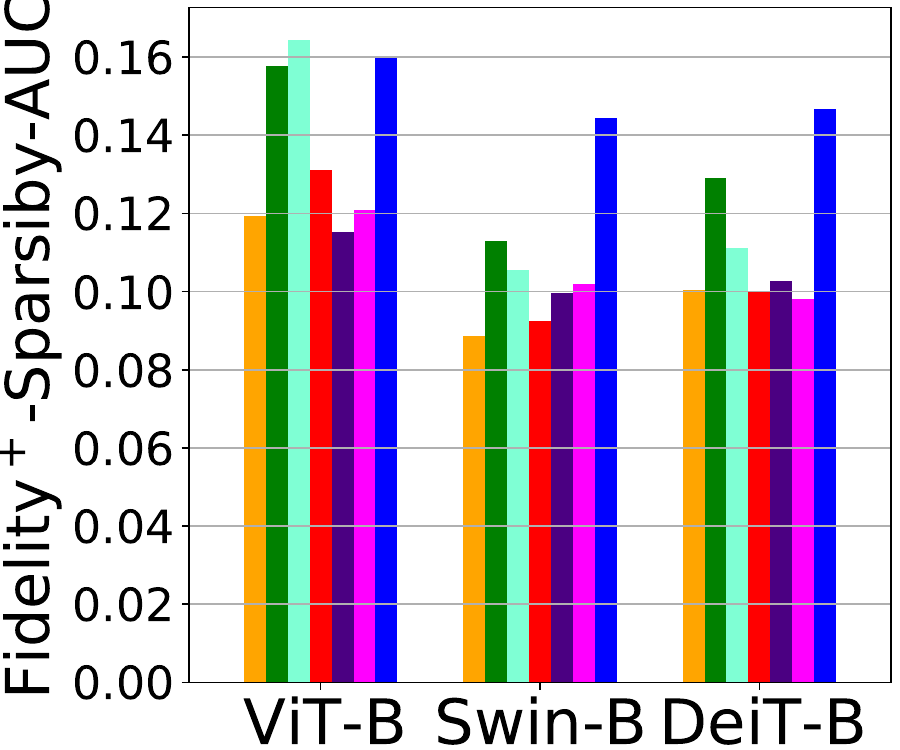}
}
\subfigure[\small CIFAR-10]{
\includegraphics[width=0.3\linewidth]{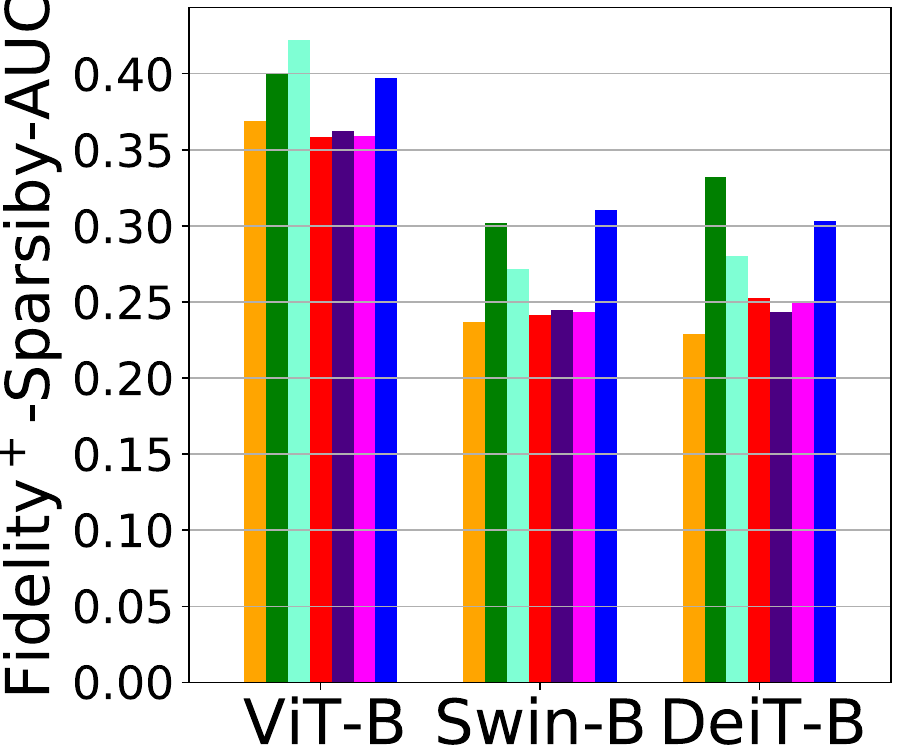}
}
\subfigure[\small Imagenette]{
\includegraphics[width=0.3\linewidth]{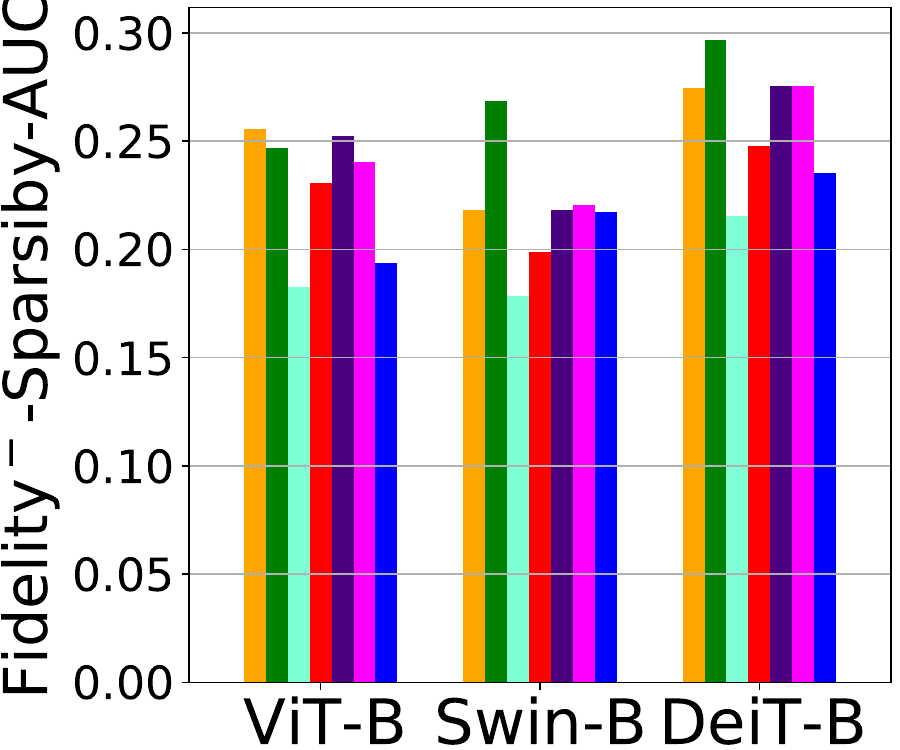}
}
\subfigure[\small Cats-vs-dogs]{
\includegraphics[width=0.3\linewidth]{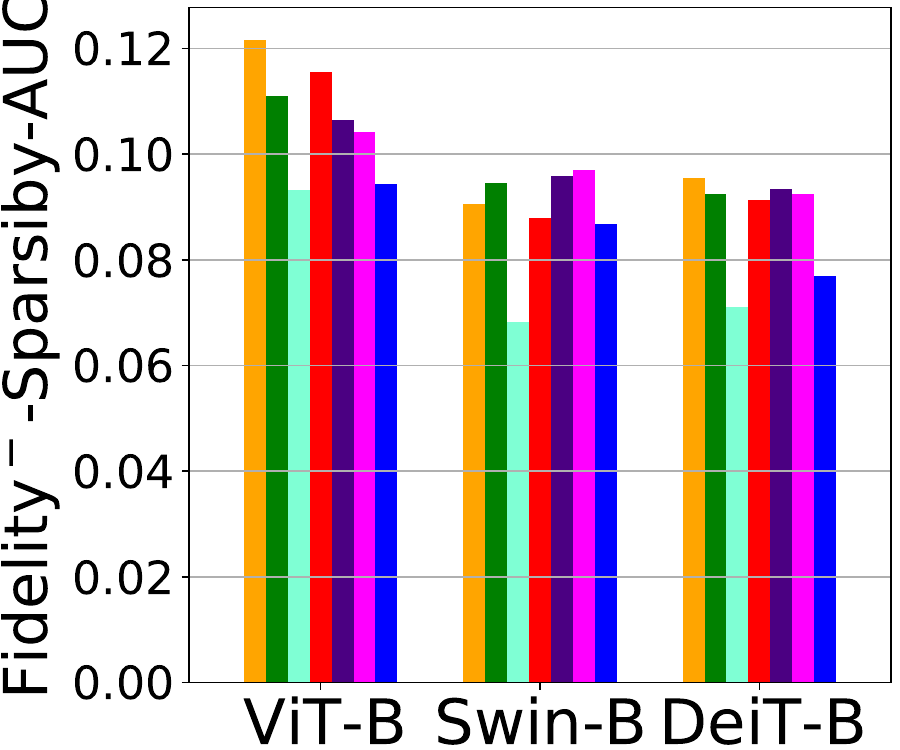}
}
\subfigure[\small CIFAR-10]{
\includegraphics[width=0.3\linewidth]{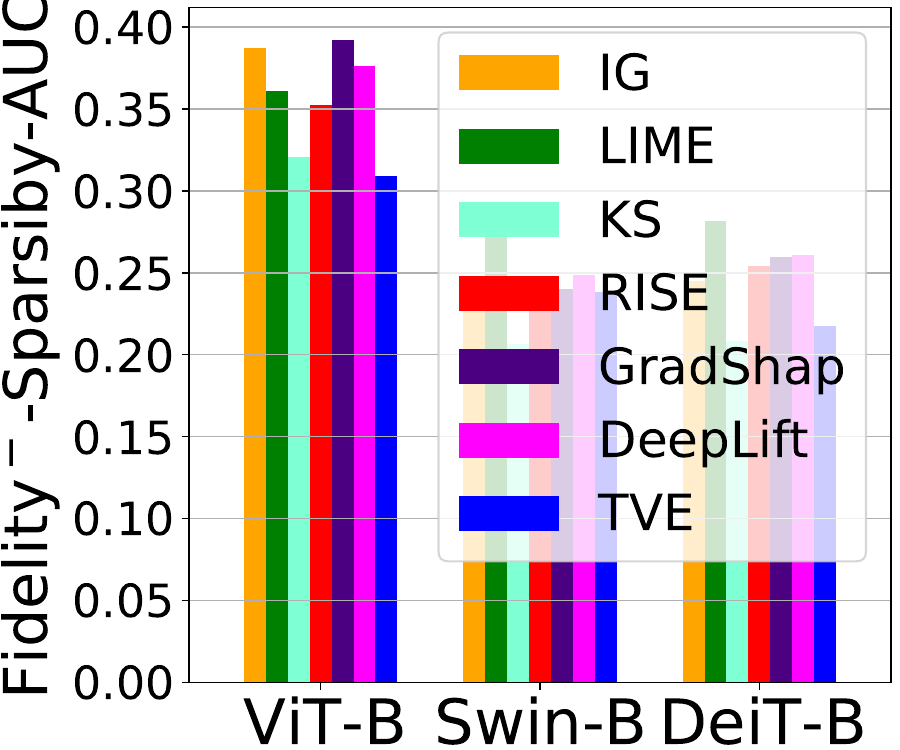}
}

\caption{\label{fig:fidelity_result} \small $\mathrm{Fidelity}^+$-Sparsity-AUC($\uparrow$) on the \texttt{Imagenette}~(a), \texttt{Cat-vs-dogs}~(b), and \texttt{CIFAR-10}~(c) datasets. 
$\mathrm{Fidelity}^-$-Sparsity-AUC($\downarrow$) on the \texttt{Imagenette}~(d), \texttt{Cat-vs-dogs}~(e), and \texttt{CIFAR-10}~(f) datasets.}

\end{figure*}

\begin{figure*}
\centering
\subfigcapskip=-1mm
\subfigure[]{
\includegraphics[width=0.3\linewidth]{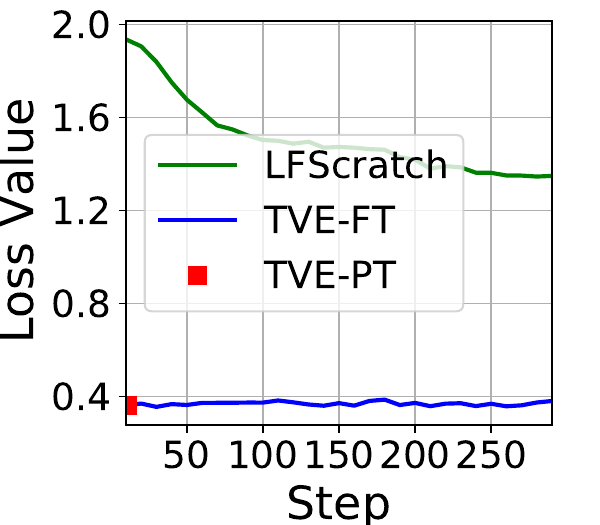}
}
\subfigure[]{
\includegraphics[width=0.28\linewidth]{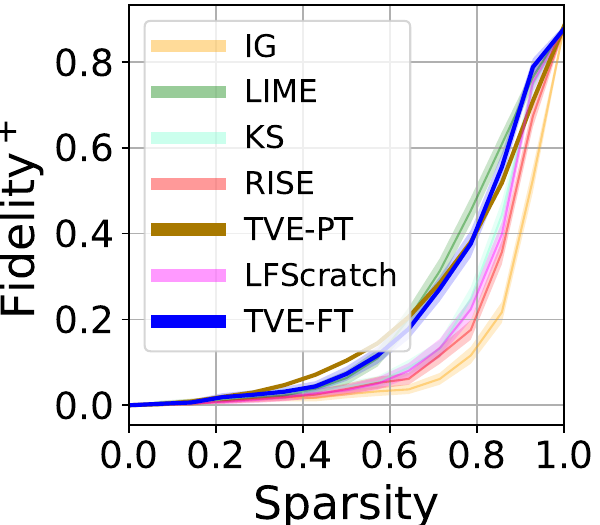}
}
\subfigure[]{
\includegraphics[width=0.28\linewidth]{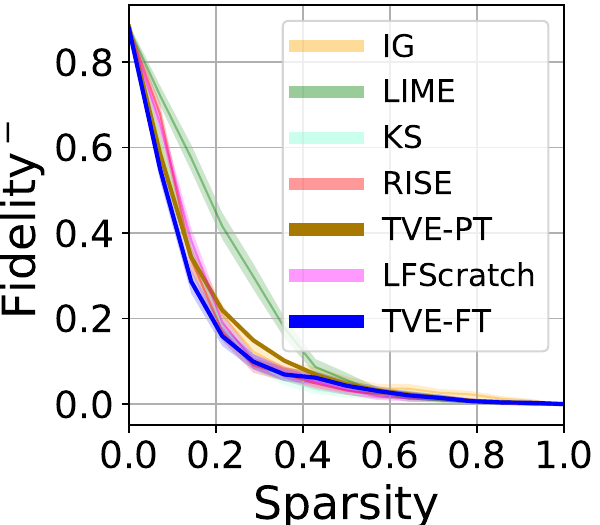}
}
\subfigure[]{
\includegraphics[width=0.3\linewidth]{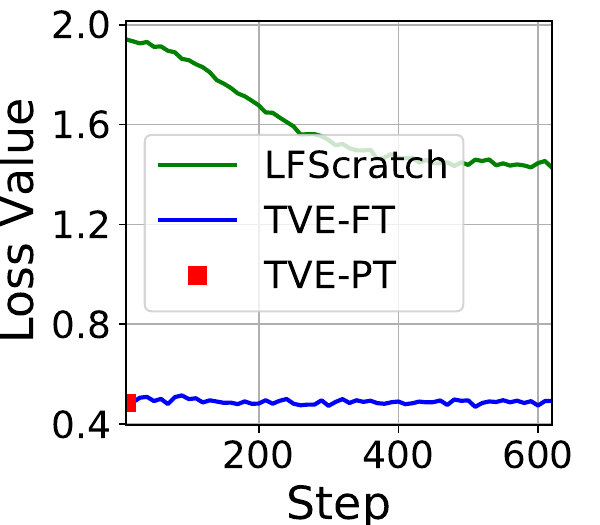}
}
\subfigure[]{
\includegraphics[width=0.28\linewidth]{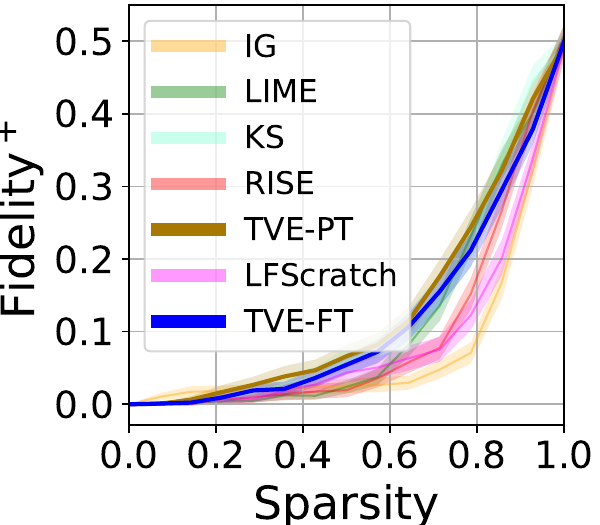}
}
\subfigure[]{
\includegraphics[width=0.28\linewidth]{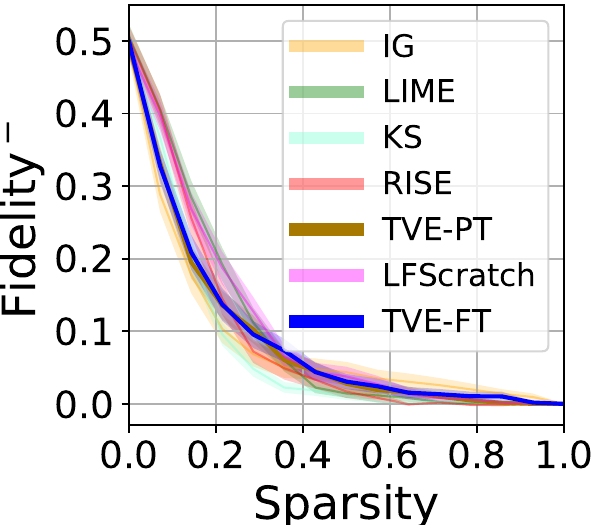}
}

\caption{\label{fig:fidelity_result_full_finetuing} \small \!\!\! Fine-tuning loss versus epoch~(a), {\footnotesize $\mathrm{Fidelity}^+ \! \uparrow$} versus Sparsity~(b), and {\footnotesize $\mathrm{Fidelity}^- \! \downarrow$} versus Sparsity~(c) on the Imagenette dataset. 
Fine-tuning loss versus epoch~(d), {\footnotesize $\mathrm{Fidelity}^+ \uparrow$} versus Sparsity~(e), and {\footnotesize $\mathrm{Fidelity}^- \downarrow$} versus Sparsity~(f) on the cats-vs-dogs dataset.}

\end{figure*}

\subsection{Experiment Setup}

We clarify the datasets, target models, hyper-parameter settings in this section.
More details about the baseline methods, evaluation metrics and implementation details are given in Appendixes~\ref{appendix:baseline}, ~\ref{appendix:eval_metric}, and \ref{appendix:implement-details}, respectively.

\paragraph{Datasets.} We consider the large-scale \texttt{ImageNet} dataset for \Algnameabbr{} pre-training; and the \texttt{Cats-vs-dogs}~\cite{asirra-a-captcha}, \texttt{CIFAR-10}~\cite{krizhevsky2009learning}, and \texttt{Imagenette}~\cite{Howard_Imagenette_2019} datasets for the downstream explaining tasks.
Further details about the datasets are given in Appendix~\ref{appendix:datasets}.
% \cite{deng2009imagenet} 

\paragraph{Target Models.} 
We comprehensively consider three architectures of vision transformers for downstream classification tasks, including the \texttt{ViT-Base}~\cite{dosovitskiy2020image}, \texttt{Swin-Base}~\cite{liu2021swin}, \texttt{Deit-Base}~\cite{touvron2021training} transformers.
We consider two settings of fine-tuning target models: \emph{classifier-tuning} and \emph{full-fine-tuning}.
More details about the target model are given in Appendix~\ref{appendix:target_model}.

\paragraph{Hyper-parameter Settings.}
The experiment follows the pipeline of \Algnameabbr{} pre-training, explanation generation and evaluation on multiple downstream datasets.
Specifically, \Algnameabbr{} adopts the Mask-AutoEncoder~\cite{he2022masked} as the backbone, followed by multiple Feed-Forward~(FFN) layers\footnote{A Mask-AutoEncoder consists of a ViT encoder followed by a ViT decoder; and an FFN layer consists of Linear layers, Layer-norm, and activation function, which are widely used in the Transformer structure. More details about the architecture are given in Appendix~\ref{appendix:implement-details}.} to generate the meta-attribution.
More details about the explainer architecture and hyper-parameters of pre-training \Algnameabbr{} are given in Appendix~\ref{appendix:implement-details}.
When deploying \Algnameabbr{} to explaining downstream tasks, the explanation aligns with the prediction class given by the target model.

\subsection{Evaluation of Fidelity~(RQ1)}
\label{sec:fidelity_eval}

% In this section, we evaluate \Algnameabbr{} in terms of the fidelity-sparsity curve.
% It requires 18 figures to show the $\mathrm{Fidelity}^+$-sparsity curve as well as the $\mathrm{Fidelity}^-$-sparsity curve Swin-Base, DeiT-Base, and ViT-Base on the Cats-vs-dogs, Imagenette, and CIFAR-10 datasets, which are given in Appendix~??? due to the limitation of main content.
% To save the space, we simplify the evaluation of fidelity-sparsity curves into its Area Under the Curve~(AUC), due to its consistency with overall fidelity values.
% Specifically, higher $\mathrm{Fidelity}^+$-sparsity-AUC($\uparrow$) indicates higher $\mathrm{Fidelity}^+$ in most cases of the sparsity, as a result of a more faithful explanation.
% Similarly, lower $\mathrm{Fidelity}^-$-sparsity-AUC($\downarrow$) also indicates a more faithful explanation.
% On the Cats-vs-dogs, Imagenette, and CIFAR-10 datasets, the $\mathrm{Fidelity}^+$-sparsity-AUC($\uparrow$) of the explanation are given in Figures~\ref{fig:fidelity_result}~(a)-(c), respectively; as well as $\mathrm{Fidelity}^-$-sparsity-AUC($\downarrow$) given in Figures~\ref{fig:fidelity_result}~(e)-(g), respectively.

In this section, we evaluate the fidelity of \Algnameabbr{} under the classifier-tuning setting. 
Due to the space constraints, we present 18 figures illustrating the $\mathrm{Fidelity}^+$-sparsity curve($\uparrow$) and the $\mathrm{Fidelity}^-$-sparsity curve($\downarrow$) for explaining the \texttt{ViT-Base}, \texttt{Swin-Base}, and \texttt{Deit-Base} models on the \texttt{Cats-vs-dogs}, \texttt{Imagenette}, and \texttt{CIFAR-10} datasets in Appendix~\ref{appendix:fidelity-sparsity-curve}.
To streamline our evaluation, we simplify the assessment of fidelity-sparsity curves by calculating its Area Under the Curve~(AUC) over the sparsity from zero to one, which aligns with the average fidelity value. Intuitively, a higher $\mathrm{Fidelity}^+$-sparsity-AUC($\uparrow$) indicates superior $\mathrm{Fidelity}^+$($\uparrow$) across most sparsity levels, reflecting a more faithful explanation. Similarly, a lower $\mathrm{Fidelity}^-$-sparsity-AUC($\downarrow$) signifies a more faithful explanation.
More details about the fidelity-sparsity-AUC are given in Appendix~\ref{appendix:auc-eval-metric}.
On the \texttt{Cats-vs-dogs}, \texttt{Imagenette}, and \texttt{CIFAR-10} datasets, we present the $\mathrm{Fidelity}^+$-sparsity-AUC($\uparrow$) for explanations in Figures~\ref{fig:fidelity_result}~(a)-(c), respectively, as well as the $\mathrm{Fidelity}^-$-sparsity-AUC($\downarrow$) in Figures~\ref{fig:fidelity_result}~(d)-(f), respectively.
We have the following observations:
\begin{itemize}[leftmargin=4mm, topsep=10pt]

    \item \emph{\Algnameabbr{} consistently exhibits promising performance in terms of both $\mathrm{Fidelity}^+$($\uparrow$) and $\mathrm{Fidelity}^-$($\downarrow$)}, outperforming the majority of baseline methods. 
    This underscores \Algnameabbr{} faithfully explains various downstream tasks within the scope of pre-training data distribution.
    
    % Among the baseline methods, KernelSHAP shows a satisfactory performance, which is consistent with the experimental observation in existing benchmark work of explainable ML~\cite{liu2021synthetic}.

    \item \emph{\Algnameabbr{} exhibits significant strengths in both $\mathrm{Fidelity}^+$($\uparrow$) and $\mathrm{Fidelity}^-$($\downarrow$)}, highlighting its effectiveness in identifying both important and non-important features.
    In contrast, the baseline methods fail to simultaneously achieve high $\mathrm{Fidelity}^+$ and low $\mathrm{Fidelity}^-$. 
    For example, consider \texttt{LIME}'s performance when explaining the \texttt{Deit-Base} model on the \texttt{CIFAR-10} dataset. 
    While \texttt{LIME} excels in $\mathrm{Fidelity}^+$, it falls short in $\mathrm{Fidelity}^-$.
    % suggesting its proficiency in identifying important features but a limitation in distinguishing non-important ones.

    % \item \textbf{Cross-model Comparison:} For different architectures of target models applied to common downstream task, although the fidelity scores of \Algnameabbr{} are not exactly same, 
    
\end{itemize}

\begin{table*}[t!]
% 
    % \footnotesize
    \caption{Explanation $\mathrm{Fidelity}^+$-Sparsity-AUC($\uparrow$) and $\mathrm{Fidelity}^-$-Sparsity-AUC($\downarrow$) for \texttt{Deit-Base}, \texttt{Swin-Base}, and \texttt{Deit-Base} target models on the \texttt{Cat-vs-dogs}, \texttt{Imagenette}, and \texttt{CIFAR-10} datasets.}
    \centering
    \resizebox{\textwidth}{!}{
    \renewcommand{\arraystretch}{0.1}
    \begin{tabular}{c|c|c|c|c|c|c|c}
    \toprule[0.5pt]
         & Datasets & \multicolumn{2}{c|}{Cats-vs-dogs} & \multicolumn{2}{c|}{Imagenette} & \multicolumn{2}{c}{CIFAR-10} \\
    \midrule[0.3pt]
    Target model & Method & $\mathrm{Fidelity}^+$($\uparrow$) & $\mathrm{Fidelity}^-$($\downarrow$) & $\mathrm{Fidelity}^+$($\uparrow$) & $\mathrm{Fidelity}^-$($\downarrow$) & $\mathrm{Fidelity}^+$($\uparrow$) & $\mathrm{Fidelity}^-$($\downarrow$) \\
    \midrule[0.3pt]
    \multirow{3}{*}{ViT-Base} & \texttt{ViTShapley} & 0.11$\pm$\tiny{0.09} & 0.13$\pm$\tiny{0.10} & 0.25$\pm$\tiny{0.13} & 0.25$\pm$\tiny{0.14} & 0.36$\pm$\tiny{0.17} & 0.36$\pm$\tiny{0.17} \\ 
    & \Algnameabbr{}-$H_\textsl{g}$ & 0.14$\pm$\tiny{0.11} & 0.10$\pm$\tiny{0.08} & 0.29$\pm$\tiny{0.14} & \textbf{0.18}$\pm$\tiny{0.10} & 0.39$\pm$\tiny{0.18} & 0.34$\pm$\tiny{0.17} \\
    & \Algnameabbr{} & \textbf{0.16}$\pm$\tiny{0.13} & \textbf{0.09}$\pm$\tiny{0.07} & \textbf{0.33}$\pm$\tiny{0.16} & 0.19$\pm$\tiny{0.12} & \textbf{0.40}$\pm$\tiny{0.18} & \textbf{0.31}$\pm$\tiny{0.16} \\
    \midrule[0.3pt]
    \multirow{3}{*}{Swin-Base} & \texttt{ViTShapley} & 0.09$\pm$\tiny{0.05} & 0.11$\pm$\tiny{0.07} & 0.24$\pm$\tiny{0.07} & 0.24$\pm$\tiny{0.09} & 0.25$\pm$\tiny{0.11} & 0.28$\pm$\tiny{0.14} \\ 
    & \Algnameabbr{}-$H_\textsl{g}$ & \textbf{0.14}$\pm$\tiny{0.09} & 0.10$\pm$\tiny{0.07} & \textbf{0.29}$\pm$\tiny{0.08} & 0.24$\pm$\tiny{0.07} & 0.26$\pm$\tiny{0.12} & 0.27$\pm$\tiny{0.13} \\
    & \Algnameabbr{} & \textbf{0.14}$\pm$\tiny{0.10} & \textbf{0.09}$\pm$\tiny{0.05} & \textbf{0.29}$\pm$\tiny{0.10} & \textbf{0.22}$\pm$\tiny{0.06} & \textbf{0.31}$\pm$\tiny{0.14} & \textbf{0.24}$\pm$\tiny{0.12} \\
    \midrule[0.3pt]
    \multirow{3}{*}{DeiT-Base} & \texttt{ViTShapley} & 0.12$\pm$\tiny{0.08} & 0.1$\pm$\tiny{0.07} & 0.22$\pm$\tiny{0.09} & 0.29$\pm$\tiny{0.11} & 0.28$\pm$\tiny{0.13} & 0.24$\pm$\tiny{0.13} \\ 
    & \Algnameabbr{}-$H_\textsl{g}$ & 0.13$\pm$\tiny{0.08} & 0.09$\pm$\tiny{0.06} & \textbf{0.33}$\pm$\tiny{0.10} & 0.25$\pm$\tiny{0.08} & \textbf{0.32}$\pm$\tiny{0.14} & 0.24$\pm$\tiny{0.13} \\
    & \Algnameabbr{} & \textbf{0.15}$\pm$\tiny{0.10} & \textbf{0.08}$\pm$\tiny{0.06} & \textbf{0.33}$\pm$\tiny{0.10} & \textbf{0.24}$\pm$\tiny{0.08} & 0.30$\pm$\tiny{0.13} & \textbf{0.22}$\pm$\tiny{0.12} \\
    \bottomrule[0.5pt]
    \end{tabular}
    }
    \label{tab:transferibility}
\end{table*}

\begin{figure*}
\centering
\begin{minipage}[t]{0.44\textwidth}
\!\!
\includegraphics[width=0.5\linewidth]{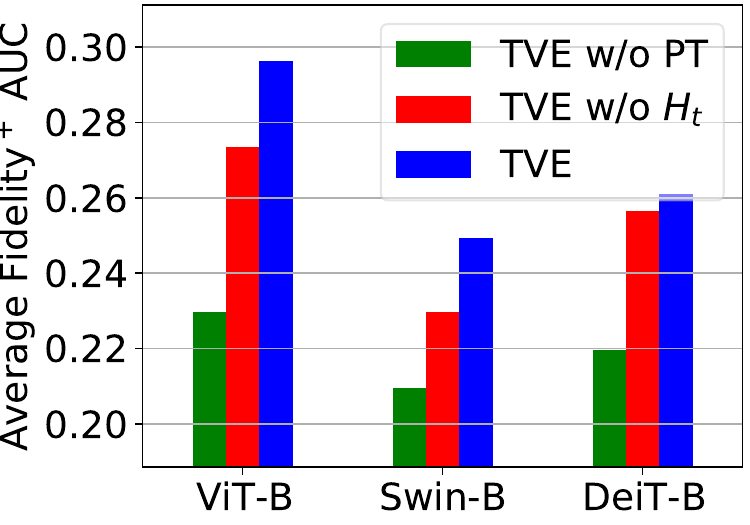}
\!\!
\includegraphics[width=0.5\linewidth]{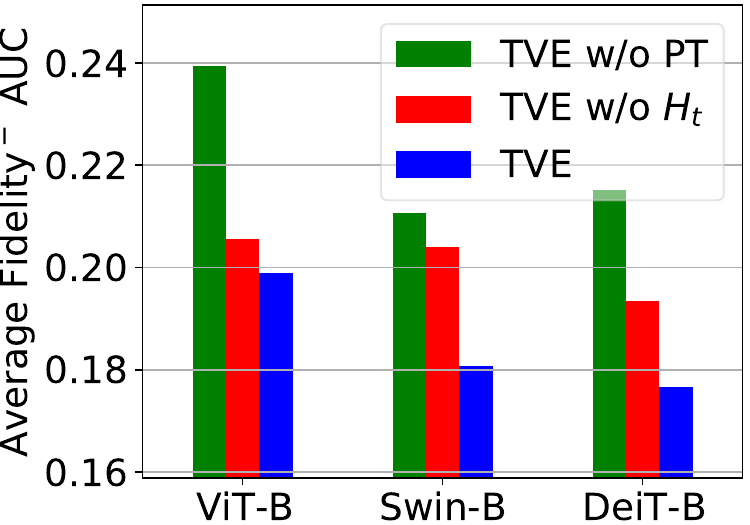}

\caption{\label{fig:ablation_result} Fidelity of ablation studies.}
\end{minipage}
% \quad
\begin{minipage}[t]{0.54\textwidth}
\centering
\includegraphics[width=1.0\linewidth]{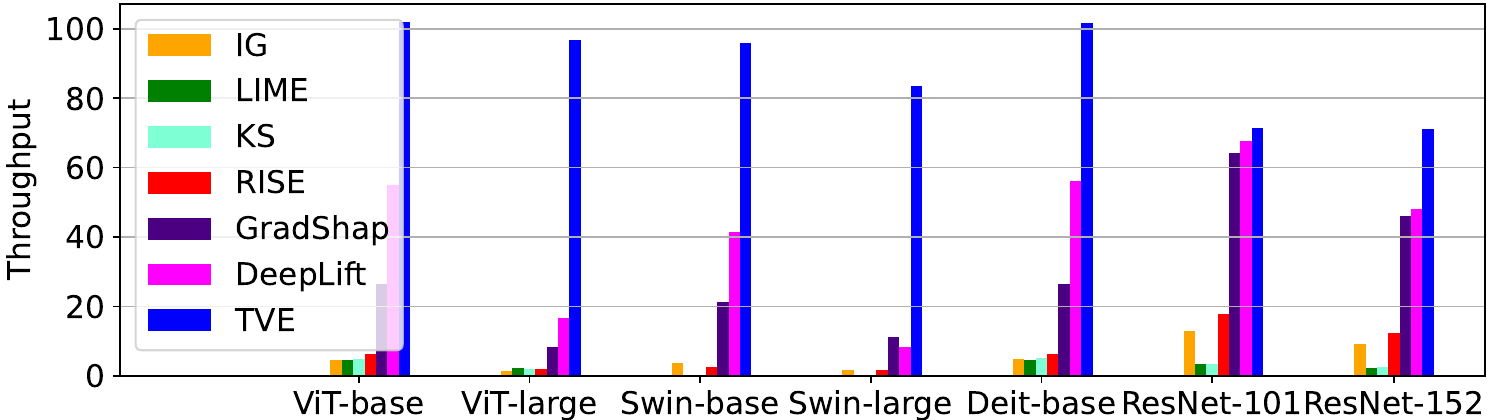}

\caption{\label{fig:throughput_result} Throughput of explaining different architectures.}
\end{minipage}

\end{figure*}

\subsection{Explaining Fully Fine-tuned Models~(RQ2)}
\label{sec:adaptation_finetune_backbone}

In this section, we evaluate the fidelity of \Algnameabbr{} under the full-fine-tuning setting to demonstrate its generalization ability.
Notably, the \texttt{ViT-Base} classification model including both the backbone and classifier are fine-tuned on downstream data, which are not available to \Algnameabbr{} pre-training.
The explanation considers three methods: learning from scratch~(\texttt{LFScratch}), \Algnameabbr{} pre-training~(\Algnameabbr{}-PT), and \Algnameabbr{} fine-tuning~(\Algnameabbr{}-FT).
To adapt to the fully fine-tuned target model, \texttt{LFScratch} trains the explainer on the downstream dataset for one epoch;
\Algnameabbr{}-PT simply transfers the pre-trained explainer to explaining the down-stream tasks; \Algnameabbr{}-FT follows Algorithm~\ref{alg:pretrain} to fine-tune the explainer using the fine-tuned backbone encoder on the downstream dataset for one epoch.
Here, we consider the \texttt{Imagenette} and \texttt{Cat-vs-dogs} datasets for the downstream tasks.
Further details about fine-tuning the target models and explainers are given in Appendixes~\ref{appendix:target_model} and \ref{appendix:implement-details}, respectively.
The loss value of \texttt{LFScratch} and \Algnameabbr{}-FT versus the fine-tuning steps are shown in Figures~\ref{fig:fidelity_result_full_finetuing}~(a) and (d).
The fidelity-sparsity curves of all methods are given in Figures~\ref{fig:fidelity_result_full_finetuing}~(b), (c), (e), and (f).
We have the following observations:

\begin{itemize}[leftmargin=4mm, topsep=10pt]

    \item \emph{\Algnameabbr{} pre-training provides a good initial explainer for adaption to fully fine-tuned encoders.} 
    According to Figures~\ref{fig:fidelity_result_full_finetuing}~(a,d), the \Algnameabbr{} pre-trained explainer shows lower training loss than learning from scratch in the early epochs.
    This indicates the pre-training provides a good initial explainer for explaining downstream tasks.

    \item \emph{\Algnameabbr{}-PT can effectively explain the fully fine-tuned target model, even without fine-tuning the explainer on downstream datasets.}
    According to Figures~\ref{fig:fidelity_result_full_finetuing}~(b,c,e,f), \Algnameabbr{}-PT shows competitive fidelity when comparing with \Algnameabbr{}-FT and other baseline methods, and a significant improvement over \texttt{LFScratch}.
    This indicates the strong generalization ability of \Algnameabbr{}, acquired through pre-training on the large-scale \texttt{ImageNet} dataset.

    \item \emph{The pre-training of transferable explainer and fine-tuning of backbone encoder can be executed independently and parallelly.}
    Specifically, \Algnameabbr{} pre-trains the transferable explainer based on open-sourced pre-trained backbone encoders and large-scale \texttt{ImageNet} dataset; meanwhile, the encoder can be fine-tuned in parallel on downstream datasets. 
    This can significantly improve the efficiency and flexibility of deploying \Algnameabbr{} to practical scenarios.
    
\end{itemize}

% \begin{figure}
% \centering
% \subfigure[Imagenette]{
% \includegraphics[width=0.3\linewidth]{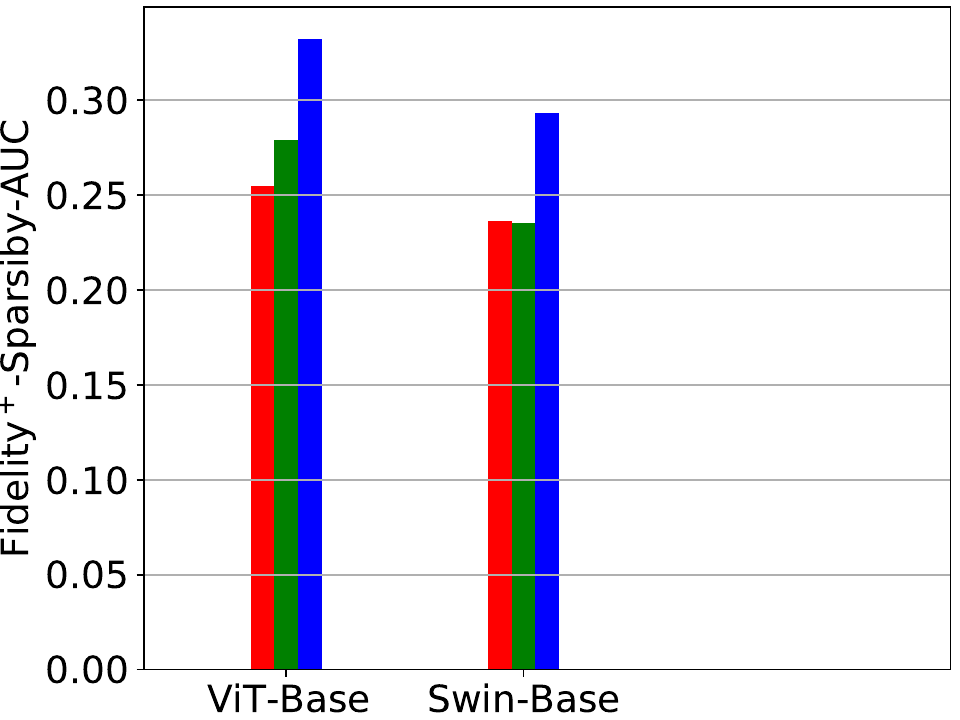}
% }
% \subfigure[Cats-vs-dogs]{
% \includegraphics[width=0.3\linewidth]{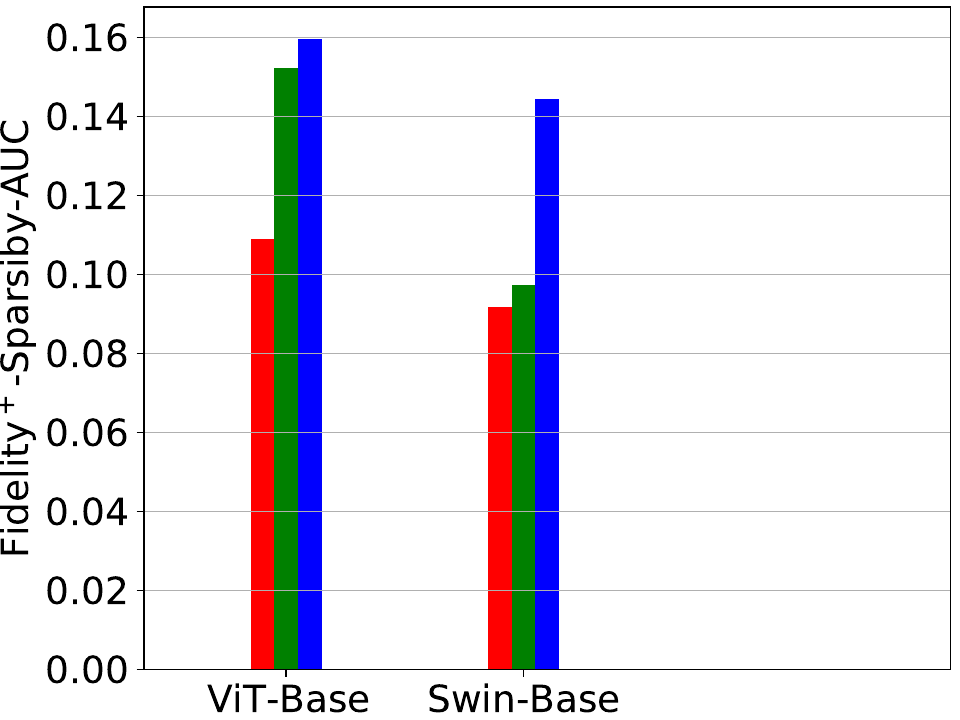}
% }
% \subfigure[CIFAR-10]{
% \includegraphics[width=0.3\linewidth]{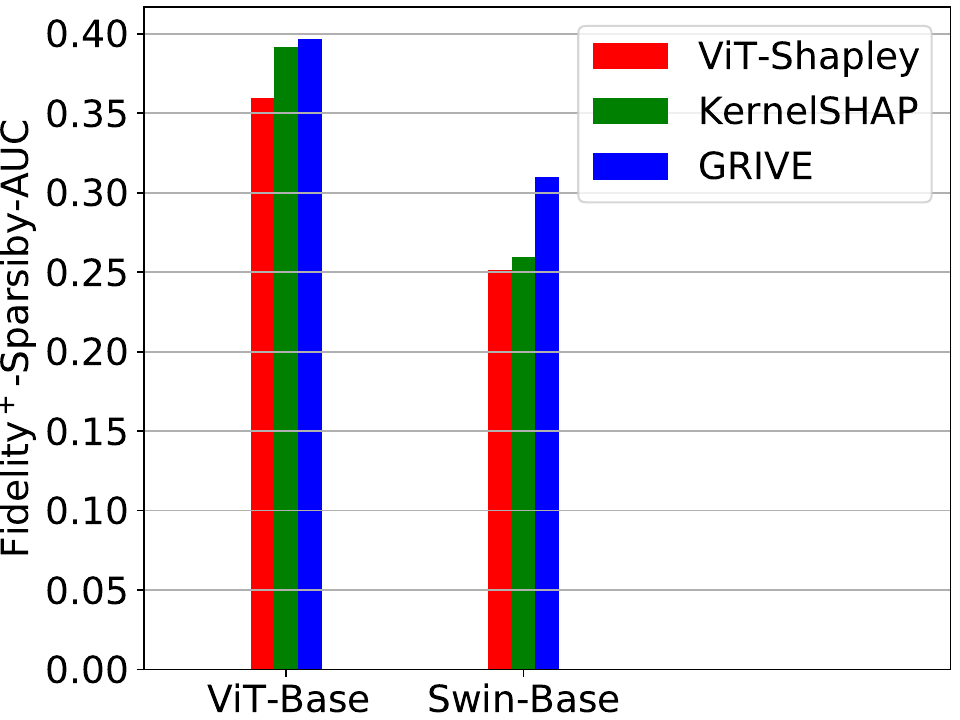}
% }
% \subfigure[Imagenette]{
% \includegraphics[width=0.3\linewidth]{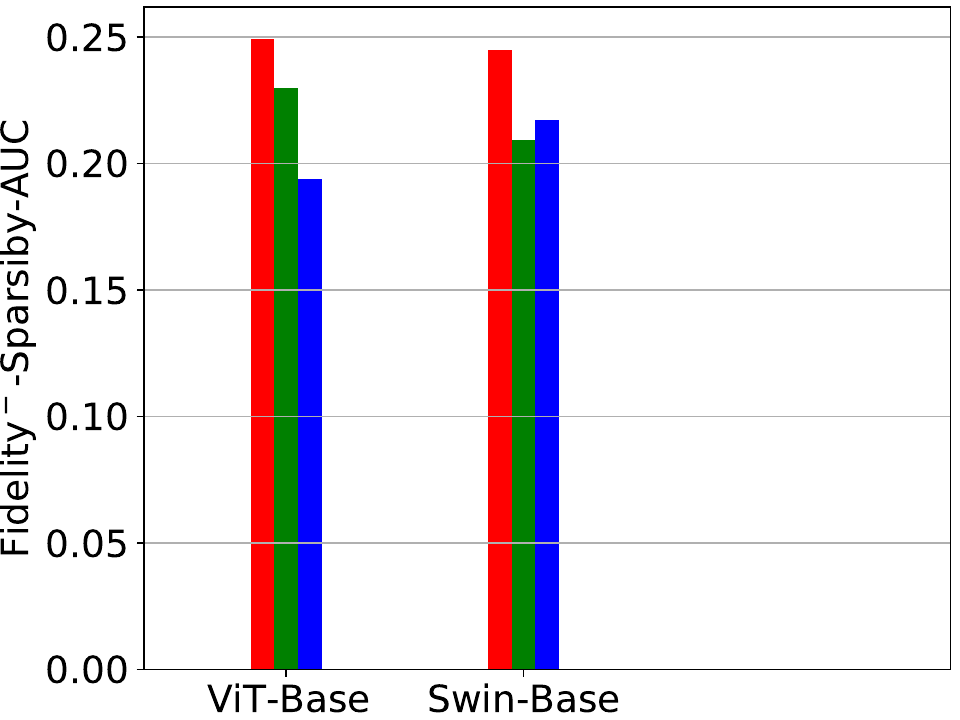}
% }
% \subfigure[Cats-vs-dogs]{
% \includegraphics[width=0.3\linewidth]{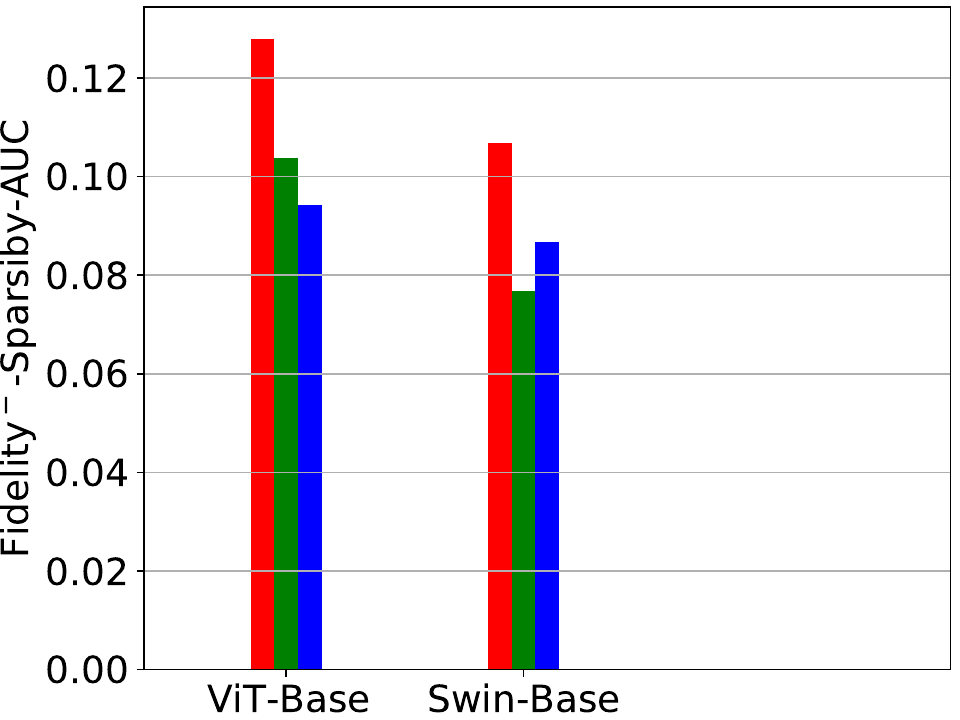}
% }
% \subfigure[CIFAR-10]{
% \includegraphics[width=0.3\linewidth]{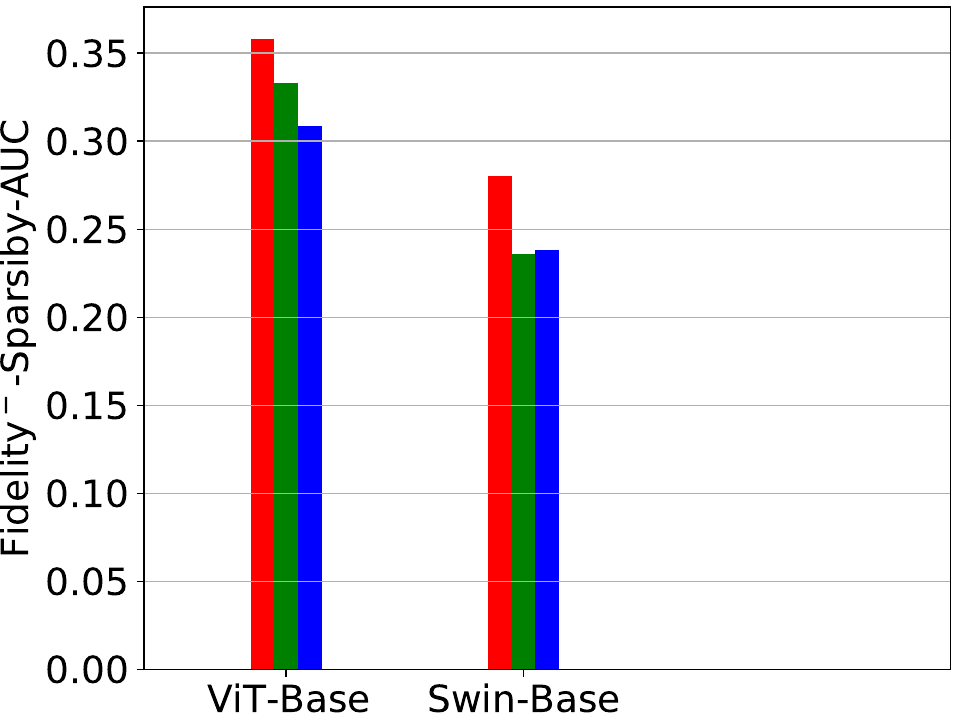}
% }
% % 
% \caption{\label{fig:fidelity_result} $\mathrm{Fidelity}^+$-Sparsity-AUC($\uparrow$) on the Imagenette~(a), Cat-vs-dogs~(b), and CIFAR-10~(c) datasets. 
% $\mathrm{Fidelity}^-$-Sparsity-AUC($\downarrow$) on the Imagenette~(d), Cat-vs-dogs~(e), and CIFAR-10~(f) datasets.}
% % 
% \end{figure}

\subsection{Evaluation of Transferability~(RQ3)}
\label{sec:eval_transfer}

 % in Appendixes~\ref{appendix:baseline}

We evaluate the transferability of \Algnameabbr{} compared with \texttt{ViT-Shapley}~\cite{covert2022learning}, a state-of-the-art DNN-based explainer for vision models.
Specifically, \texttt{ViT-Shapley} pre-trains the explainer on the large-scale \texttt{ImageNet} dataset, and deploys it to the \texttt{Cat-vs-dogs}, \texttt{Imagenette}, and \texttt{CIFAR-10} datasets to generate the explanations.
Different from \texttt{ViT-Shapley}, \Algnameabbr{} transfers the explainer to downstream datasets via taking the task-specific classifier $H_t$ into Equation~(\ref{eq:estimate_generic_attribution}).
Moreover, we also consider a \Algnameabbr{}-$H_{\textsl{g}}$ method to study whether the pre-training of \Algnameabbr{} contributes to explaining downstream tasks. 
Different from \Algnameabbr{}, \Algnameabbr{}-$H_{\textsl{g}}$ takes a general classifier~(pre-trained on the \texttt{ImageNet} dataset) into Equation~(\ref{eq:estimate_generic_attribution}) to generate the explanation.
We follow Section~\ref{sec:fidelity_eval} to adopt the fidelity-sparsity AUC to evaluate the average fidelity.
Table~\ref{tab:transferibility} illustrates the fidelity for explaining the \texttt{ViT-Base}, \texttt{Swin-Base}, and \texttt{Deit-Base} models on the \texttt{Cat-vs-dogs}, \texttt{Imagenette}, and \texttt{CIFAR-10} datasets.
We have the following insights:
\begin{itemize}[leftmargin=4mm, topsep=10pt]

    \item \emph{\Algnameabbr{} has stronger transferability than \texttt{ViT-Shapley}.}
    Both \Algnameabbr{} and \texttt{ViT-Shapley} are pre-trained on the large-scale \texttt{ImageNet} dataset, and transferred to the downstream datasets without additional training.
    Table~\ref{tab:transferibility} shows \Algnameabbr{} has higher $\mathrm{Fidelity}^+$($\uparrow$) and lower $\mathrm{Fidelity}^-$($\downarrow$) than \texttt{ViT-Shapley}.

    \item \emph{The pre-training of \Algnameabbr{} significantly contributes to explaining downstream tasks.} 
    \Algnameabbr{}-$H_{\textsl{g}}$ adopts the generally pre-trained explainer and classifier to explain downstream tasks, and achieves a reasonable fidelity on most of the datasets.
    This indicates the pre-training of \Algnameabbr{} captures the transferable features across various datasets for explaining downstream tasks.

    \item \emph{It is more faithful to explain downstream tasks based on the task-specific classifiers.} 
    \Algnameabbr{} outperforms \Algnameabbr{}-$H_{\textsl{g}}$ on most architectures and datasets, which indicates the attribution transfer had better take the classifier aligned with the downstream task for $H_t$ in Definition~\ref{definition:attribution_transfer}.

\end{itemize}

% To be concrete, \Algnameabbr{} takes a task-specific classifier $H_t$ into Definition~\ref{definition:attribution_transfer}; 
% Moreover, we note that the \emph{ViT-Shapley method requires to train the amortized explainer separately on each of the datasets}. 
% In contrast, \emph{\Algnameabbr{} simply follows Theorem~\ref{definition:attribution_transfer} to explain the classification on each of the datasets without additional training.}
% This is more applicable and easy-to-deploy to real-world scenarios.

\begin{figure*}
\centering
\subfigure[Cats]{
\begin{minipage}{0.3\textwidth}
\centering
\includegraphics[width=0.24\linewidth]{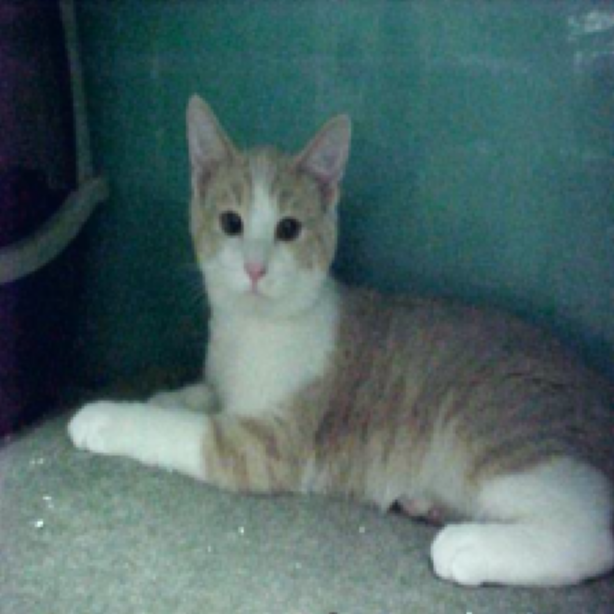}
\!\!\!
\includegraphics[width=0.24\linewidth]{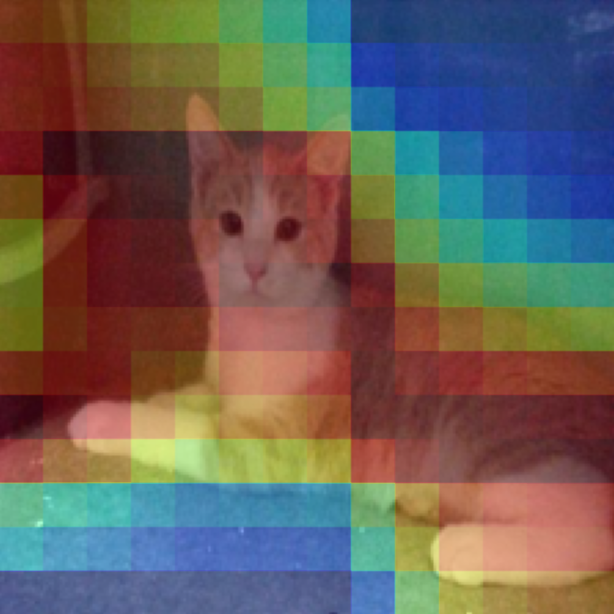}
\!\!\!
\includegraphics[width=0.24\linewidth]{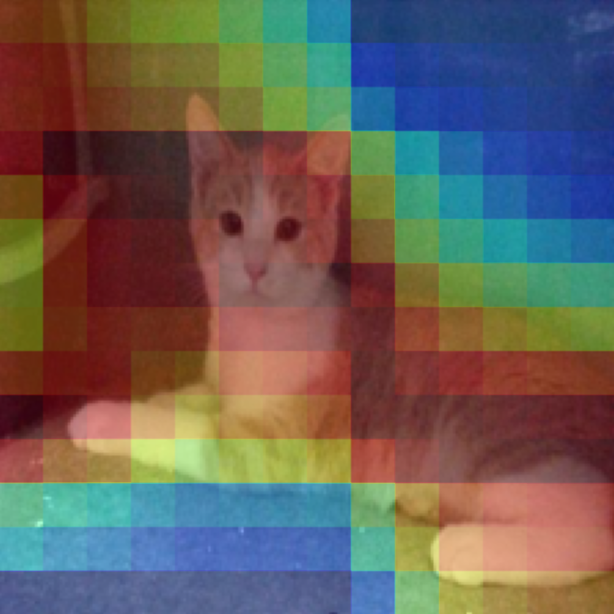}
\!\!\!
\includegraphics[width=0.24\linewidth]{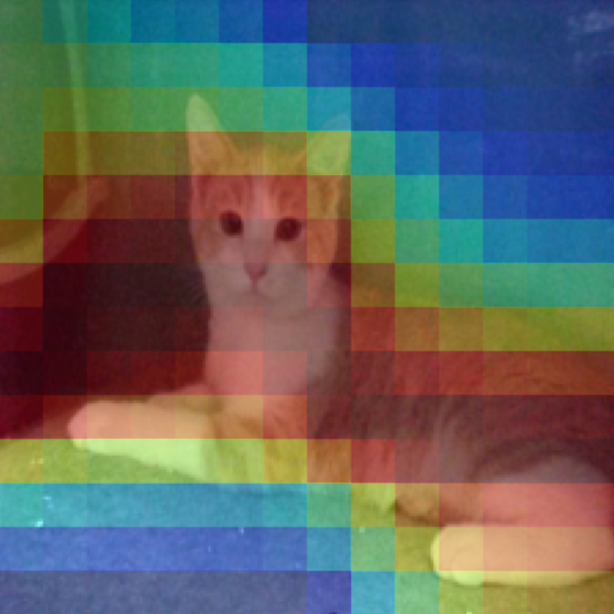}
\end{minipage}
}
% \!\!\!\!\!\!
\subfigure[Dogs]{
\begin{minipage}{0.3\textwidth}
\includegraphics[width=0.24\linewidth]{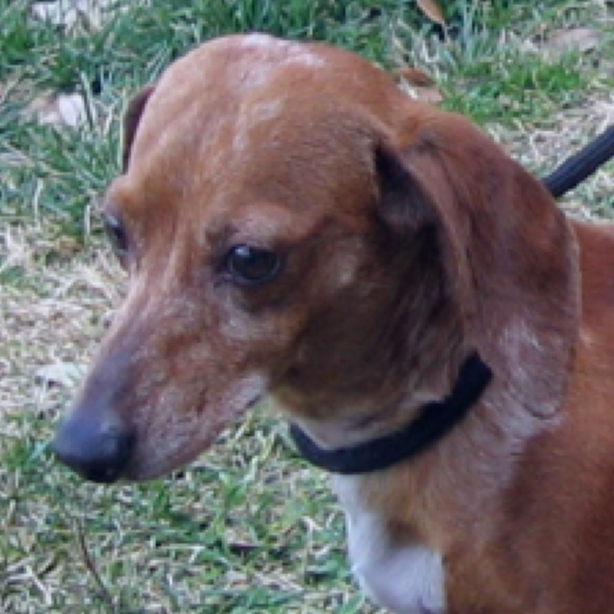}
\!\!\!
\includegraphics[width=0.24\linewidth]{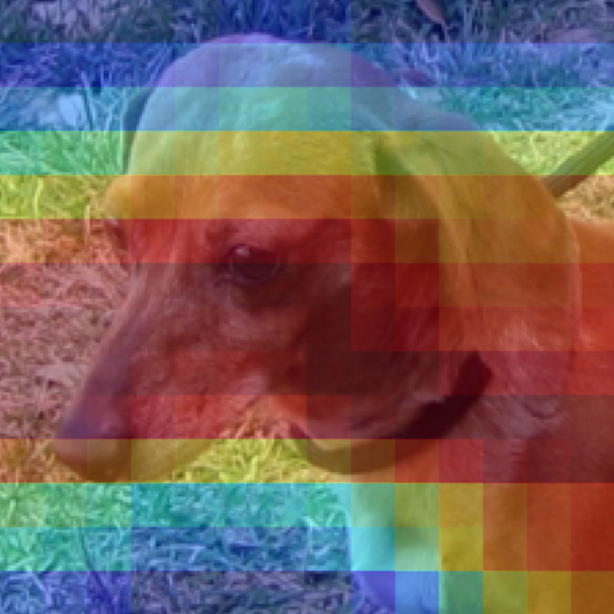}
\!\!\!
\includegraphics[width=0.24\linewidth]{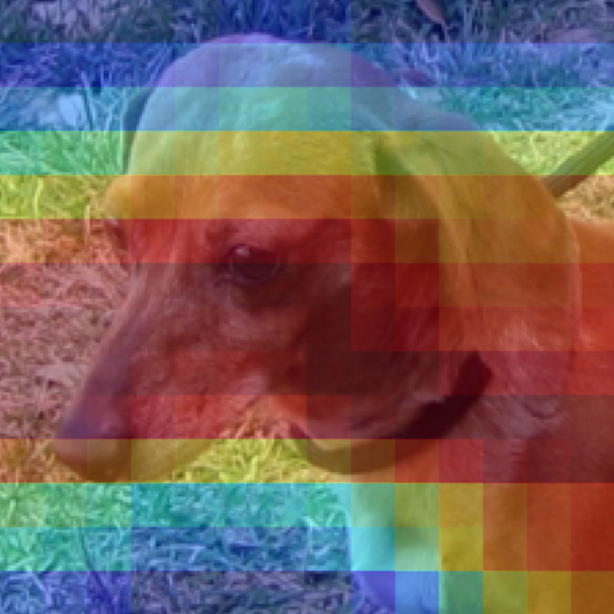}
\!\!\!
\includegraphics[width=0.24\linewidth]{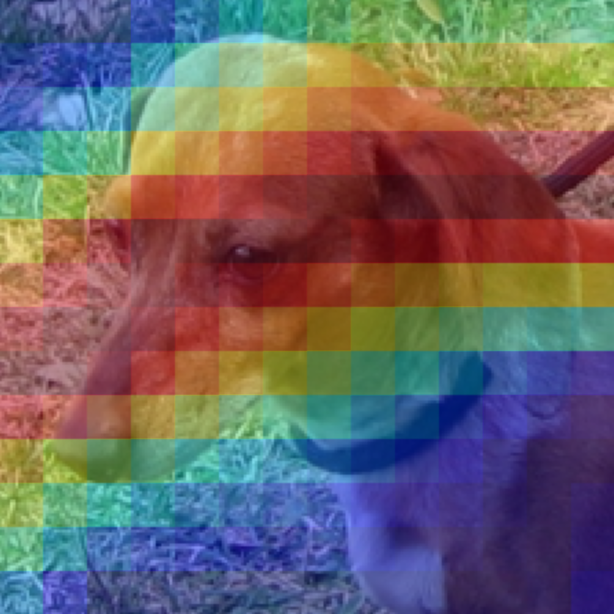}
\end{minipage}
}
% \!\!\!\!\!\!
\subfigure[Cats]{
\begin{minipage}{0.3\textwidth}
\includegraphics[width=0.24\linewidth]{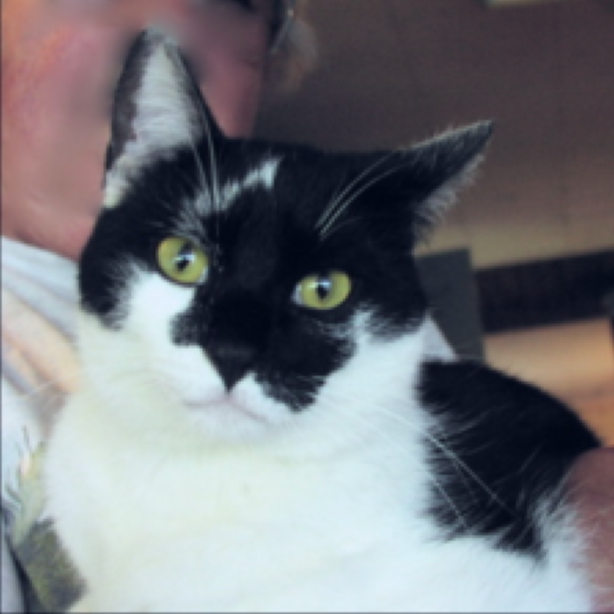}
\!\!\!
\includegraphics[width=0.24\linewidth]{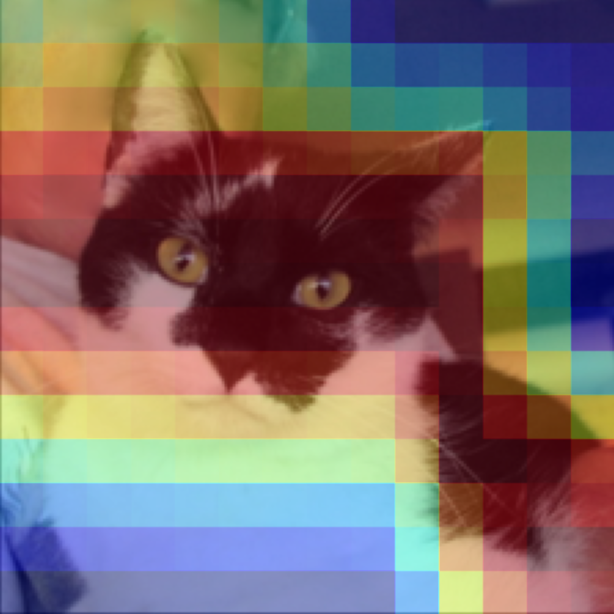}
\!\!\!
\includegraphics[width=0.24\linewidth]{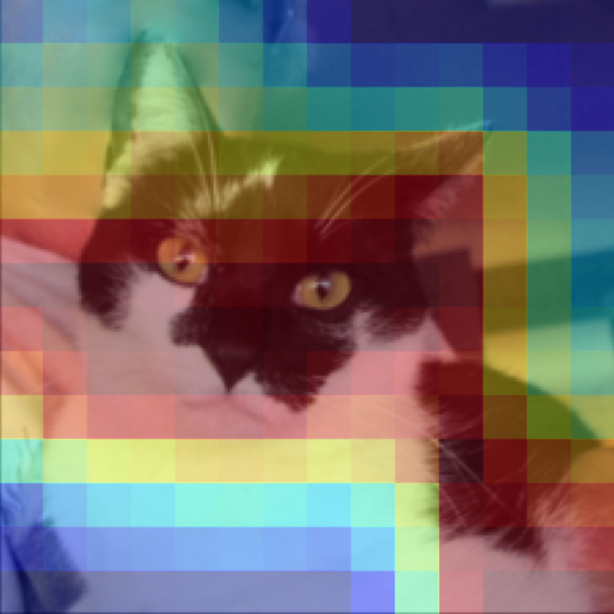}
\!\!\!
\includegraphics[width=0.24\linewidth]{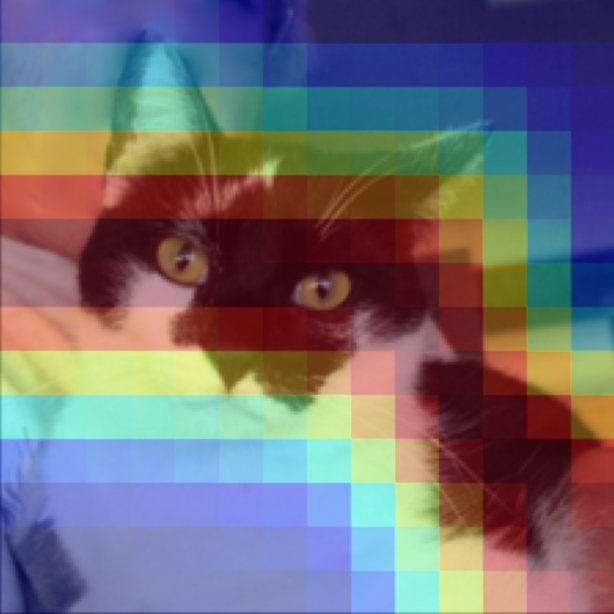}
\end{minipage}
}
\\

\subfigure[Church]{
\begin{minipage}{0.3\textwidth}
\centering
\includegraphics[width=0.24\linewidth]{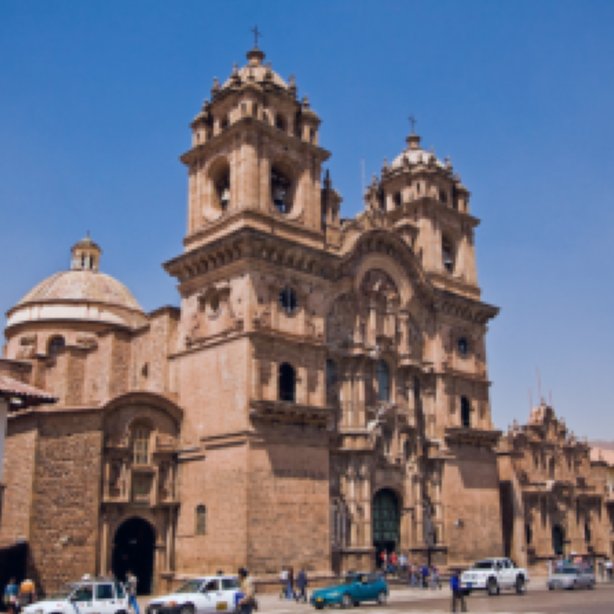}
\!\!\!
\includegraphics[width=0.24\linewidth]{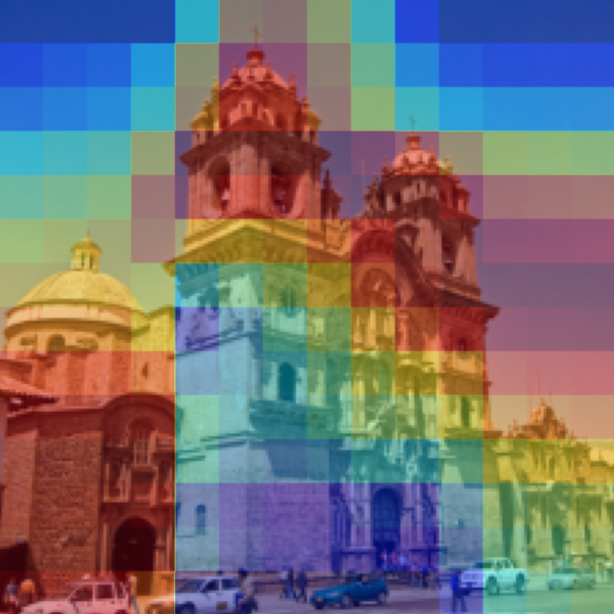}
\!\!\!
\includegraphics[width=0.24\linewidth]{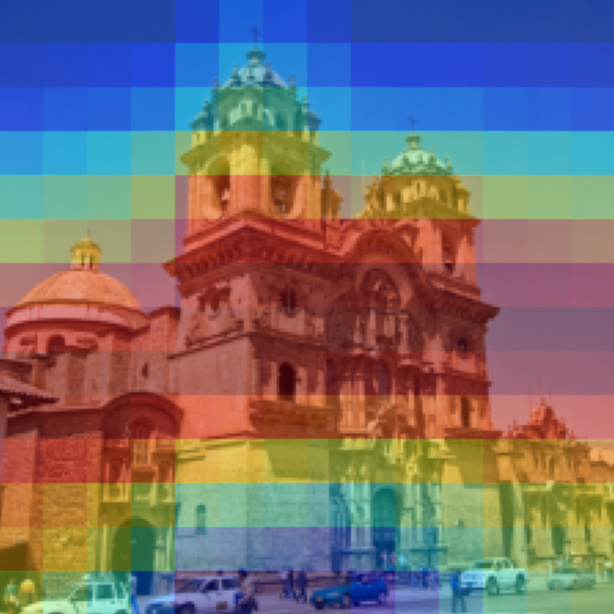}
\!\!\!
\includegraphics[width=0.24\linewidth]{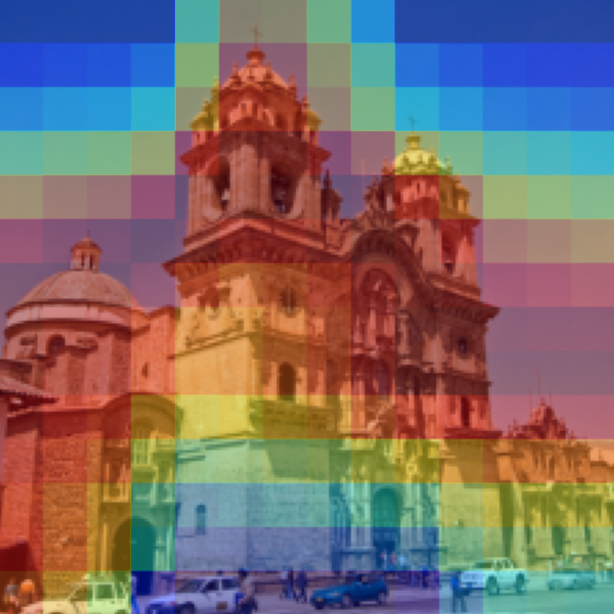}
\end{minipage}
}
% \!\!\!\!\!\!
\subfigure[Parachute]{
\begin{minipage}{0.3\textwidth}
\centering
\includegraphics[width=0.24\linewidth]{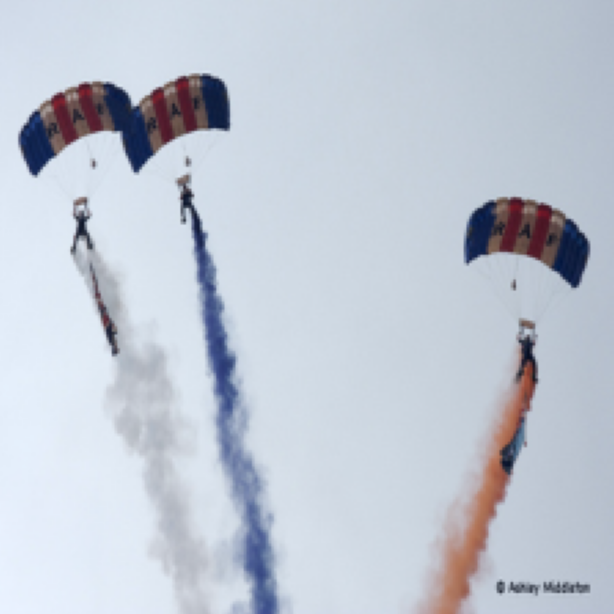}
\!\!\!
\includegraphics[width=0.24\linewidth]{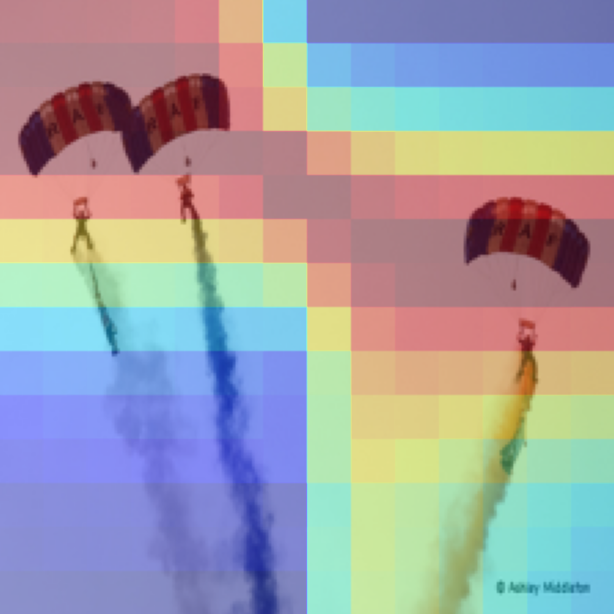}
\!\!\!
\includegraphics[width=0.24\linewidth]{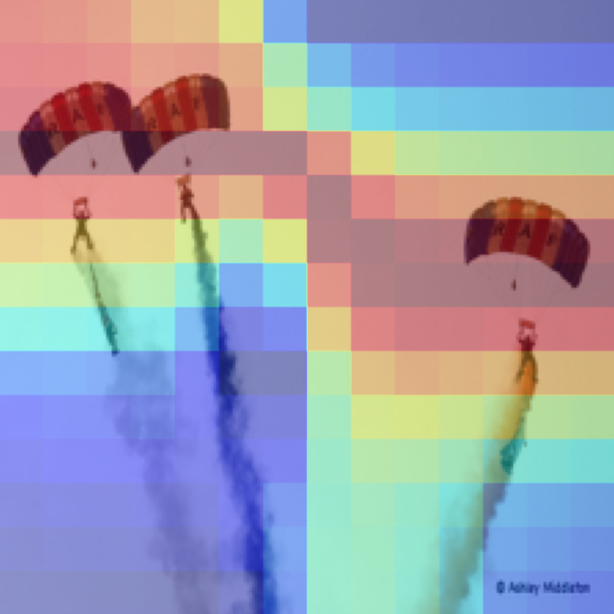}
\!\!\!
\includegraphics[width=0.24\linewidth]{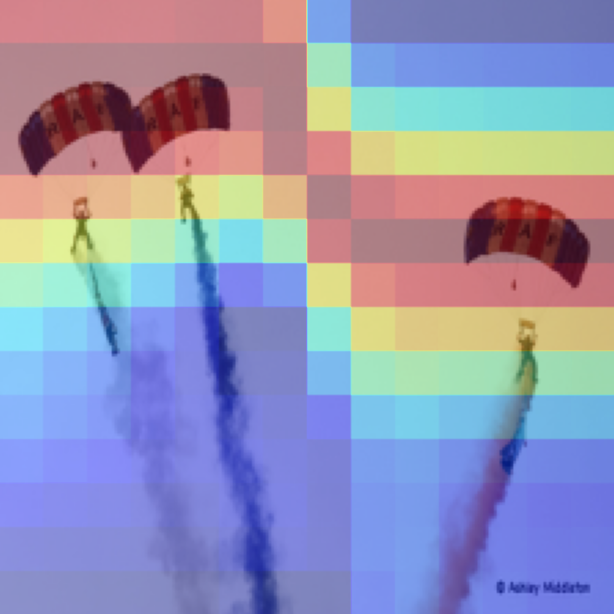}
\end{minipage}
}
% \!\!\!\!\!\!
\subfigure[Garbage truck]{ % Golf ball
\begin{minipage}{0.3\textwidth}
\centering
\includegraphics[width=0.24\linewidth]{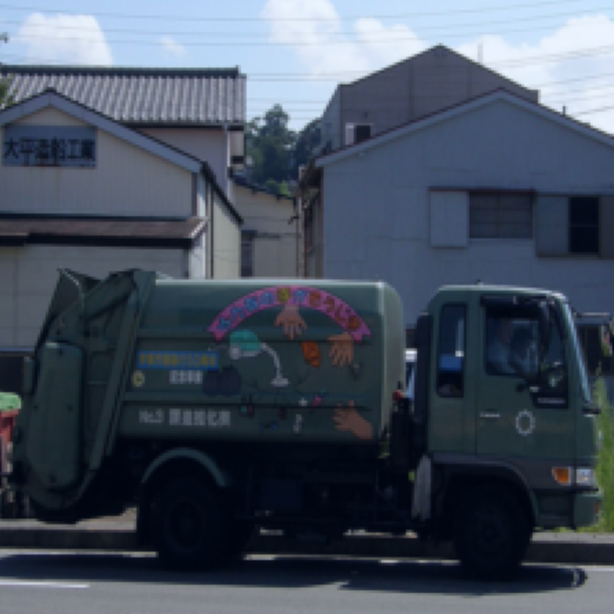}
\!\!\!
\includegraphics[width=0.24\linewidth]{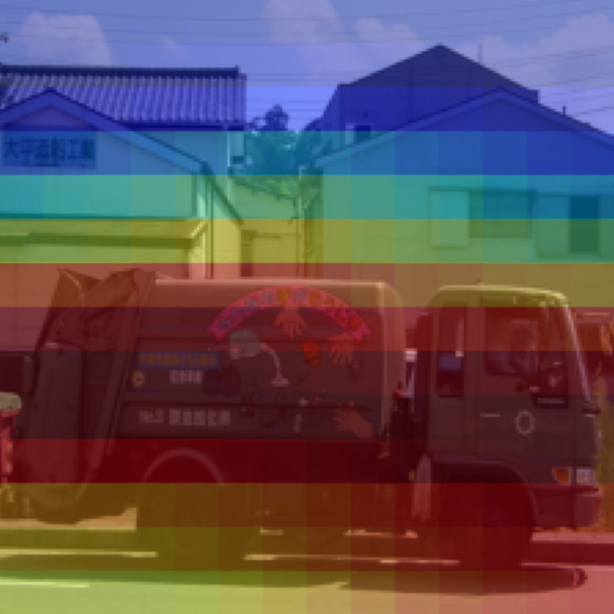}
\!\!\!
\includegraphics[width=0.24\linewidth]{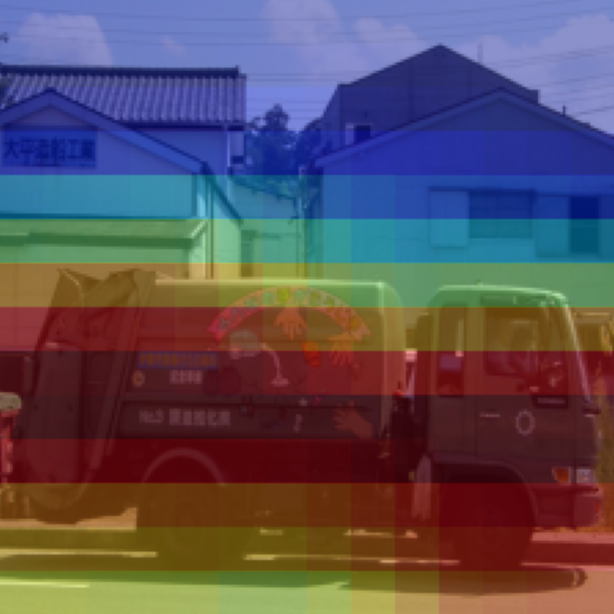}
\!\!\!
\includegraphics[width=0.24\linewidth]{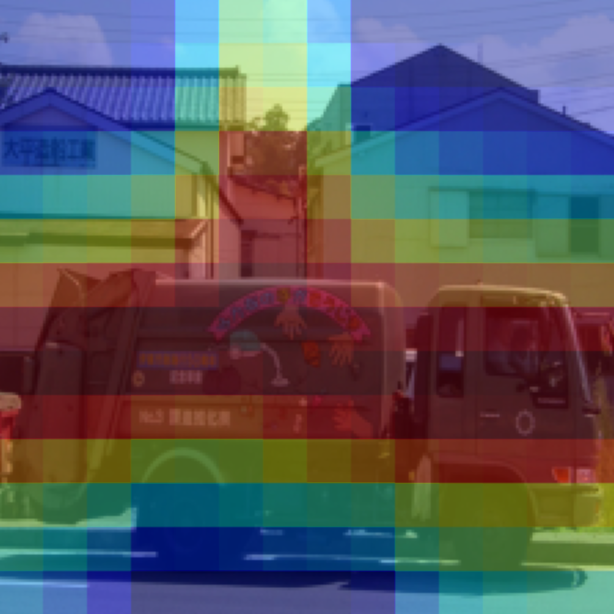}
\end{minipage}
}
\\

\subfigure[Ship]{
\begin{minipage}{0.3\textwidth}
\centering
\includegraphics[width=0.24\linewidth]{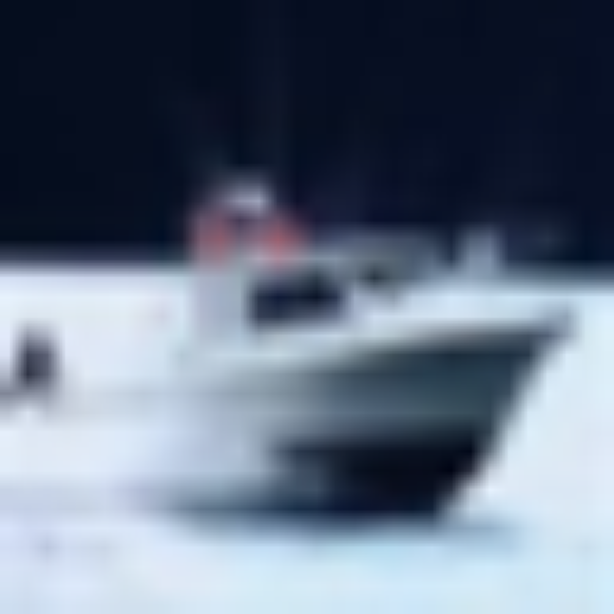}
\!\!\!
\includegraphics[width=0.24\linewidth]{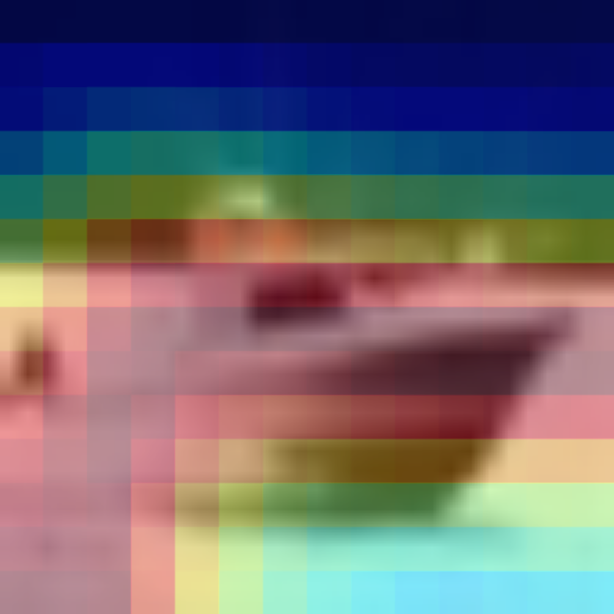}
\!\!\!
\includegraphics[width=0.24\linewidth]{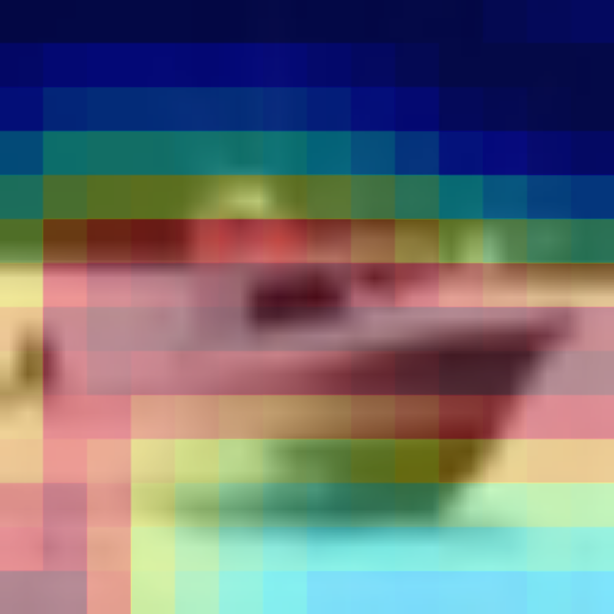}
\!\!\!
\includegraphics[width=0.24\linewidth]{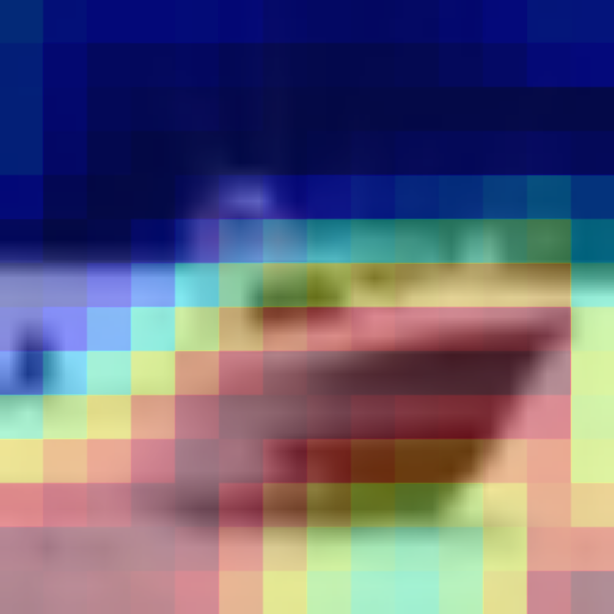}
\end{minipage}
}
% \!\!\!\!\!\!
\subfigure[Airplane]{
\begin{minipage}{0.3\textwidth}
\centering
\includegraphics[width=0.24\linewidth]{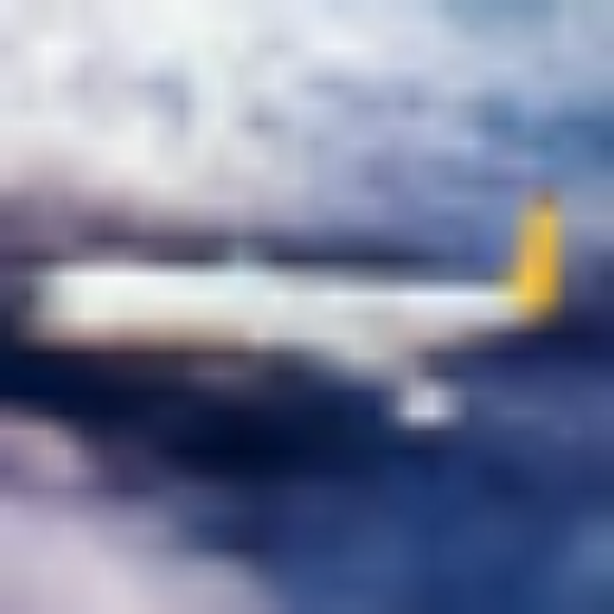}
\!\!\!
\includegraphics[width=0.24\linewidth]{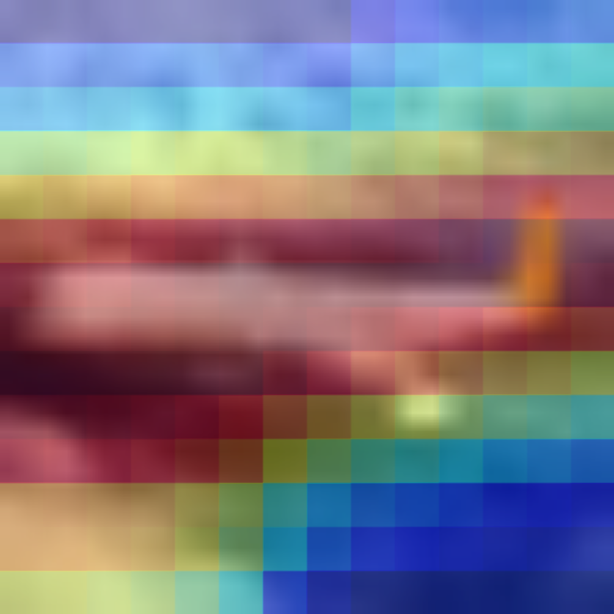}
\!\!\!
\includegraphics[width=0.24\linewidth]{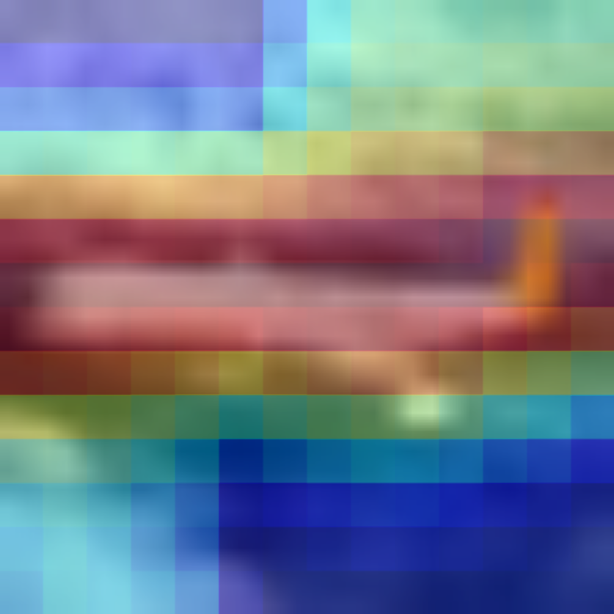}
\!\!\!
\includegraphics[width=0.24\linewidth]{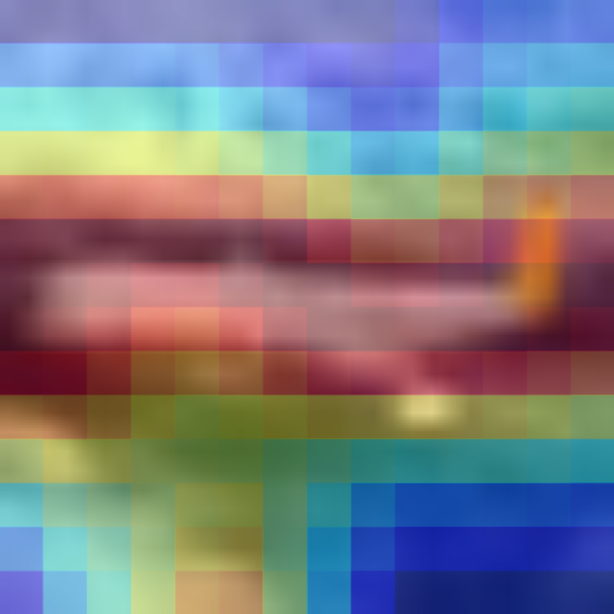}
\end{minipage}
}
% \!\!\!\!\!\!
\subfigure[Automobile]{
\begin{minipage}{0.3\textwidth}
\centering
\includegraphics[width=0.24\linewidth]{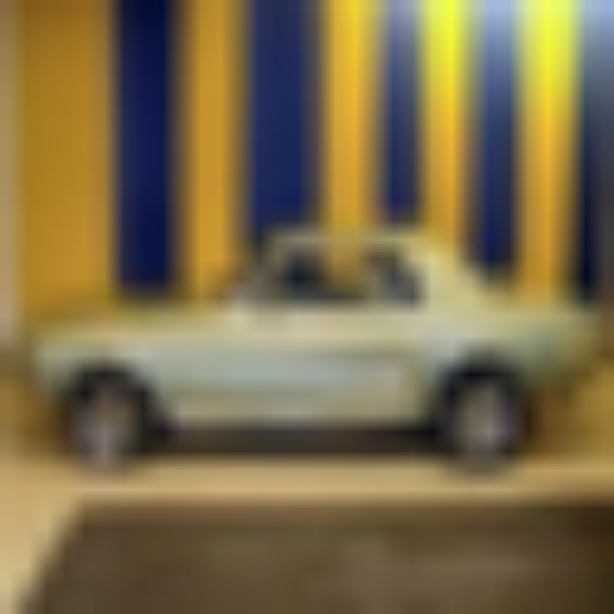}
\!\!\!
\includegraphics[width=0.24\linewidth]{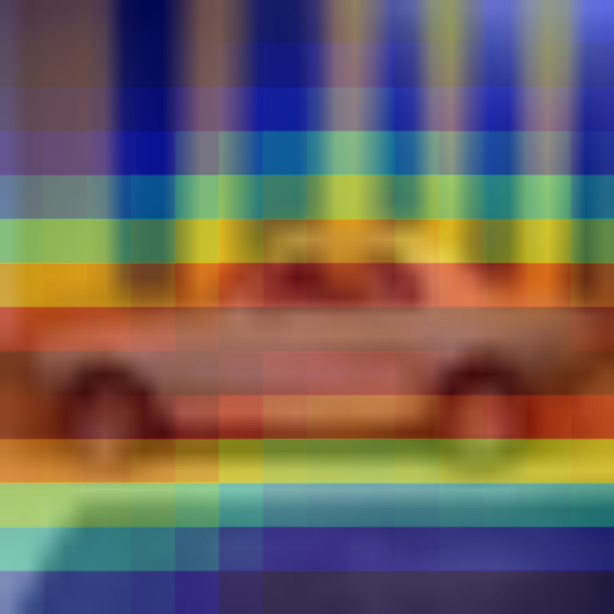}
\!\!\!
\includegraphics[width=0.24\linewidth]{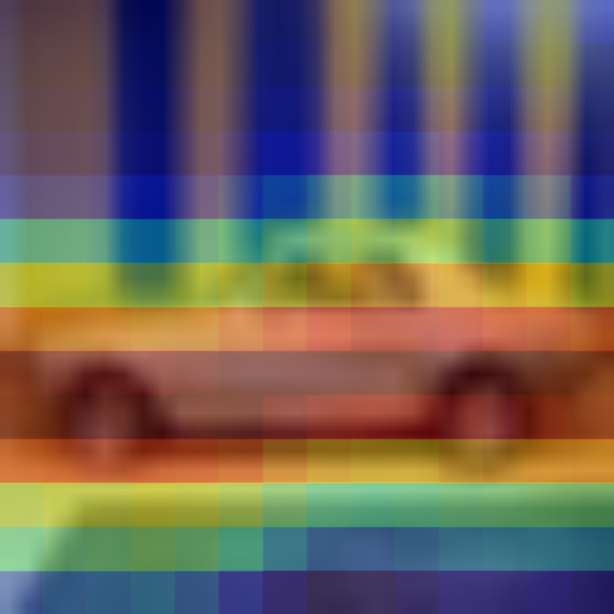}
\!\!\!
\includegraphics[width=0.24\linewidth]{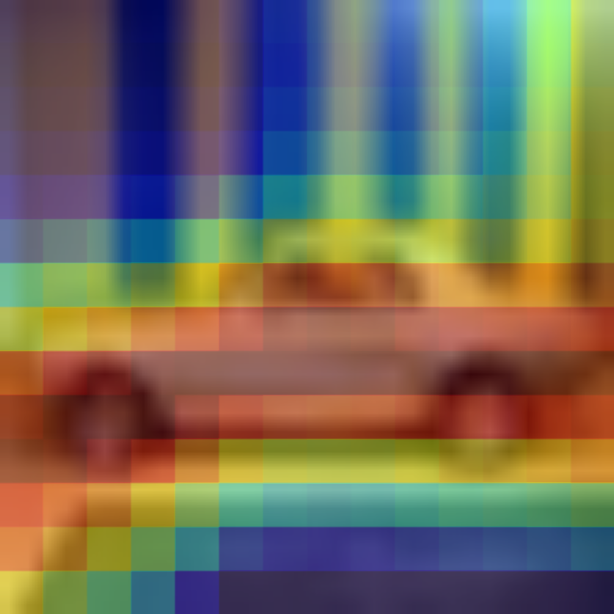}
\end{minipage}
}

\caption{\label{fig:case_study} \small Visualization of explanation on the \texttt{Cats-vs-dogs}~(a)-(c), \texttt{Imagenette}~(d)-(f), and \texttt{CIFAR-10}~(g)-(i) datasets. From the left to the right, each heatmap explains the inference of the \texttt{Swin-Base}, \texttt{Deit-Base}, and \texttt{ViT-Base} models, respectively.}

\end{figure*}

\subsection{Ablation Studies~(RQ4)}
\label{sec:ablation_study}

We ablatedly study the contribution of the key steps in \Algnameabbr{} to explaining downstream tasks, including the pre-training of transferable explainer and attribution transfer aligned to each task. % Proposition~\ref{definition:attribution_transfer}.
For our evaluation, we consider three methods: \Algnameabbr{} w/o Pre-training~(PT), \Algnameabbr{} w/o $H_t$, and \Algnameabbr{}.
Specifically, for \Algnameabbr{} w/o PT, the explainer is randomly initialized without pre-training, and attribution transfer follows Definition~\ref{definition:attribution_transfer}.
For \Algnameabbr{} w/o $H_t$, the transferable explainer is pre-trained following Algorithm~\ref{alg:pretrain}, and the explanation for each task is generated by $\hat{\boldsymbol{\phi}}_k = \log H_\textsl{g}( \hat{\textbf{\textsl{g}}}_{k}; y ) - \log H_\textsl{g}( \hat{\textbf{h}}_{k}; y )$, where $H_\textsl{g}$ takes a general classifier pre-trained on the \texttt{ImageNet} datasets, instead of being fine-tuned corresponding to the task.
Figure~\ref{fig:ablation_result} illustrates the results of $\mathrm{Fidelity}^+$-Sparsity-AUC($\uparrow$) and $\mathrm{Fidelity}^-$-Sparsity-AUC($\downarrow$) for each method, where the fidelity score represents the averaged value on the \texttt{Cats-vs-dogs}, \texttt{Imagenette}, and \texttt{CIFAR-10} datasets.
Other configurations remain consistent with Appendix~\ref{appendix:implement-details}.
Overall, we have the following observations:
\begin{itemize}[leftmargin=4mm, topsep=10pt]

    \item \emph{\Algnameabbr{} pre-training significantly contributes to explaining the downstream tasks.} 
    This can be verified by the fidelity degradation observed from \Algnameabbr{} w/o PT in Figure~\ref{fig:ablation_result}.
    
    % the two sub-figures of Figures~\ref{fig:ablation_result}.
    % This verifies the necessity of $\boldsymbol{\textsl{g}}_{k,z}$ and $\boldsymbol{\textsl{h}}_{k,z}$ to \Algnameabbr{}.

    \item \emph{The classifier $H_t$ for attribution transfer should align with the explaining task $t$.} 
    It is observed in Figure~\ref{fig:ablation_result} that \Algnameabbr{} outperforms \Algnameabbr{} w/o $H_{t}$.
    This indicates the task-aligned $H_t$ is better than general classifiers for the attribution transfer to a specific task $t$.
    
\end{itemize}

\subsection{Evaluation of Latency}
\label{sec:eval_latency}

In this section, we evaluate the latency of \Algnameabbr{} compared with baseline methods.
Specifically, we adopt the metric {\small $\mathrm{Throughput} = \frac{N_{\text{test}}}{T}$($\uparrow$)} to evaluate the explanation latency, where $N_{\text{test}}$ takes the number of testing instances and $T$ signifies the total time consumed during the explanation process.
Details about our computational infrastructure are given in Appendix~\ref{appendix:hardware}.
Figure~\ref{fig:throughput_result} shows the throughput of different methods explaining the {\small \texttt{ViT-Base/Large}, \texttt{Swin-Base/Large}, \texttt{Deit-Base}}, and {\small \texttt{ResNet-101/152}} models on the \texttt{ImageNet} dataset. 
Overall, we observe:

\begin{itemize}[leftmargin=4mm, topsep=10pt]

    \item \emph{\Algnameabbr{} is more efficient than state-of-the-art baseline methods,} by generating explanations through a single feed-forward pass of the explainer. In contrast, the baseline methods rely on intensive samplings of the forward or backward passes of the target model, resulting in a considerably slower explanation process.
    For example, although \texttt{KernelSHAP} exhibits comparable $\mathrm{Fidelity}^-$($\downarrow$) with \Algnameabbr{}, as shown in Figure~\ref{fig:fidelity_result}, its significantly lower throughput limits its practicality in real-world scenarios.
    % Although reducing the sampling number of KernelSHAP can speed it up, its fidelity turns further worse than \Algnameabbr{}.

    \item \emph{\Algnameabbr{} exhibits the most negligible decrease in throughput as the size of the target model grows}, as seen when transitioning from \texttt{ViT-Base} to \texttt{ViT-Large}.
    This advantage stems from the fact that \Algnameabbr{}'s latency is contingent upon the explainer's model size, rather than the target model.
    In contrast, the baseline methods suffer from notable performance slowdown as the size of the target model increases, due to the necessity of sampling the target model to generate explanations.
    
    % Our analysis reveals that the speed of the baseline methods decreases in tandem with the target model's size due to the necessity of sampling the target model to generate explanations.
    % \item Among the baseline methods, Gradient-based methods including DeepLift and GradShap achieve higher throughput than perturbation-based methods: RISE, LIME, SHAP.

\end{itemize}

\subsection{Case Studies}
\label{sec:case_study}

In this section, we visualize the explanations generated by \Algnameabbr{}, demonstrating its power in helping human users understand vision models.
Specifically, we randomly sample three instances from the \texttt{Cats-vs-dogs}, \texttt{Imagenette}, and \texttt{CIFAR-10} datasets, and visualize the explanations of \texttt{Swin-Base}, \texttt{Deit-Base}, and \texttt{ViT-Base} models in Figure~\ref{fig:case_study}, where sub-figures~(a)-(c), (d)-(f), and (g)-(i) correspond to the \texttt{Cats-vs-dogs}, \texttt{Imagenette}, and \texttt{CIFAR-10} datasets, respectively.
In each sub-figure, from the left-side to the right-side, the three heatmaps explain the inference of the \texttt{Swin-Base}, \texttt{Deit-Base}, and \texttt{ViT-Base} model, respectively.
Notably, \Algnameabbr{} generates the explanation heatmap in an end-to-end manner \emph{without pre- or post-processing}.
More case studies on the \texttt{ImageNet} dataset are shown in Appendix~\ref{appendix:case_study}.
According to the case study, we observe:

\begin{itemize}[leftmargin=4mm, topsep=10pt]

    % \item \emph{The important patches highlighted by the explanation form semantically meaningful patterns.} e.g. as illustrated in Figure~\ref{fig:case_study}~(a), (d), and (g), the Swin-Base model focus on the head, tower, and bow to identify a cat, church, and ship.

    \item \emph{The salient patches emphasized by \Algnameabbr{}'s explanation reveal semantically meaningful patterns.} For example, as depicted in Figures~\ref{fig:case_study}~(d), (e), and (g), the \texttt{Swin-Base} model concentrates on the tower, canopy and bow, respectively, to identify a church, parachute, and ship.

    \item \emph{\Algnameabbr{} does not rely on pre-processing of the  image or post-processing of the explanation heatmap.}
    In contrast, existing work EAC~\cite{sun2023explain} requires SAM~\cite{kirillov2023segment} to segment the input image before explaining, which is less flexible than \Algnameabbr{}.

    % \item \emph{The predictions by different model architectures are based on different elements in the images.} e.g. according to Figure~\ref{fig:case_study}~(h), on the CIFAR-10 datasets, the Swin-Base and DeiT-Base models focus on the body to identify the aircraft. In contrast, the ViT-Base model also the aircraft tail for the prediction.

    \item \emph{Different model architectures make predictions based on distinct image elements.}
    For instance, as illustrated in Figure~\ref{fig:case_study}~(g), the \texttt{Swin-Base} and \texttt{Deit-Base} models primarily emphasize the ship's bow for identification. In contrast, the \texttt{ViT-Base} model takes into account the ship's keel for its prediction.
    
\end{itemize}

\section{Conclution}

In this work, we propose a framework of attribution transfer, incorporating a meta-attribution to extract the foundation knowledge and a transfer rule to utilize this knowledge for explaining various downstream tasks.
Building upon this framework, we introduce \Algnameabbr{}, a transferable explainer pre-trained on large-scale image datasets.
Notably, \Algnameabbr{} shows strong transferability to effectively explain various downstream tasks without the need for training on task-specific data.
Experiment results validate the promising performance of \Algnameabbr{} in explaining three architectures of vision Transformer across three downstream datasets.
Significantly, the strong transferability of \Algnameabbr{} facilitates efficient and flexible deployment to various downstream scenarios.

% Specifically, \Algnameabbr{} guides the pre-training towards learning the meta-attribution that comprehensively encodes the essential attribution for explaining various downstream tasks; and introduces a rule-based adaptation of the meta-attribution for explaining downstream tasks, without requiring task-specific data.
% This enables efficient and flexible deployment across various downstream tasks. 
% Our theoretical analysis verifies that the pre-training of \Algnameabbr{} contributes to minimizing the explanation error bound aligned with the $\mathcal{V}-$information-based explanation of downstream tasks.
% Moreover, experiment results demonstrate that the pre-training of \Algnameabbr{} can effectively explain the models fine-tuned in various downstream scenarios, achieving competitive results compared with state-of-the-art explanation methods.

% \section{Impact Statements}

% This paper presents work whose goal is to advance the field of Machine Learning. There are many potential societal consequences of our work, none which we feel must be specifically highlighted here.

\bibliography{generic_xai}
\bibliographystyle{plain}

\clearpage
\appendix
\setcounter{theorem}{0}
\onecolumn

\section*{Appendix}

\section{Related Work}

Explainable machine learning (ML) has made significant advancements, leading to model transparency and
better human understanding of deep neural networks (DNNs)~\cite{du2019techniques}.
Specifically, existing work of explainable ML can be categorized into two groups: local explainers and DNN-based explainers~\cite{chuang2023efficient}.

\paragraph{Local Explainer.}
Local explainer focuses on constructing local explanation based on perturbation of the target black-box model, like KernelSHAP~\cite{lundberg2017unified}, LIME~\cite{ribeiro2016should}, GradCAM~\cite{selvaraju2017grad}, and Integrated Gradient~\cite{sundararajan2017axiomatic}.
Specifically, KernelSHAP approximates the Shapleyvalue by learning an explainable surrogate (linear) model based on the DNN output of reference input for each feature;
LIME generates the explanation by sampling points around the input instance and using DNN output at these points to learn a surrogate (linear) model; Integrated Gradients estimates the explanation by the integral of the gradients of DNN output with respect to the inputs, along the pathway from specified references to the inputs.
These pieces of work rely on resource-intensive procedures like sampling or backpropagation of the target black-box model~\cite{liu2021synthetic}, leading to undesirable trade-off between the efficiency and interpretation fidelity~\cite{chuang2023efficient}.

\paragraph{DNN-based Explainer.}
% Another group leverages the knowledge of explanation values to train DNN-based explainers, FastSHAP~\cite{jethani2021fastshap}, CORTX~\cite{chuang2023cortx}, LARA~\cite{rong2023efficient, wang2022accelerating}.  
% These pieces of work enable to efficiently generate explanations for an entire batch of instances through a single, streamlined feed-forward operation of the DNN-based explainer.
% However, they are constrained to explaining individual black box models, and they lack the ability to transfer the explanation across various models and tasks.
% This limitation results in the explanation of various tasks in practical scenarios becoming time- and resource-consuming due to the necessity of training different explainers for each task.

This branch of work leverages the training process to acquire proficiency in constructing a DNN-based explainer, utilizing explanation values as training labels~\cite{chuang2023efficient}. 
This innovative strategy empowers the simultaneous generation of explanations for an entire batch of instances through a single, streamlined feedforward operation of the DNN-based explainer.
Exemplifying this progress are innovative approaches like FastSHAP~\cite{jethani2021fastshap}, ViT-Shapley~\cite{covert2022learning}, CORTX~\cite{chuang2023cortx}, LARA~\cite{rong2023efficient, wang2022accelerating}, and HarsanyiNet~\cite{chen2023harsanyinet}. 
To be concrete, FastSHAP and ViT-Shapley adopt a DNN as the explainer to learn the Shapley value, which relies on task-specific training and cannot be transferred across different tasks; and CoRTX arguments the training of DNN-based explainer through a contrastive pre-training framework, and adopt the true Shapley value to fine-tune the explainer.
The DNN-based explainer have played a pivotal role in significantly streamlining the deployment of DNN explanations within real-time applications.
However, they are constrained to explaining individual black box models, and they lack the ability to transfer the explanation across various models and tasks. 
This limitation results in the explanation of various tasks in practical scenarios becoming time- and resource-consuming due to the necessity of training different explainers for each task.

\section{Approximation of Attribution}
\label{appendix:B-sampling-experiment}

\begin{figure}
\centering
\subfigure[Sampling number 16]{
\includegraphics[width=0.3\linewidth]{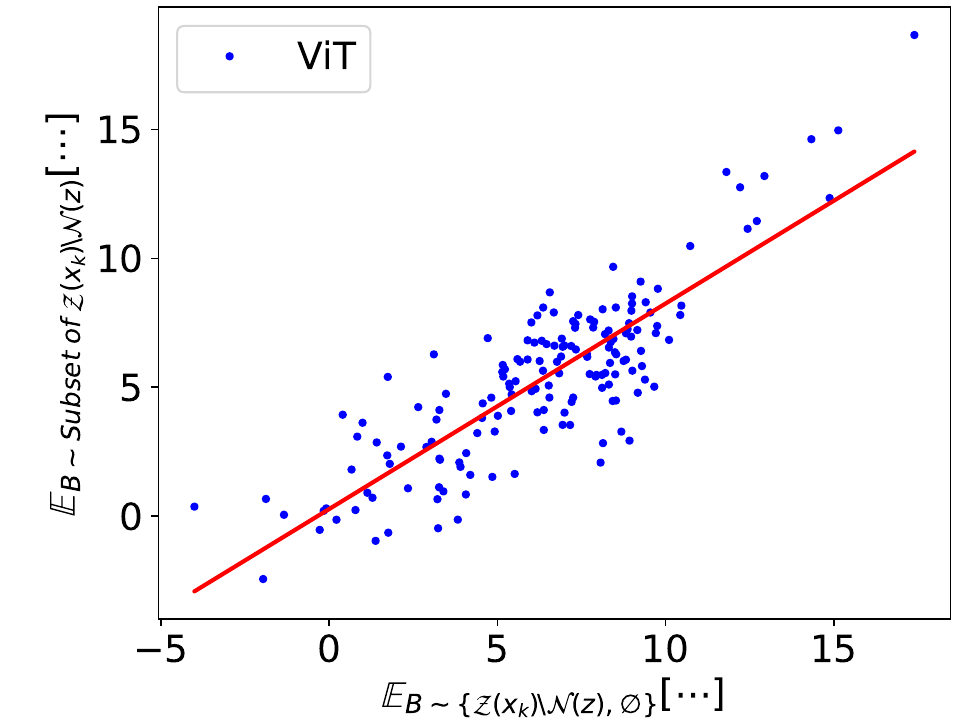}
}
% \!\!\!\!
\subfigure[Sampling number 16]{
\includegraphics[width=0.3\linewidth]{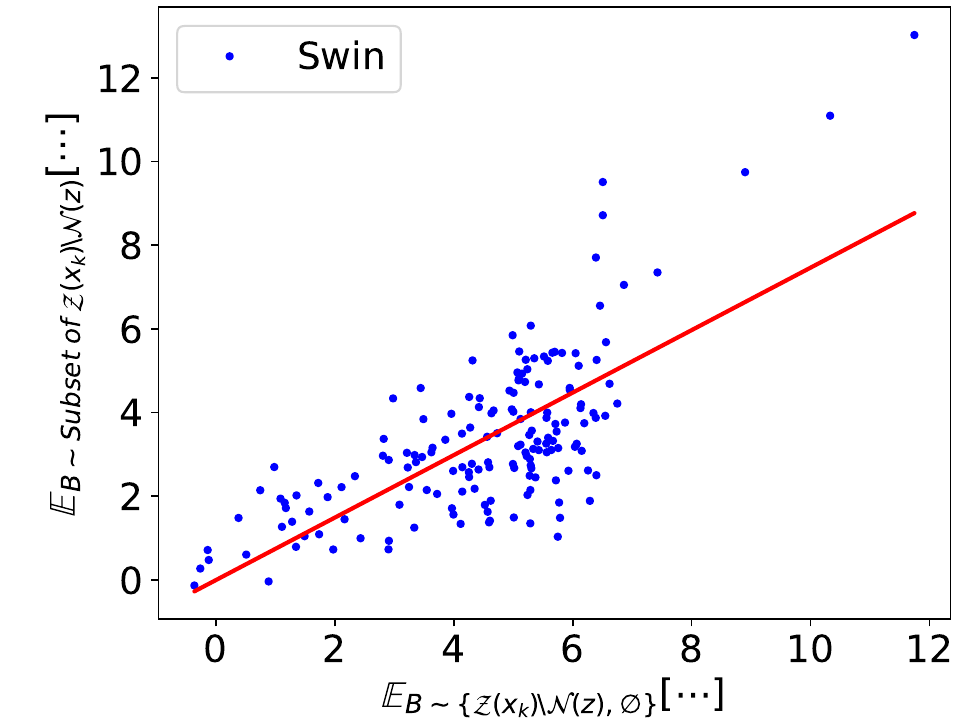}
}
% \!\!\!\!
\subfigure[Sampling number 16]{
\includegraphics[width=0.3\linewidth]{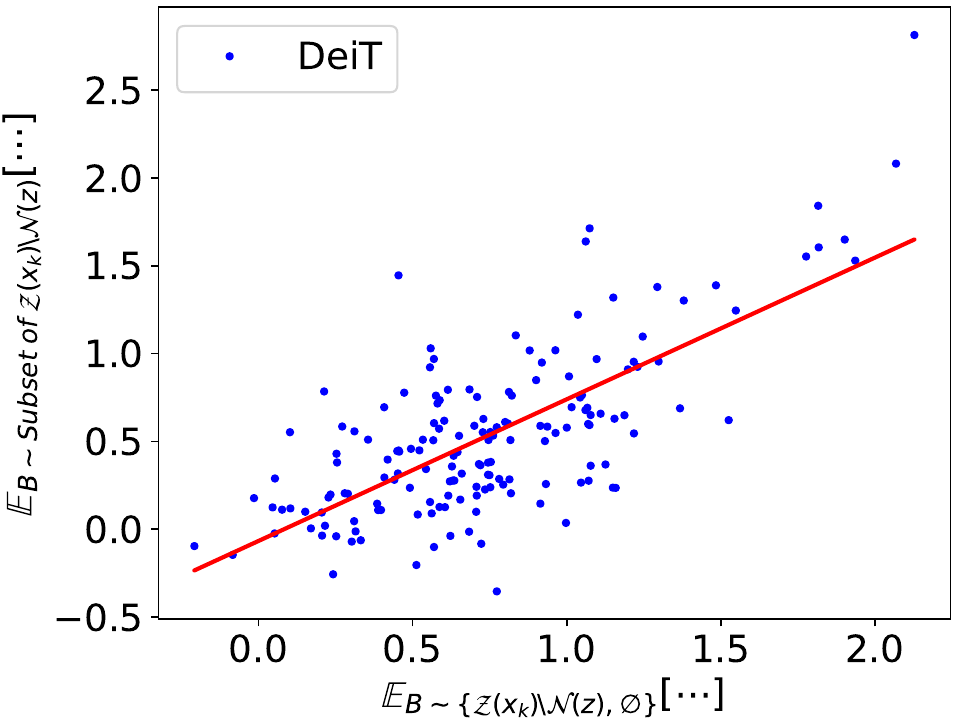}
}
% \!\!\!\!
\subfigure[Sampling number 32]{
\includegraphics[width=0.3\linewidth]{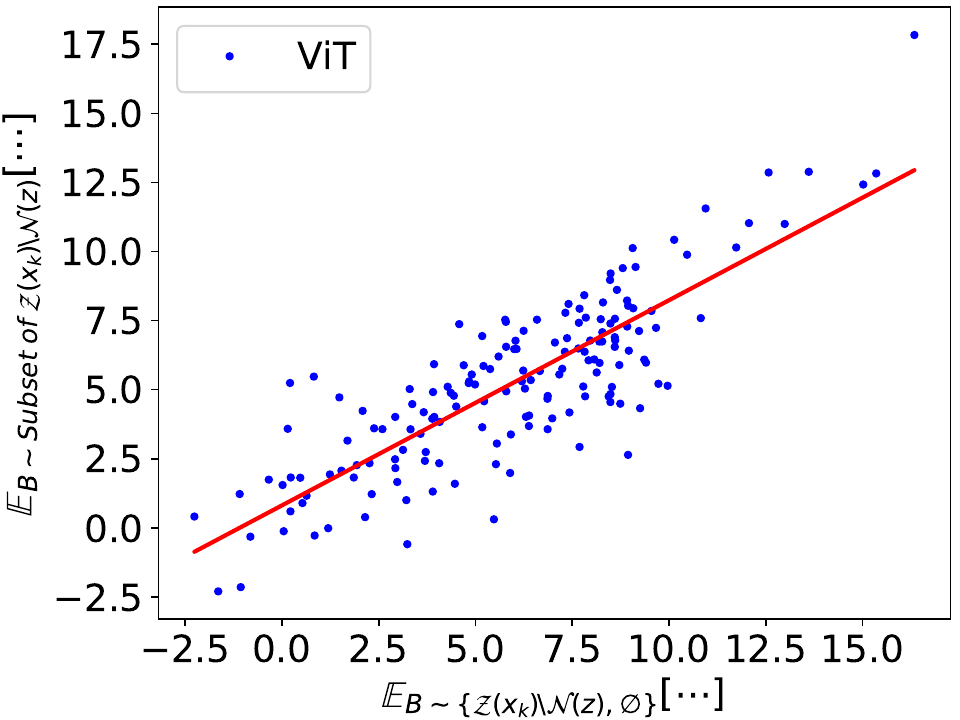}
}
% \!\!\!\!
\subfigure[Sampling number 32]{
\includegraphics[width=0.3\linewidth]{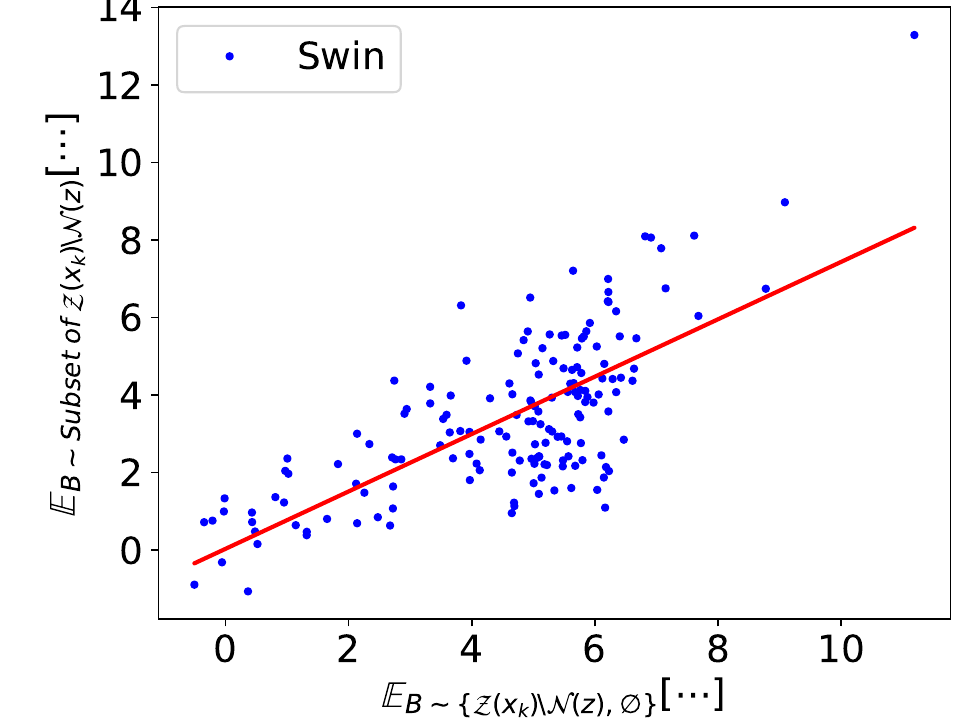}
}
% \!\!\!\!
\subfigure[Sampling number 32]{
\includegraphics[width=0.3\linewidth]{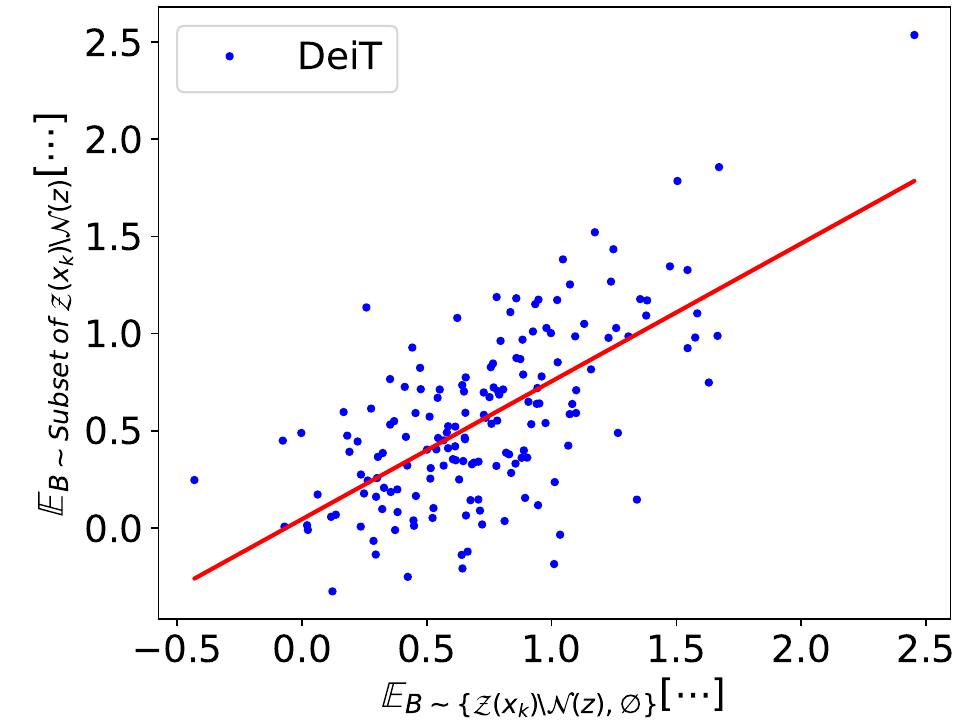}
}
\caption{\label{fig:b_sample} $\mathbb{E}_{B \sim \text{Subset of } \mathcal{Z}(\boldsymbol{x}_k) \setminus \mathcal{N}(z), \varnothing }[ \cdots ]$ versus $\mathbb{E}_{B \sim \{ \mathcal{Z}(\boldsymbol{x}_k) \setminus \mathcal{N}(z), \varnothing \}}[ \cdots ]$, where $\cdots$ is the abbreviation of $-\log f_{t}(B ; \boldsymbol{x}_k, y) + \log f_{t}(\mathcal{N}(z) \cup B ; \boldsymbol{x}_k, y)$; and $f_{t}$ takes the trained \texttt{ViT-Base}(a, d), \texttt{Swin-Base}(b, e), and \texttt{Deit-Base}(c, f) models on the ImageNet dataset.
The sampling number of $B \sim \text{Subset of } \mathcal{Z} (\boldsymbol{x}_k) \setminus \mathcal{N}(z)$ is 16 and 32 for Sub-figures~(a)-(c) and~(d)-(f), respectively.
}
\end{figure}

% The philosophy behind our experiment is to demonstrate the   aligning with the  

We conduct experiments to study the relationship between the approximate attribution $\mathbb{E}_{B \sim \{ \mathcal{Z}(\boldsymbol{x}_k) \setminus \mathcal{N}(z), \varnothing \}}[ \cdots ]$ and its exact value $\mathbb{E}_{B \sim \text{Subset of } \mathcal{Z}(\boldsymbol{x}_k) \setminus \mathcal{N}(z) }[ \cdots ]$ on the \texttt{ImageNet} dataset, where $\cdots$ is the abbreviation of $-\log f_{t}(B ; \boldsymbol{x}_k, y) + \log f_{t}(\mathcal{N}(z) \cup B ; \boldsymbol{x}_k, y)$.
Specifically, we collect the samples of $\mathbb{E}_{B \sim \{ \mathcal{Z}(\boldsymbol{x}_k) \setminus \mathcal{N}(z), \varnothing \}}[ \cdots ]$ and $\mathbb{E}_{B \sim \text{Subset of } \mathcal{Z}(\boldsymbol{x}_k) \setminus \mathcal{N}(z) }[ \cdots ]$, where $\boldsymbol{x}_k$ take 100 instances randomly sampled from the \texttt{ImageNet} dataset; and the target models $f_{t}$ take the \texttt{ViT-Base}(a, d), \texttt{Swin-Base}(b, e), and \texttt{Deit-Base}(c, f) models trained on the \texttt{ImageNet} dataset.
The samples of $\mathbb{E}_{B \sim \text{Subset of } \mathcal{Z}(\boldsymbol{x}_k) \setminus \mathcal{N}(z) }[ \cdots ]$ versus $\mathbb{E}_{B \sim \{ \mathcal{Z}(\boldsymbol{x}_k) \setminus \mathcal{N}(z), \varnothing \}}[ \cdots ]$ is plotted in Figure~\ref{fig:b_sample}.
It is observed that the value of $\mathbb{E}_{B \sim \{ \mathcal{Z}(\boldsymbol{x}_k) \setminus \mathcal{N}(z), \varnothing \}}[ \cdots ]$ after the approximation shows positive linear correlation with $\mathbb{E}_{B \sim \text{Subset of } \mathcal{Z}(\boldsymbol{x}_k) \setminus \mathcal{N}(z) }[ \cdots ]$.
This indicates the approximate value $\mathbb{E}_{B \sim \{ \mathcal{Z}(\boldsymbol{x}_k) \setminus \mathcal{N}(z), \varnothing \}}[ \cdots ]$ can take the place of $\mathbb{E}_{B \sim \text{Subset of } \mathcal{Z}(\boldsymbol{x}_k) \setminus \mathcal{N}(z) }[ \cdots ]$ for the function of attribution.

\section{Proof of Theorem~\ref{theorem:head-attribution}}
\label{appendix:theorem-proof}

We prove Theorem~\ref{theorem:head-attribution} in this section.
\begin{theorem}[Explanation Error Bound]
Given the classifier ${H}_t(\bullet)$ of the downstream task, if the output of classifier ${H}_t(\hat{\textbf{\textsl{g}}}_{k,z}; y)$ and ${H}_t(\hat{\textbf{h}}_{k,z}; y)$ fall within the range of $1-\epsilon \leq \frac{{H}_t(\hat{\textbf{\textsl{g}}}_{k,z}; y)}{{H}_t(\textbf{\textsl{g}}_{k,z}; y)}, \frac{{H}_t(\textbf{h}_{k,z}; y)}{{H}_t(\hat{\textbf{h}}_{k,z}; y)} \leq 1+\epsilon$, then, the upper bound of explanation error is given by 
\begin{equation}
\label{eq:estimation_error_bound_appendix}
\mathbb{E}_{\boldsymbol{x}_k \sim \mathcal{D}_{t}, y \sim \mathcal{Y}_t, {z} \sim \mathcal{Z}(\boldsymbol{x}_k)} | \hat{\phi}_{k, y, {z}} - \phi_{k, y, {z}} | \leq \frac{2 \epsilon}{1 - \epsilon},
\end{equation}
where $\hat{\phi}_{k, y, {z}}$ and $\phi_{k, y, {z}}$ are given by Equation~(\ref{eq:estimate_generic_attribution}) and~(\ref{eq:attribution_trasfer}), respectively; and $\mathcal{D}_{t}$ denotes the downstream dataset.
\end{theorem}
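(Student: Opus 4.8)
The plan is to bound the error pointwise for each triple $(\boldsymbol{x}_k, y, z)$ and then take expectations, since the bound $\frac{2\epsilon}{1-\epsilon}$ does not depend on the particular instance. Recall from Equations~(\ref{eq:attribution_trasfer}) and~(\ref{eq:estimate_generic_attribution}) that
\begin{align}
\phi_{k,y,z} &= \log H_t(\textbf{\textsl{g}}_{k,z}; y) - \log H_t(\textbf{h}_{k,z}; y), \nonumber \\
\hat{\phi}_{k,y,z} &= \log H_t(\hat{\textbf{\textsl{g}}}_{k,z}; y) - \log H_t(\hat{\textbf{h}}_{k,z}; y). \nonumber
\end{align}
Subtracting and regrouping, the difference splits as
\begin{align}
\hat{\phi}_{k,y,z} - \phi_{k,y,z} = \log\frac{H_t(\hat{\textbf{\textsl{g}}}_{k,z}; y)}{H_t(\textbf{\textsl{g}}_{k,z}; y)} + \log\frac{H_t(\textbf{h}_{k,z}; y)}{H_t(\hat{\textbf{h}}_{k,z}; y)}. \nonumber
\end{align}
So the triangle inequality reduces everything to bounding $|\log r|$ for each ratio $r$ lying in $[1-\epsilon, 1+\epsilon]$.

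The key elementary estimate is: if $1-\epsilon \le r \le 1+\epsilon$ with $0 \le \epsilon \ll 1$, then $|\log r| \le \frac{\epsilon}{1-\epsilon}$. To see this, consider the two cases. When $r \ge 1$, $\log r \le r - 1 \le \epsilon \le \frac{\epsilon}{1-\epsilon}$. When $r \le 1$, write $\log r = -\log(1/r)$ with $1/r \ge 1$, so $|\log r| = \log(1/r) \le \frac{1}{r} - 1 = \frac{1-r}{r} \le \frac{\epsilon}{1-\epsilon}$, using $r \ge 1-\epsilon$ in the denominator. Applying this to both ratios and summing gives $|\hat{\phi}_{k,y,z} - \phi_{k,y,z}| \le \frac{2\epsilon}{1-\epsilon}$ pointwise. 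Taking $\mathbb{E}_{\boldsymbol{x}_k \sim \mathcal{D}_t, y \sim \mathcal{Y}_t, z \sim \mathcal{Z}(\boldsymbol{x}_k)}$ of a constant bound preserves it, yielding Equation~(\ref{eq:estimation_error_bound_appendix}).

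There is no genuine obstacle here; the argument is a short chain of $\log(1+u) \le u$ type inequalities. The one point requiring a little care is the direction of the second ratio in the hypothesis — it is stated as $\frac{H_t(\textbf{h}_{k,z}; y)}{H_t(\hat{\textbf{h}}_{k,z}; y)}$ (true over estimated), which is precisely the orientation that makes $+\log\frac{H_t(\textbf{h}_{k,z}; y)}{H_t(\hat{\textbf{h}}_{k,z}; y)}$ appear with the correct sign after the regrouping above, so one must expand $\hat{\phi} - \phi$ carefully and match each log-ratio to its assumed interval. I would also note in passing that the bound is only informative when $\epsilon < 1$ (so $H_t$ outputs are positive, consistent with them being softmax probabilities), and remark that as $\mathcal{L}_\theta(\boldsymbol{x}_k) \to 0$ we have $\epsilon \to 0$ and the bound $\to 0$, recovering the ideal case discussed before the theorem.
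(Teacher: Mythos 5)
Your proof is correct and follows essentially the same route as the paper's: the identical decomposition of $\hat{\phi}_{k,y,z}-\phi_{k,y,z}$ into the two log-ratios $\log\frac{H_t(\hat{\textbf{\textsl{g}}}_{k,z};y)}{H_t(\textbf{\textsl{g}}_{k,z};y)}+\log\frac{H_t(\textbf{h}_{k,z};y)}{H_t(\hat{\textbf{h}}_{k,z};y)}$ followed by the elementary inequality $\log x \le x-1$ applied in both directions. The only cosmetic difference is that you bound $|\log r|\le \frac{\epsilon}{1-\epsilon}$ per ratio and invoke the triangle inequality, whereas the paper separately derives the upper bound $2\epsilon$ and the lower bound $\frac{-2\epsilon}{1-\epsilon}$ on the signed difference and takes the maximum; both yield the same final bound.
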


\begin{proof}
To achieve the explanation error bound, we first have the upper bound of $\hat{\phi}_{k,y,z} - \phi_{k,y,z}$ given by
\begin{align}
\hat{\phi}_{k,y,z} - \phi_{k,y,z}
&= \log H_t(\hat{\textbf{\textsl{g}}}_{k,z}; y) - \log H_t(\textbf{\textsl{g}}_{k,z}; y) + \log H_t(\textbf{\textrm{h}}_{k,z}; y) - \log H_t(\hat{\textbf{\textrm{h}}}_{k,z}; y),
\\
&= \log \frac{H_t(\hat{\textbf{\textsl{g}}}_{k,z}; y)}{H_t(\textbf{\textsl{g}}_{k,z}; y)} + \log \frac{H_t(\textbf{\textrm{h}}_{k,z}; y)}{H_t(\hat{\textbf{\textrm{h}}}_{k,z}; y)} 
\leq \frac{H_t(\hat{\textbf{\textsl{g}}}_{k,z}; y)}{H_t(\textbf{\textsl{g}}_{k,z}; y)} - 1 + \frac{H_t(\textbf{\textrm{h}}_{k,z}; y)}{H_t(\hat{\textbf{\textrm{h}}}_{k,z}; y)} - 1,
\\
\label{eq:error_upper_bound}
&\leq \frac{H_t(\hat{\textbf{\textsl{g}}}_{k,z}; y)}{H_t(\textbf{\textsl{g}}_{k,z}; y)} - 1 + \frac{H_t(\textbf{\textrm{h}}_{k,z}; y)}{H_t(\hat{\textbf{\textrm{h}}}_{k,z}; y)} - 1 
\leq \epsilon + \epsilon,
\end{align}

Then, we have the lower bound of $\hat{\phi}_{k,y,z} - \phi_{k,y,z}$ as follows,
\begin{align}
\hat{\phi}_{k,y,z} - \phi_{k,y,z}
&= -\log \frac{H_t(\textbf{\textsl{g}}_{k,z}; y)}{H_t(\hat{\textbf{\textsl{g}}}_{k,z}; y)} - \log \frac{H_t(\hat{\textbf{\textrm{h}}}_{k,z}; y)}{H_t(\textbf{\textrm{h}}_{k,z}; y)} 
\geq 1 - \frac{H_t(\textbf{\textsl{g}}_{k,z}; y)}{H_t(\hat{\textbf{\textsl{g}}}_{k,z}; y)} + 1 - \frac{H_t(\hat{\textbf{\textrm{h}}}_{k,z}; y)}{H_t(\textbf{\textrm{h}}_{k,z}; y)},
\\
\label{eq:error_lower_bound}
&= 2 - \bigg( \frac{H_t(\textbf{\textsl{g}}_{k,z}; y)}{H_t(\hat{\textbf{\textsl{g}}}_{k,z}; y)} + \frac{H_t(\hat{\textbf{\textrm{h}}}_{k,z}; y)}{H_t(\textbf{\textrm{h}}_{k,z}; y)} \bigg)
\geq 2 - \frac{1}{1-\epsilon} - \frac{1}{1-\epsilon} = \frac{-2 \epsilon}{1-\epsilon}
\end{align}

Combining Equations~(\ref{eq:estimation_error_bound_appendix}) and (\ref{eq:error_lower_bound}), we achieve the upper bound of estimation error given by
\begin{align}
|\hat{\phi}_{k,y,z} - \phi_{k,y,z}| \leq \max \Big\{ 2 \epsilon, \frac{2 \epsilon}{1-\epsilon} \Big\} = \frac{2 \epsilon}{1-\epsilon}.
\end{align}
\end{proof}

% \section{Experiment Setup}
% \label{appendix:exp_setup}

% We provide details about benchmark datasets, baseline methods, evaluation metrics, and implementation details in this section.

% \paragraph{Datasets.} We consider the large-scale ImageNet dataset~\cite{deng2009imagenet} for \Algnameabbr{} pre-training; and the \texttt{Cats-vs-dogs}~\cite{asirra-a-captcha}, \texttt{CIFAR-10}~\cite{krizhevsky2009learning}, and \texttt{Imagenette}~\cite{Howard_Imagenette_2019} datasets for the downstream task of  explanation.
% More details about the datasets are given in Appendix~\ref{appendix:datasets}.

% \footnote{\tiny \url{https://github.com/huggingface/transformers/blob/main/src/transformers/models/vit/modeling_vit.py}}
% \footnote{\tiny \url{https://github.com/huggingface/transformers/blob/main/src/transformers/models/swin/modeling_swin.py}}
% \footnote{\tiny \url{https://github.com/huggingface/transformers/blob/main/src/transformers/models/deit/modeling_deit.py}}

\section{Details about the Datasets}
\label{appendix:datasets}

We consider the large-scale ImageNet dataset~\cite{deng2009imagenet} for \Algnameabbr{} pre-training; and the \texttt{Cats-vs-dogs}~\cite{asirra-a-captcha}, \texttt{CIFAR-10}~\cite{krizhevsky2009learning}, and \texttt{Imagenette}~\cite{Howard_Imagenette_2019} datasets for the downstream task of explanation.
\textbf{ImageNet}~\cite{deng2009imagenet}: A large scale image dataset which has over one million color images covering 1000 categories, where each image has $224 \!\times\! 224$ pixels.
\textbf{Cats-vs-dogs}~\cite{asirra-a-captcha}: A dataset of cats and dogs images. It has 25000 training instances and 12500 testing instances. 
\textbf{CIFAR-10}~\cite{krizhevsky2009learning}: An image dataset with 60,000 color images in 10 different classes, where each image has $32 \!\times\! 32$ pixels.
\textbf{Imagenette}~\cite{Howard_Imagenette_2019}: A benchmark dataset of explainable machine learning for vision models. It contains 10 classes of the images from the Imagenet.

% \textbf{InputXGrad~\cite{shrikumar2016not}:} This method generates the explanation multiplying the input with the gradient with respect to input.

\section{Details about Target Models for Downstream Classification.}
\label{appendix:target_model}

\subsection{Setup of Fine-tuning the Target Models}

For downstream classification tasks, we comprehensively consider three architectures of vision transformers as the backbone encoders, including the \texttt{ViT-Base/Large}~\cite{dosovitskiy2020image}, \texttt{Swin-Base/Large}~\cite{liu2021swin}, \texttt{Deit-Base}~\cite{touvron2021training} transformers.
The classification models~(to be explained) consist of one of the backbone encoders with \texttt{ImageNet} pre-trained weights and a linear classifier.
For the task-specific fine-tuning of target models, we consider two mechanisms: \emph{classifier-tuning} and \emph{full-fine-tuning}.
Specifically, the classifier-tuning follows the transfer learning setting~\cite{chilamkurthy2017transfer, he2020momentum, chen2020simple} to freeze the parameters of backbone encoder during the fine-tuning; and the full-fine-tuning updates all parameters during the finetuning.
Note that the classifier-tuning can not only be more efficient but also prevent the over-fitting problem on downstream data due to fewer trainable parameters~\cite{sun2022singular}.
We consider the classifier-tuning for most of our experiments including Sections~\ref{sec:fidelity_eval}, \ref{sec:eval_transfer}, \ref{sec:ablation_study}, and \ref{sec:case_study}; and consider the full-fine-tuning in Section~\ref{sec:adaptation_finetune_backbone}; while these two mechanisms yield the same result for Section~\ref{sec:eval_latency}.
The hyper-parameters of task-specific fine-tuning are given in Appendix~\ref{appendix:target-model-hyperparam}.

\subsection{Hyper-parameter Setting of Fine-tuning the Target Models on Downstream Tasks}
\label{appendix:target-model-hyperparam}

The downstream classification models consist of the backbones of \texttt{ViT-Base/Large}, \texttt{Swin-Base/Large}, \texttt{Deit-Base} transformers, and a linear classifier.
The hyper-parameters of fine-tuning the classification models on the \texttt{Cats-vs-dogs}, \texttt{CIFAR-10}, and \texttt{Imagenette} datasets are given in Table~\ref{tab:target_model_hyperparam}.
After the fine-tuning, the classification accuracy on each downstream dataset is given in Table~\ref{tab:target_model_accurcy}.

\begin{table}[H]
    \centering
    \caption{Hyper-parameters of fine-tuning the target model on downstream datasets.}
    \begin{tabular}{lccc}
    \toprule
         Datasets & \texttt{Cats-vs-dogs} & \texttt{CIFAR-10} & \texttt{Imagenette} \\
         % Datasets & {Census Income} & {German Credit} & {GAS Sensor} & {NATICUSdroid} \\
    \midrule
         Target backbone & \multicolumn{3}{c}{\texttt{ViT-Base}, 
         \texttt{Swin-Base}, and \texttt{Deit-Base}} \\
         Classifier & \multicolumn{3}{c}{Linear classifier} \\
         Fine-tuning mechanism & \multicolumn{3}{c}{classifier-tuning and full-fine-tuning} \\
         Optimizer & \multicolumn{3}{c}{ADAM} \\
         Learning rate & \multicolumn{3}{c}{$2 \times 10^{-4}$} \\
         Mini-batch size & \multicolumn{3}{c}{256} \\
         Scheduler & \multicolumn{3}{c}{Linear} \\
         Warm-up-ratio & \multicolumn{3}{c}{0.05} \\
         Weight-decay & \multicolumn{3}{c}{0.05} \\
         Epoch & \multicolumn{3}{c}{5} \\
    \bottomrule     
    \end{tabular}
    \label{tab:target_model_hyperparam}
\end{table}

\begin{table}[H]
\centering
\caption{Accuracy of the target model on downstream datasets.}
\begin{tabular}{l|cc|cc|cc}
\toprule
 Model Architecture & \multicolumn{2}{c|}{\texttt{ViT-Base}} & \multicolumn{2}{c|}{\texttt{Swin-Base}} & \multicolumn{2}{c}{\texttt{Deit-Base}} \\
\midrule
Tunable parameters & $\theta_H$ & $\theta_H, \theta_G$ & $\theta_H$ & $\theta_H, \theta_G$ & $\theta_H$ & $\theta_H, \theta_G$ \\ 
\midrule
\texttt{Cats-vs-dogs} & 99.6\% & 99.5\% & 99.6\% & 99.7\% & 99.4\% & 98.1\% \\
\texttt{Imagenette} & 99.3\% & 99.3\% & 99.8\% & 99.7\%  & 99.8\% & 99.4\% \\
\texttt{CIFAR-10} & 92.2\% & 98.9\% & 97.0\% & 98.6\% & 94.2\% & 98.1\% \\
\bottomrule
\end{tabular}
\label{tab:target_model_accurcy}
\end{table}

\section{Details about the Baseline Methods}
\label{appendix:baseline}

% \paragraph{Baseline Methods.}
We consider seven baseline methods for comparison, which include general
explanation methods: \texttt{LIME}~\cite{ribeiro2016should}, \texttt{IG}~\cite{sundararajan2017axiomatic}, \texttt{RISE}~\cite{petsiuk2018rise}, and \texttt{DeepLift}~\cite{ancona2017towards}; Shapley explanation methods: \texttt{KernelSHAP}~(\texttt{KS})~\cite{lundberg2017unified}, and \texttt{GradShap}~\cite{lundberg2017unified}; and DNN-based explainer: \texttt{ViT-Shapley}~\cite{covert2022learning} in our experiment. 
% More details about the baseline methods are given in Appendix~\ref{appendix:baseline}.

% We give the details about the baseline methods in this section.
\textbf{ViT-Shapley:} This work adopts vision transformers as the explainer to learn the Shapley value.
This work requires task-specific data to train the explainer.
\textbf{RISE:} RISE randomly perturbs the input, and average all the masks weighted by the perturbed DNN output for the final saliency map. 
The sampling number takes the default value $50$.
\textbf{IG:} Integrated Gradients estimates the explanation by the integral of the gradients of DNN output with respect to the inputs, along the pathway from specified references to the inputs.
\textbf{DeepLift:} DeepLift generates the explanation by decomposing DNN output on a specific input by backpropagating the contributions of all neurons in the network to every feature of the input.
\textbf{KernelSHAP:} KernelSHAP approximates the Shapley value by learning an explainable surrogate (linear) model based on the DNN output of reference input for each feature. The sampling number takes the default value $25$ for each instance according to the \texttt{captum.ai}~\cite{kokhlikyan2020captum}.
\textbf{GradShap:} GradShap estimates the importance features by computing the expectations of gradients by randomly sampling from the distribution of references.
\textbf{LIME~\cite{ribeiro2016should}:} LIME generates the explanation by sampling points around the input instance and using DNN output at these points to learn a surrogate (linear) model. The sampling number takes the default value $25$ according to the \texttt{captum.ai}.
For implementation, we take the IG, DeepLift, and GradShap algorithms on the \texttt{captum.ai}, where the \texttt{multiply\_by\_inputs} factor takes false to achieve the local attribution for each instance.

\section{Evaluation Metrics}
\label{appendix:eval_metric}

\paragraph{Fidelity-sparsity Curve:}
\label{appendix:eval_fidelity}
We consider the fidelity to evaluate the explanation following existing work~\cite{yang2022tutorial, chuang2023cortx}.
Specifically, the fidelity evaluates the explanation via \textit{removing the important or trivial patches} from the input instance and \textit{collecting the prediction difference of the target model} $f_{t}$. 
These two perspectives of evluation are formalized into $\mathrm{Fidelity}^+$ and $\mathrm{Fidelity}^-$, respectively.
Specifically, provided a subset of patches $\mathcal{S}^* \subseteq \mathcal{Z}(\boldsymbol{x}_k)$ that are important to the target model $f_{t}$ by an explanation method,
the $\mathrm{Fidelity}^+$ and $\mathrm{Fidelity}^-$ evaluates the explanation following
\begin{align}
&\uparrow \mathrm{Fidelity}^+ = \frac{1}{|\mathcal{D}_{\text{task}}|} \sum_{\boldsymbol{x} \in \mathcal{D}_{\text{task}}} f_{t}(\mathcal{Z}(\boldsymbol{x}_k); \boldsymbol{x}_k, y) - f_{t}( \mathcal{Z}(\boldsymbol{x}_k) \setminus \mathcal{S}^*; \boldsymbol{x}_k, y ),
\nonumber
\\
&\downarrow \mathrm{Fidelity}^- = \frac{1}{|\mathcal{D}_{\text{task}}|} \sum_{\boldsymbol{x} \in \mathcal{D}_{\text{task}}} f_{t}(\mathcal{Z}(\boldsymbol{x}_k); \boldsymbol{x}_k, y) - f_{t}( \mathcal{S}^*; \boldsymbol{x}_k, y).
\nonumber
\end{align}
Higher $\mathrm{Fidelity}^+$ indicates a better explanation for prediction $y$, since the truly important patches of image $\boldsymbol{x}_k$ have been removed, leading to a significant difference of model prediction.
Moreover, lower $\mathrm{Fidelity}^-$ implies a better explanation for prediction $y$, since the truly important patches have been preserved in $\mathcal{S}^*$ to keep the prediction similar to the original one. 
The fidelity should be compared at the same level of sparsity $|\mathcal{S}^*|/|\mathcal{U}|$.
Consequently, we consider the evaluation of fidelity versus the sparsity in most cases.

% (\mathcal{S}^*, \boldsymbol{x}_k, y)

\paragraph{Fidelity-sparsity-AUC Metric}
\label{appendix:auc-eval-metric}
% We give details about the metrics $\mathrm{Fidelity}^+$-sparsity-AUC and $\mathrm{Fidelity}^-$-sparsity-AUC in this section.
To streamline our evaluation, we simplify the assessment of fidelity-sparsity curves by calculating its Area Under the Curve~(AUC) over the sparsity from zero to one, which aligns with the average fidelity value. In the last paragraph, we have shown that higher $\mathrm{Fidelity}^+$ and lower $\mathrm{Fidelity}^-$ at the same level of sparsity indicate more faithful explanation. 
To streamline the evaluation, the assessment of fidelity-sparsity curves can be simplified into its Area Under the Curve (AUC) over the sparsity from zero to one, as shown in Figures~\ref{fig:auc-illustration}~(a) and (b).
The Fidelity-sparsity-AUC aligns with the average fidelity value. 
Specifically, a higher $\mathrm{Fidelity}^+$-sparsity-AUC~($\uparrow$) indicates better $\mathrm{Fidelity}^+$ performance across most sparsity levels, reflecting a more faithful explanation. 
Similarly, a lower $\mathrm{Fidelity}^-$-sparsity-AUC signifies a more faithful explanation
For the given example in Figures~\ref{fig:auc-illustration}~(a) and (b), explanation A is more faithful than B.

\begin{figure}[H]
\centering
\subfigure[]{
    \centering
	\begin{minipage}[t]{0.3\linewidth}
		\includegraphics[width=0.99\linewidth]{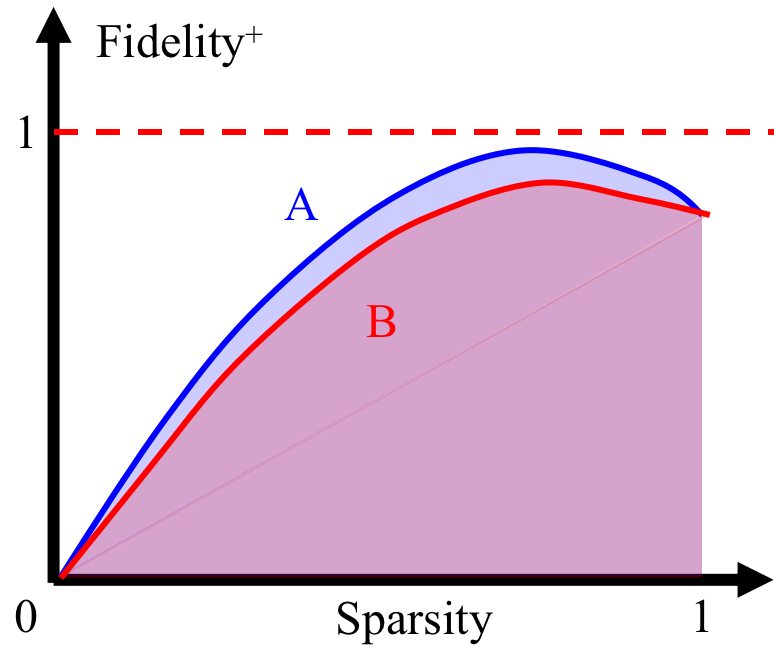}
	\end{minipage}

}
\quad\quad
\subfigure[]{
    \centering
	\begin{minipage}[t]{0.3\linewidth}
		\includegraphics[width=0.99\linewidth]{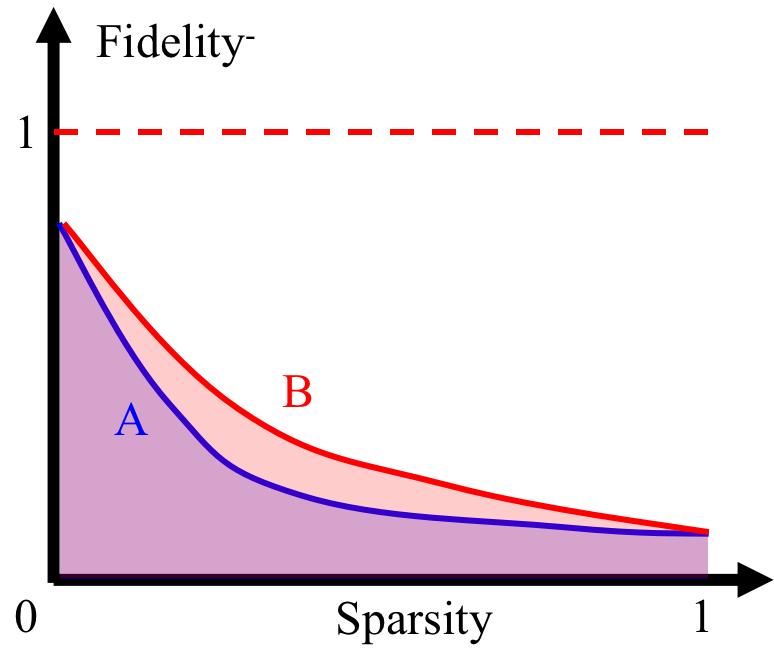}
	\end{minipage}

}

    \caption{Illustration of $\mathrm{Fidelity}^+$-sparsity-AUC~(a) and $\mathrm{Fidelity}^-$-sparsity-AUC~(b)}
    \label{fig:auc-illustration}

\end{figure}

% \paragraph{Hyper-parameter Settings.}
% The experiment follows the pipeline of \Algnameabbr{} pre-training, explanation generation and evaluation on multiple downstream datasets.
% Specifically, \Algnameabbr{} adopts the Mask-AutoEncoder~\cite{he2022masked} as the backbone, followed by a multiple Feed-Forward~(FFN) layers\footnote{\scriptsize A Mask-AutoEncoder consists of a ViT encoder followed by a ViT decoder; and the FFN layers are widely used in Transformers, consisting of Linear layers, Layer-norm, and activation function. More details about the architecture are given in Appendix~\ref{appendix:implement-details}} to generate the meta-attribution.
% More details about the explainer backbone, explainer architecture and hyper-parameters of \Algnameabbr{} pre-training are given in Appendix~\ref{appendix:implement-details}.
% When deploying \Algnameabbr{} to explaining downstream tasks, the explanation is aligned with the classification result of the target model.

\section{Implementation Details about \Algnameabbr{}}
\label{appendix:implement-details}

% We give the details about \Algnameabbr{} pre-training and target model training on each downstream datasets in \ref{appendix:alg-hyperparam} and \ref{appendix:target-model-hyperparam}, respectively.

% \subsection{\Algnameabbr{} Pretraining}
% \label{appendix:alg-hyperparam}

\paragraph{Architecture of Generic Explainer.}
The architecture of the transferable explainer is shown in Figure~\ref{fig:explainer-architecture}~(a).
Specifically, the explainer takes the \texttt{Mask-AutoEncoder-Base}~\cite{he2022masked} for the backbone.
As shown in Figure~\ref{fig:mae-architecture}, the \texttt{Mask-AutoEncoder-Base} architecture is a pipeline of 12-layer ViT encoder and 8-layer ViT decoder, where the input and output shape are $[\mathrm{BS}, 3, 224, 224]$ and $[\mathrm{BS}, \mathrm{P}, \mathrm{P}, 768]$, respectively.
More details about the \texttt{Mask-AutoEncoder-Base} can be referred to its source code\footnote{\scriptsize \url{https://github.com/huggingface/transformers/blob/main/src/transformers/models/vit_mae/modeling_vit_mae.py}}.

Since the output shape of the \texttt{Mask-AutoEncoder-Base} is $[\mathrm{BS}, \mathrm{P} \times \mathrm{P}, 768]$ is not matched with that of the meta-attribution $[\mathrm{BS}, \mathrm{P} \times \mathrm{P}, \mathrm{D}]$, where $\mathrm{BS}$ denotes the mini-batch size.
We adopt $n\times$ FFN-layers as explainer heads to map the output tensor of the \texttt{Mask-AutoEncoder-Base} into meta-attribution, where we found $n=17$ enables the expalainer to have strong generalization ability to explain various downstream tasks.
The structure of an explainer head is given in Figure~\ref{fig:explainer-architecture}~(b).
The first explainer head does not have the skip connection due to the mismatch of tensor shapes.
The last explainer head does not have the GELU activation.
% In our experiment, we found 17-layer explainer head enables to learn the generic attribution.

\paragraph{Backbone Encoder.}
We comprehensively consider three backbone encoders for during the pre-training of transferable explainer, including the \texttt{ViT-Base/Large}, \texttt{Swin-Base/Large}, \texttt{Deit-Base} transformers.
Their pre-trained weights are loaded from the HuggingFace library~\cite{wolf2020transformers}.
The hyper-parameter setting of \Algnameabbr{} pre-training is given in Table~\ref{tab:alg_hyperparam}.

% These have been implemented in HuggingFace framework~\cite{wolf2020transformers} into the classes $\texttt{ViTModel}$, $\texttt{SwinModel}$, and $\texttt{DeiTModel}$, respectively.
% The hyper-parameter settings of \Algnameabbr{} pre-training on the ImageNet dataset are given in Table~\ref{tab:alg_hyperparam}.

\begin{table}[H]
    \centering
    \caption{Hyper-parameters of \Algnameabbr{} pre-training on the ImageNet dataset.}
    \begin{tabular}{lccc}
    \toprule
         Target Encoder & \texttt{ViT-Base} & \texttt{Swin-Base} & \texttt{DeiT-Base} \\
         % Datasets & {Census Income} & {German Credit} & {GAS Sensor} & {NATICUSdroid} \\
    \midrule
         Explainer Architecture & \multicolumn{3}{c}{Figure~\ref{fig:explainer-architecture}} \\
         Pixel \# per image $\mathrm{W} \times \mathrm{W}$ & \multicolumn{3}{c}{$224 \times 224$} \\
         Patch \# per image $\mathrm{P} \times \mathrm{P}$ & \multicolumn{3}{c}{$14 \times 14$} \\
         Pixel \# per patch $\mathrm{C} \times \mathrm{C}$ & \multicolumn{3}{c}{$16 \times 16$} \\
         Shape of $\textbf{\textsl{g}}_k$ and $\textbf{h}_k$ & $14 \times 14 \times 768$ & $14 \times 14 \times 1024$ & $14 \times 14 \times 768$ \\
         Optimizer & \multicolumn{3}{c}{ADAM} \\
         Learning rate & \multicolumn{3}{c}{$1\times 10^{-3}$} \\
         Mini-batch size & \multicolumn{3}{c}{64 per GPU $\times$ 4 GPUs} \\
         Scheduler & \multicolumn{3}{c}{CosineAnnealingLR} \\
         Warm-up-ratio & \multicolumn{3}{c}{0.05} \\
         Weight-decay & \multicolumn{3}{c}{0.05} \\
         Training steps & \multicolumn{3}{c}{$2 \times 10^{5}$} \\
         Neighbor patches & \multicolumn{3}{c}{0-, 1-, 2-hop neighbor patches} \\
    \bottomrule     
    \end{tabular}
    \label{tab:alg_hyperparam}
\end{table}

% To demonstrate the effectiveness of our proposed framework on a wide range of vision tasks, we consider three architectures of vision transformers as the target models to be explained, including

% \section{Hyper-parameter Setting of \Algnameabbr{} Pre-training}
% We give the hyper-parameters of fine-tuning the target models in Table~\ref{tab:alg_hyperparam}.

% \section{Hyper-parameter Setting of target model Fine-tuning}
% \label{appendix:target-model-hyperparam}

% We give the hyper-parameters of fine-tuning the target models in Table~\ref{tab:target_model_hyperparam}.

\begin{figure}[t]
\centering
\subfigure[]{
\includegraphics[width=0.45\linewidth]{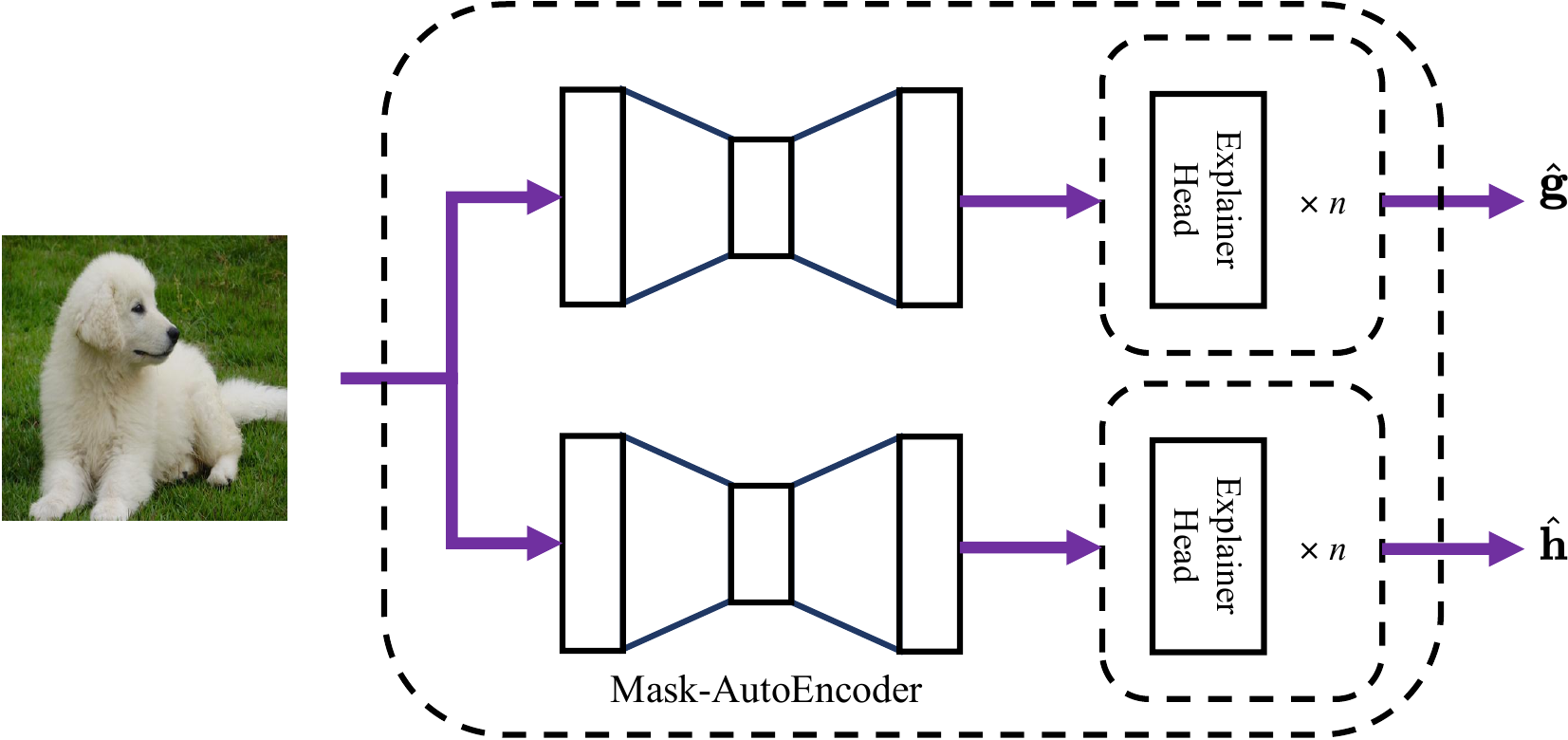}
}
\subfigure[]{
\raisebox{0.2\height}{\includegraphics[width=0.45\linewidth]{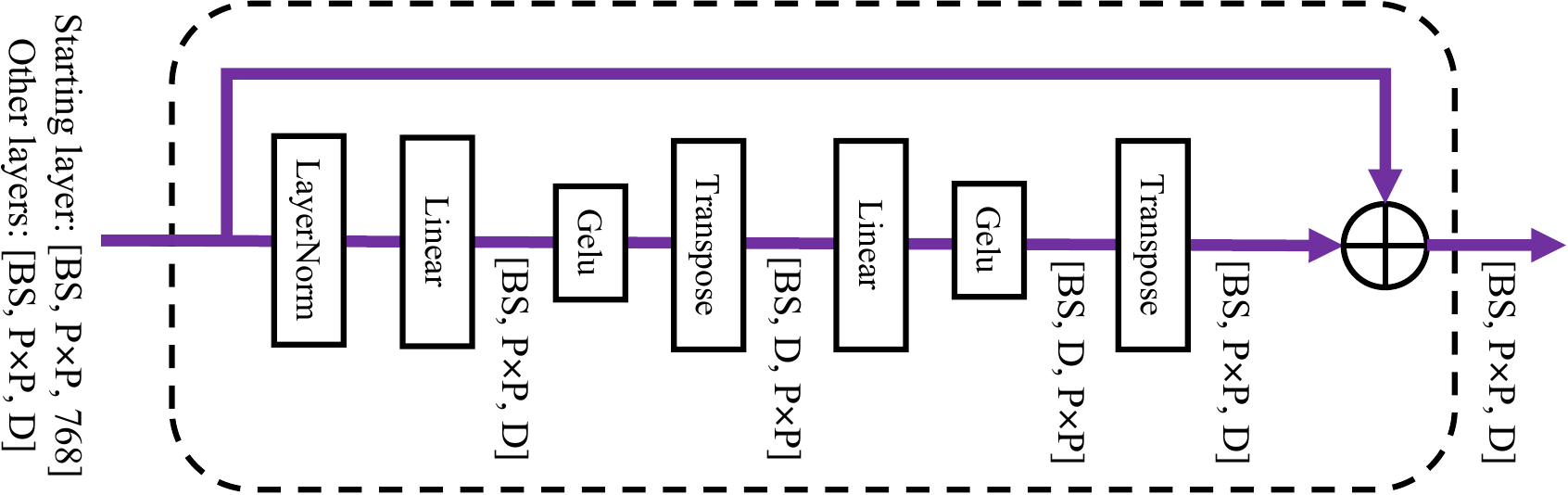}}
}
\caption{Transferable explainer architecture. (a) Explainer architecture. (b) FFN-layers for the explainer head.}
\label{fig:explainer-architecture}
\end{figure}

\begin{figure}[t]
    \centering
    \includegraphics[width=0.85\linewidth]{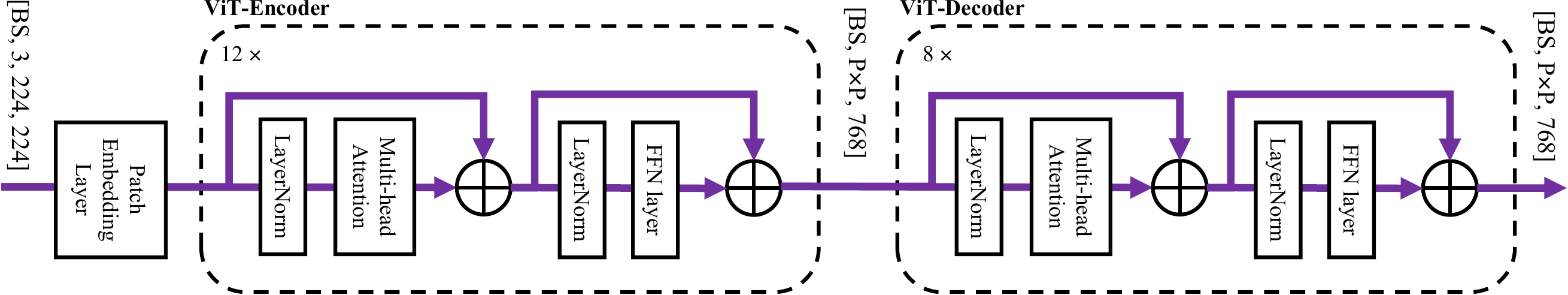}
    \caption{Structure of Mask-Autoencoder.}
    \label{fig:mae-architecture}
\end{figure}

\begin{figure}[H]
\flushright
\subfigure[]{
\includegraphics[width=0.15\linewidth]{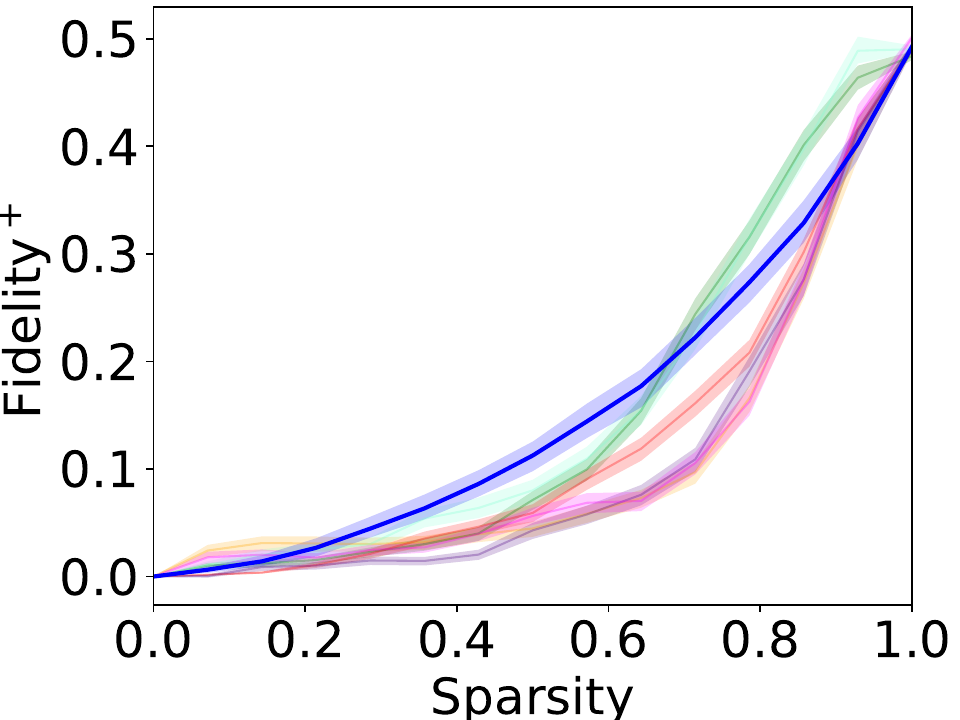}
}
\!\!\!\!
\subfigure[]{
\includegraphics[width=0.15\linewidth]{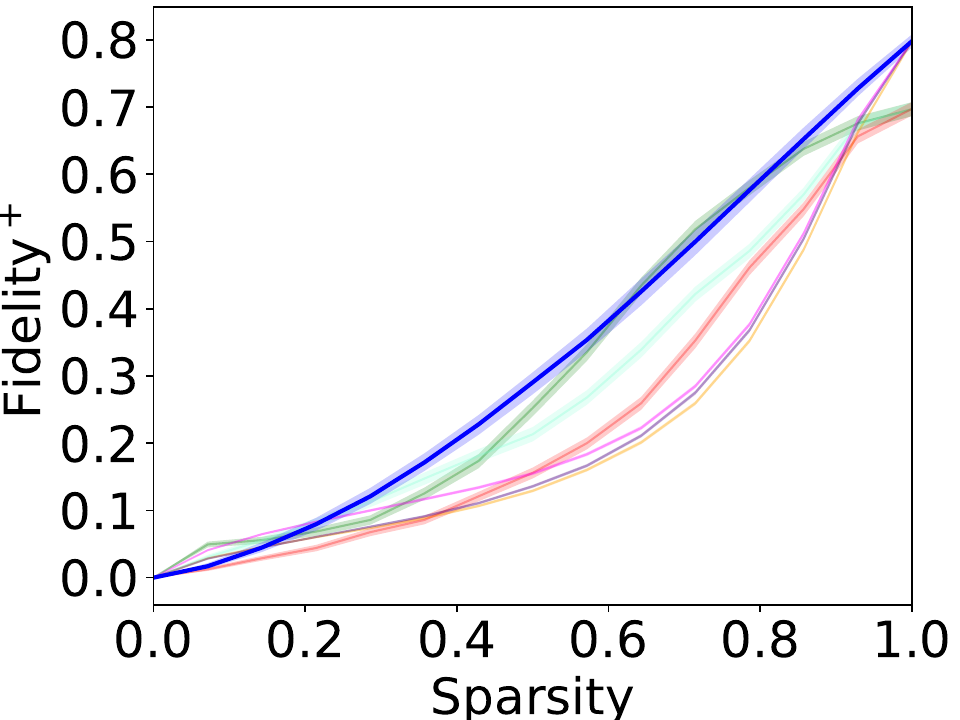}
}
\!\!\!\!
\subfigure[]{
\includegraphics[width=0.15\linewidth]{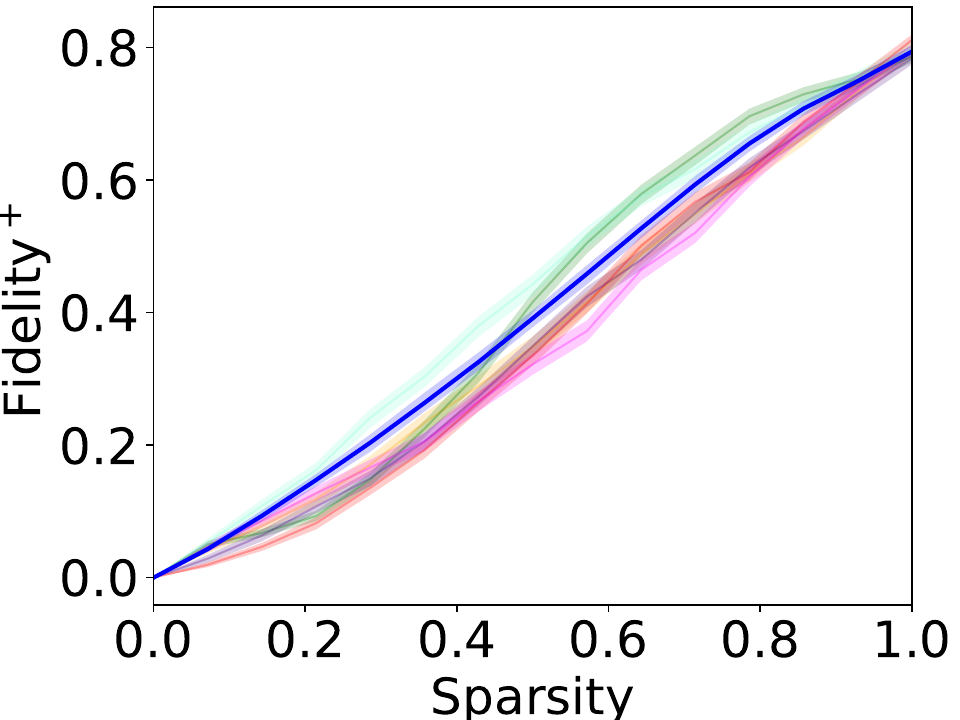}
}
\!\!\!\!
\subfigure[]{
\includegraphics[width=0.15\linewidth]{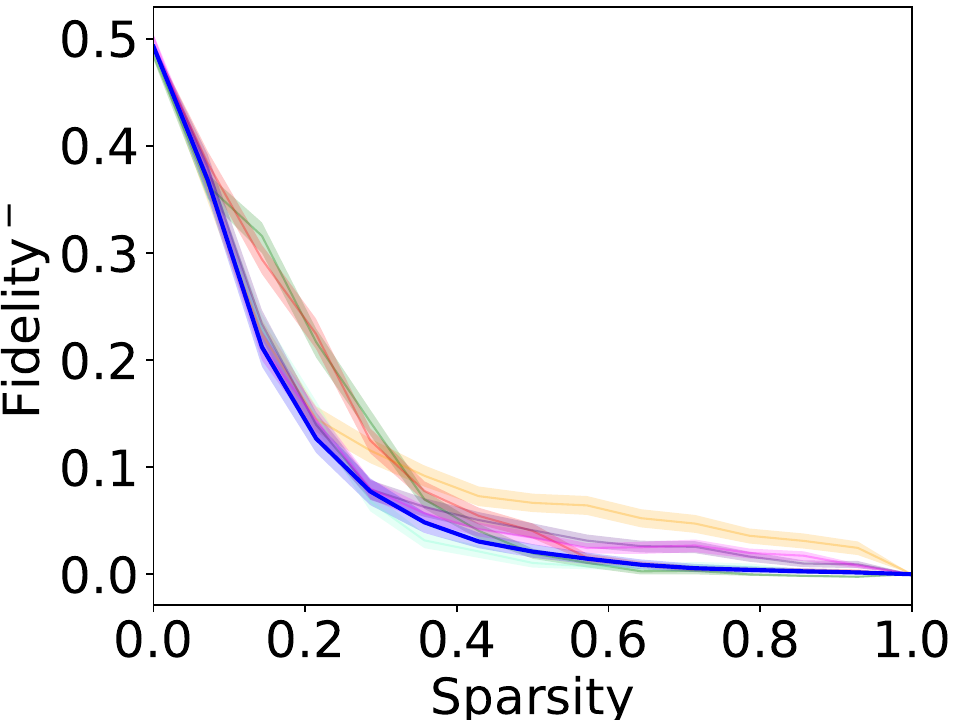}
}
\!\!\!\!
\subfigure[]{
\includegraphics[width=0.15\linewidth]{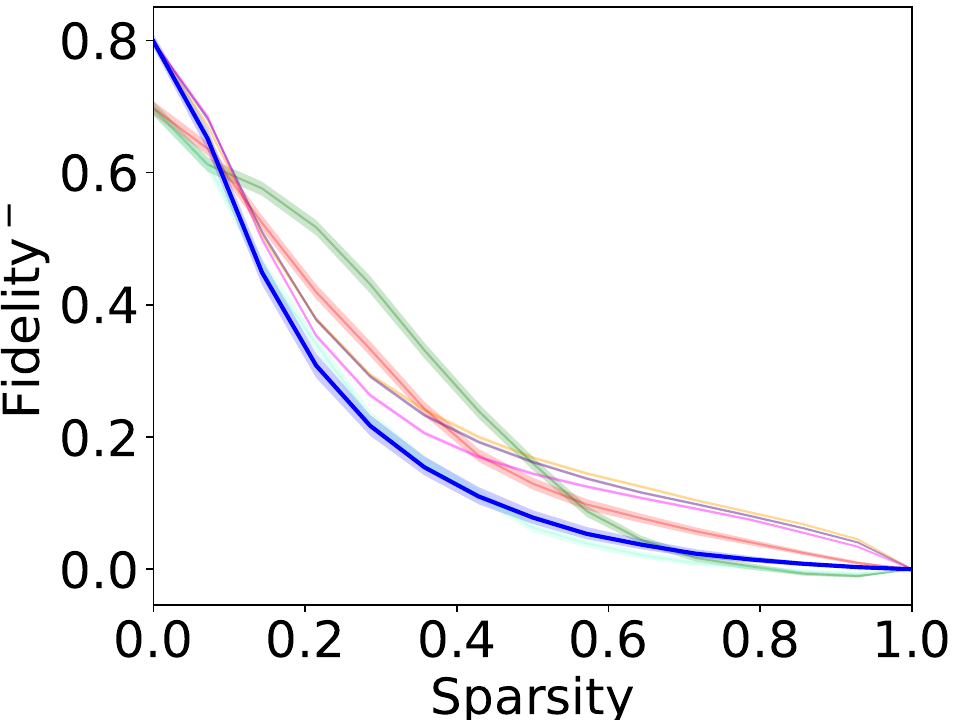}
}
\!\!\!\!
\subfigure[]{
\includegraphics[width=0.15\linewidth]{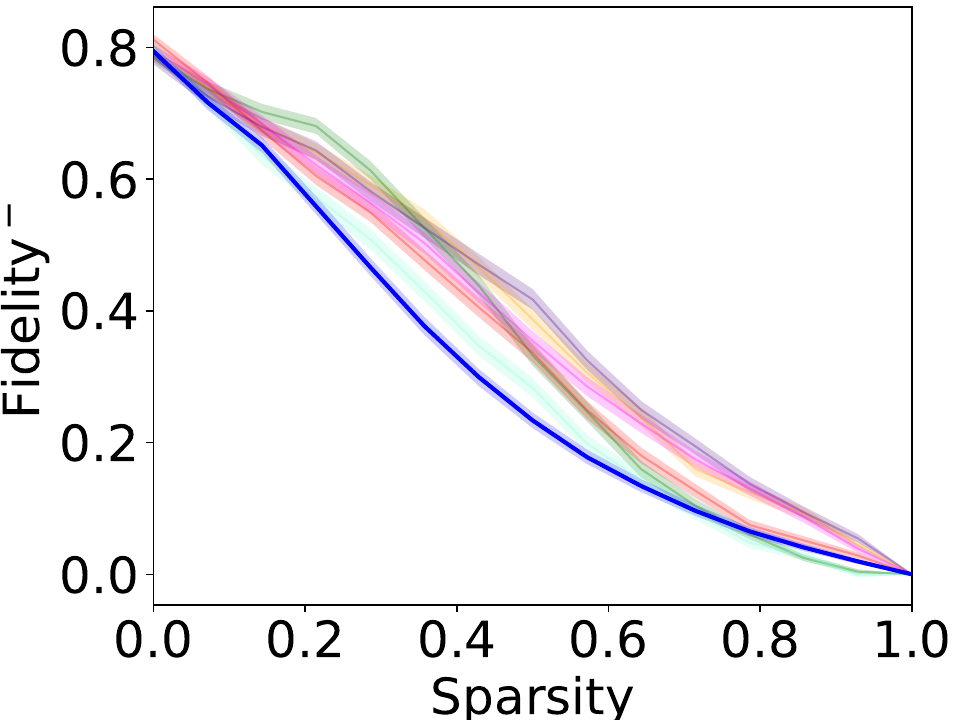}
}
\!\!\!\!
\subfigure[]{
\includegraphics[width=0.15\linewidth]{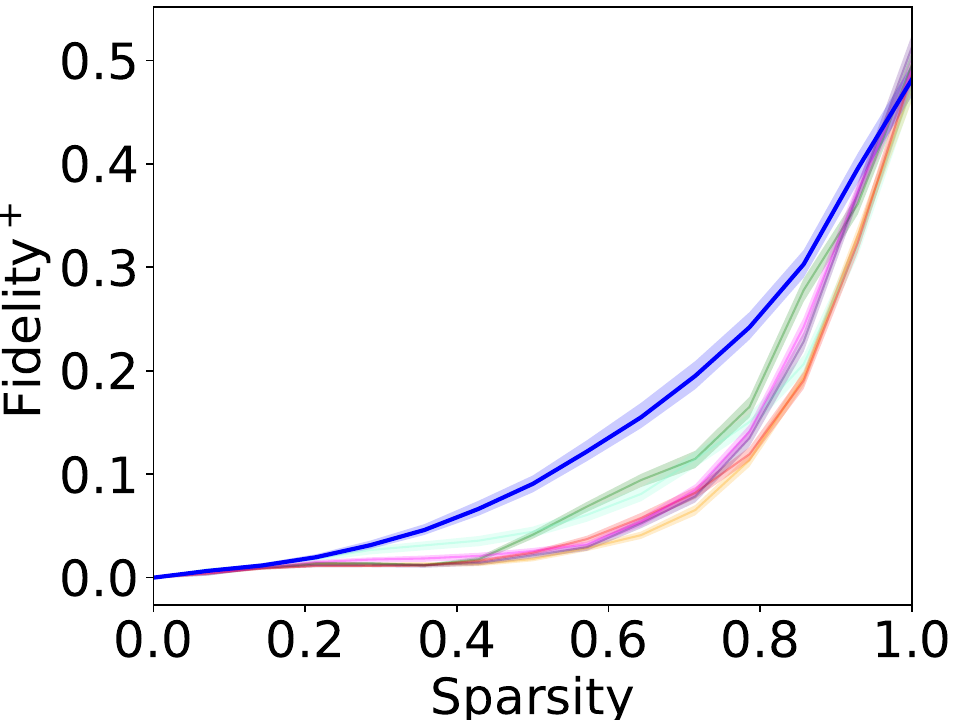}
}
\!\!\!\!
\subfigure[]{
\includegraphics[width=0.15\linewidth]{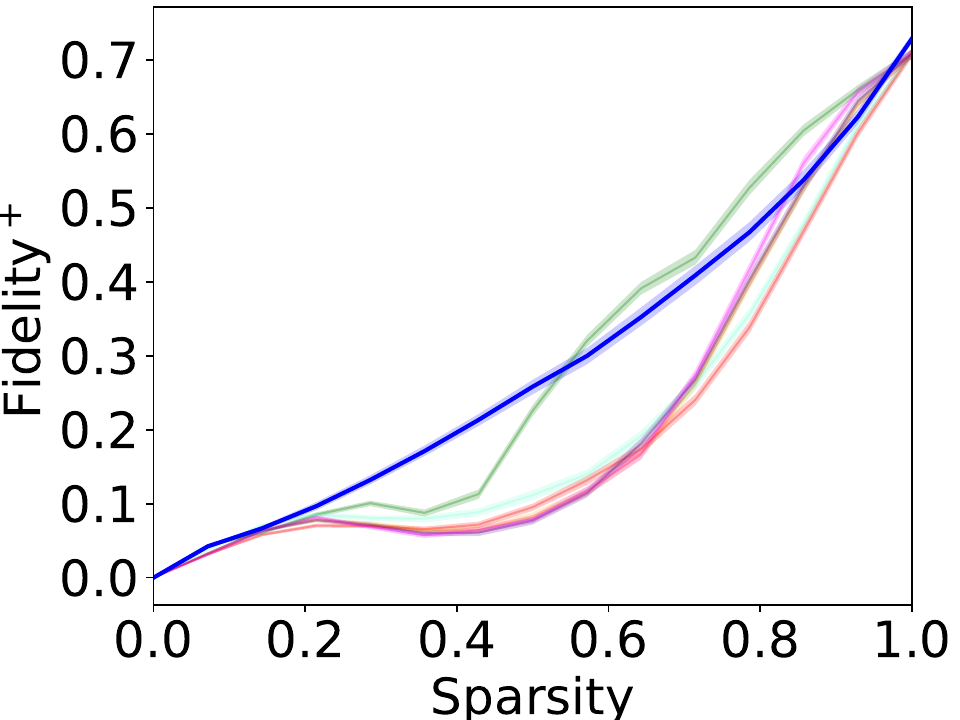}
}
\!\!\!\!
\subfigure[]{
\includegraphics[width=0.15\linewidth]{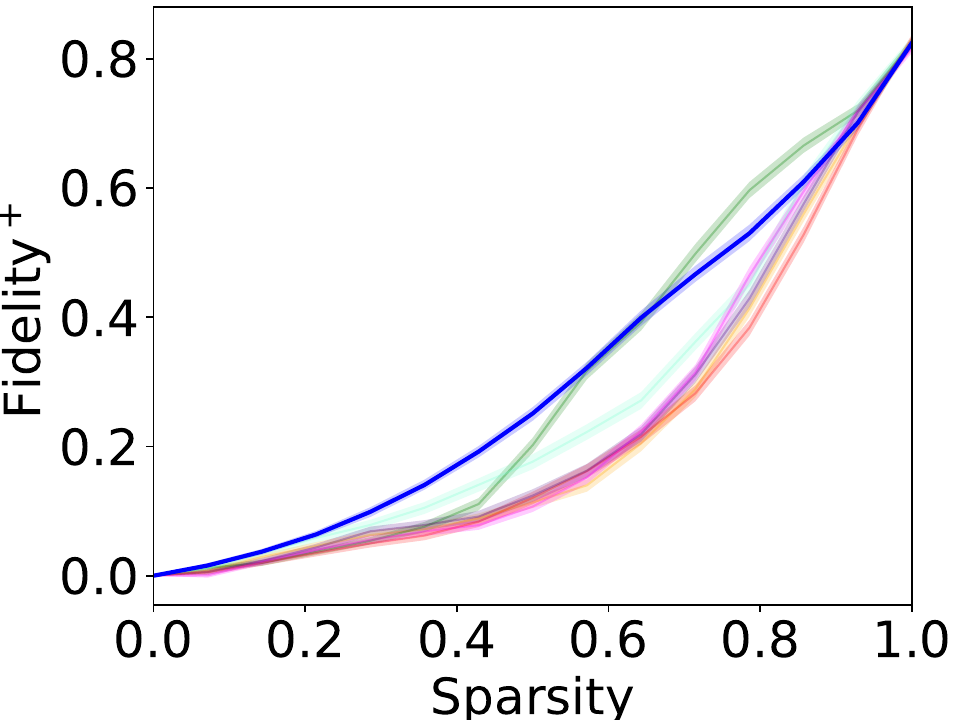}
}
\!\!\!\!
\subfigure[]{
\includegraphics[width=0.15\linewidth]{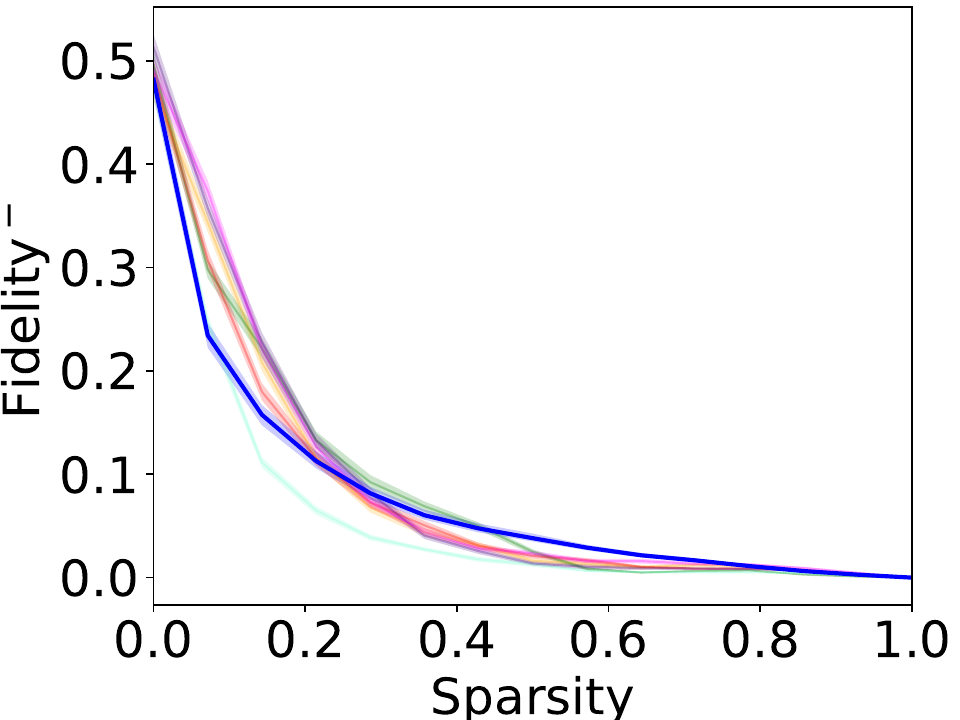}
}
\!\!\!\!
\subfigure[]{
\includegraphics[width=0.15\linewidth]{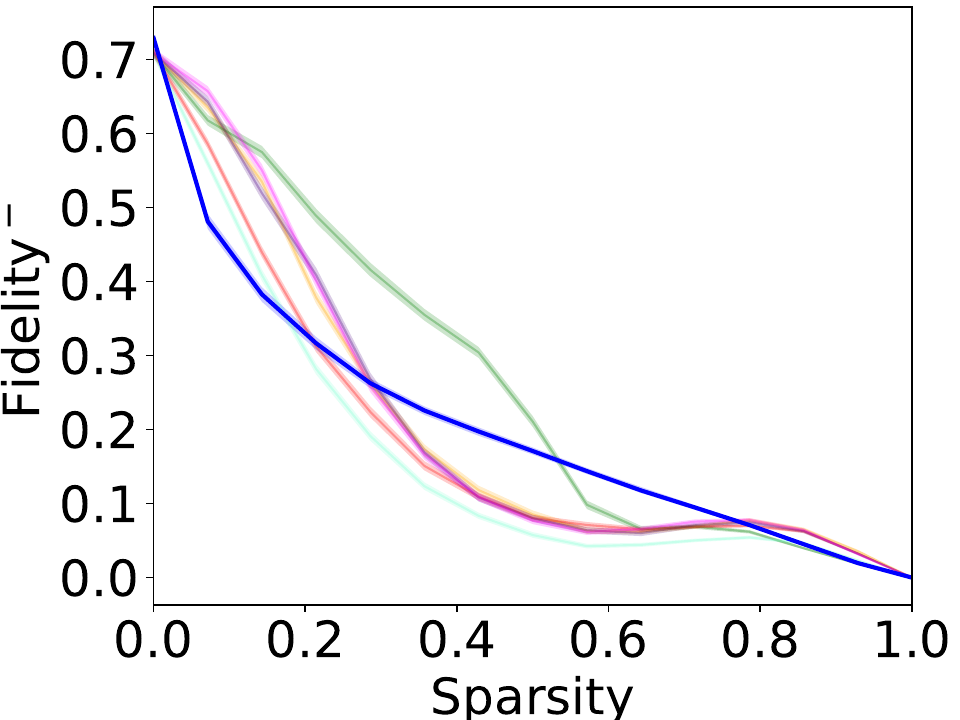}
}
\!\!\!\!
\subfigure[]{
\includegraphics[width=0.15\linewidth]{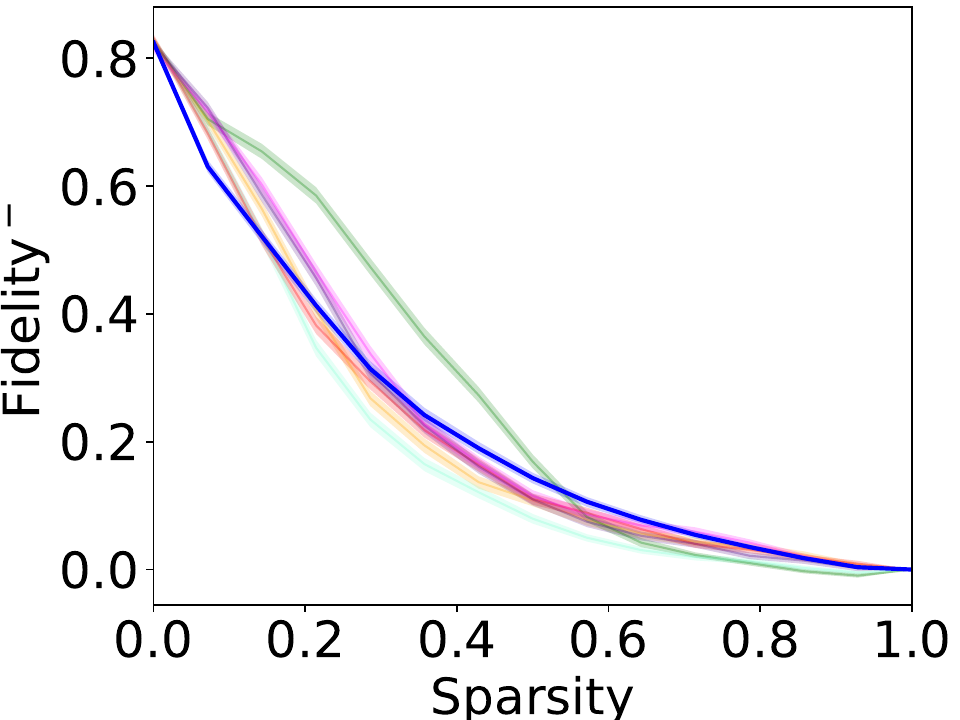}
}
\!\!\!\!
\subfigure[]{
\includegraphics[width=0.15\linewidth]{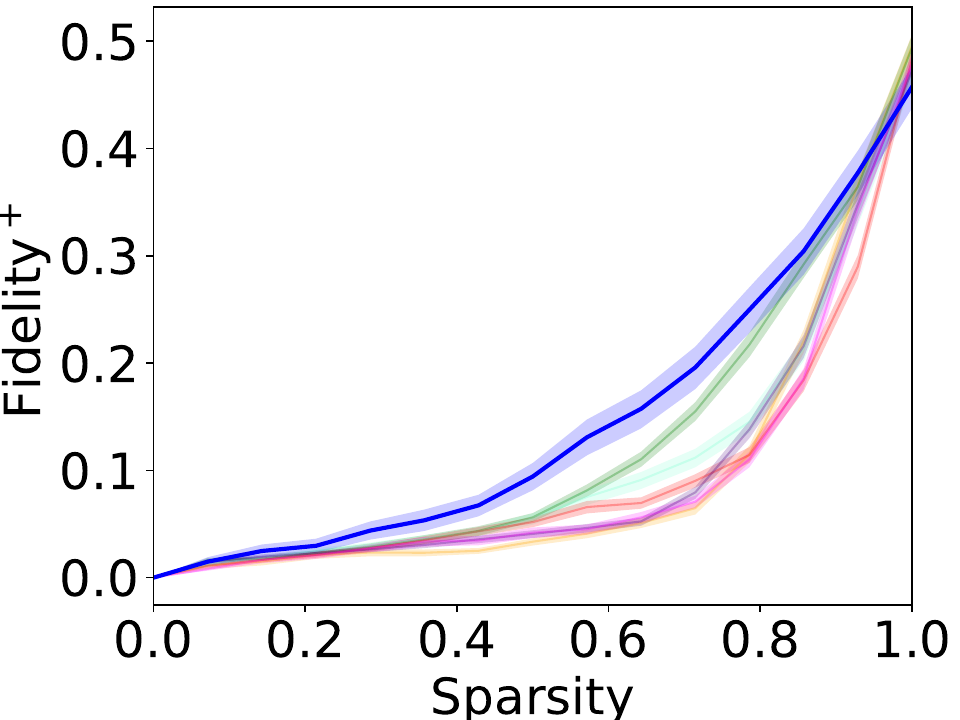}
}
\!\!\!\!
\subfigure[]{
\includegraphics[width=0.15\linewidth]{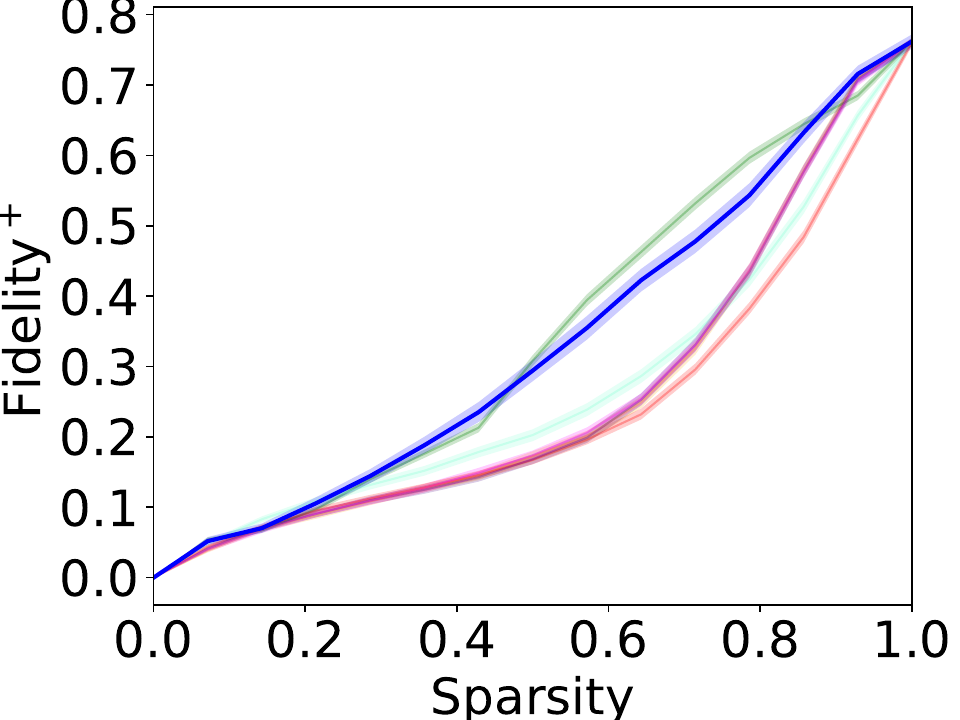}
}
\!\!\!\!
\subfigure[]{
\includegraphics[width=0.15\linewidth]{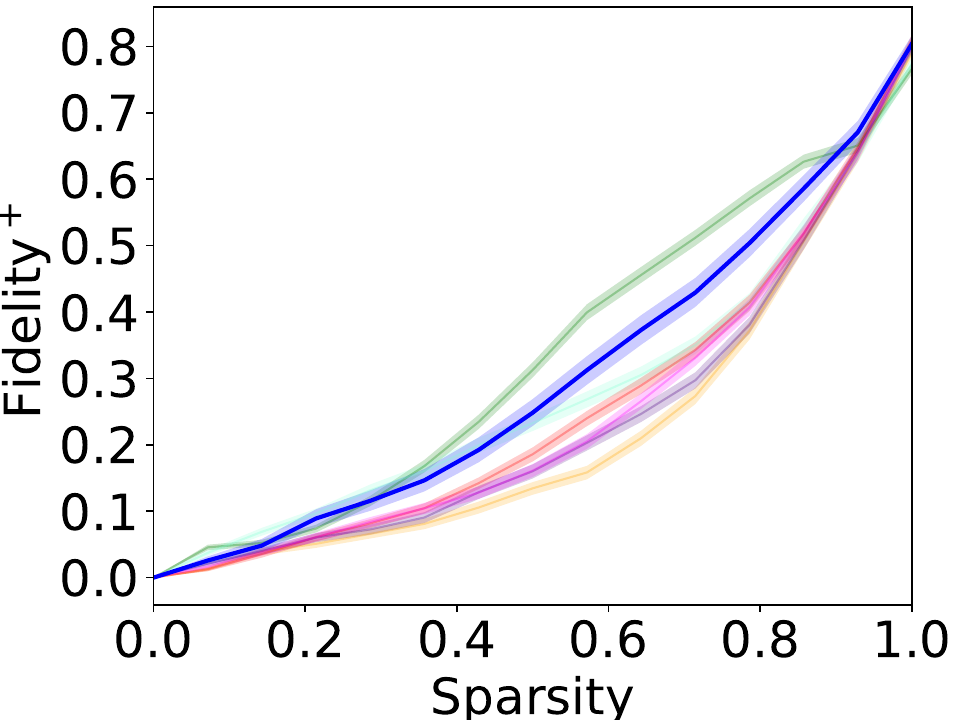}
}
\!\!\!\!
\subfigure[]{
\includegraphics[width=0.15\linewidth]{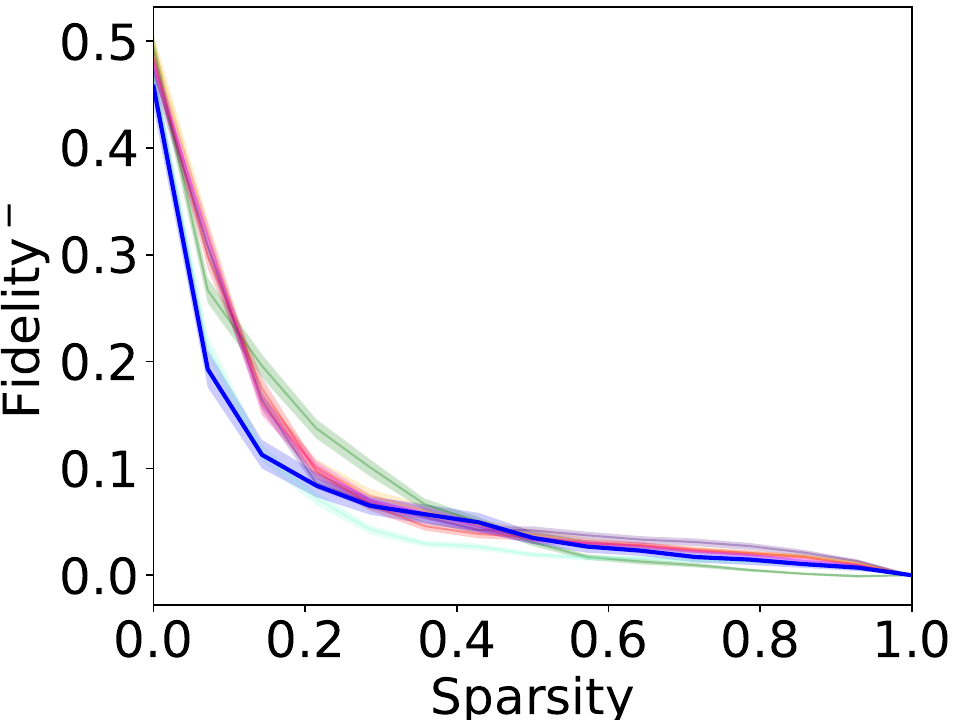}
}
\!\!\!\!
\subfigure[]{
\includegraphics[width=0.15\linewidth]{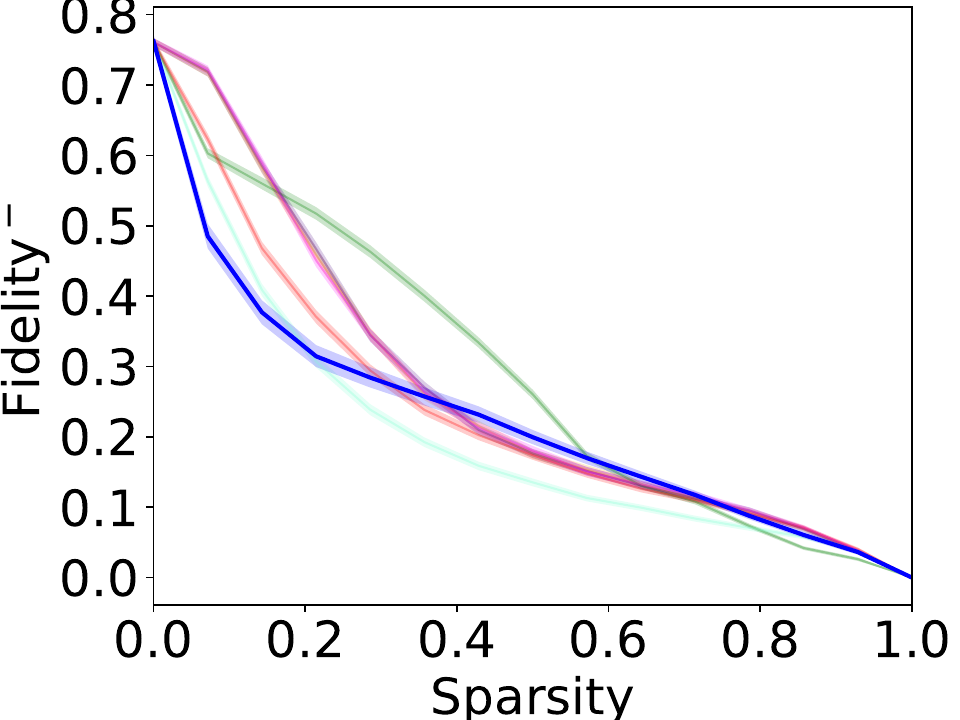}
}
\!\!\!\!
\subfigure[]{
\includegraphics[width=0.15\linewidth]{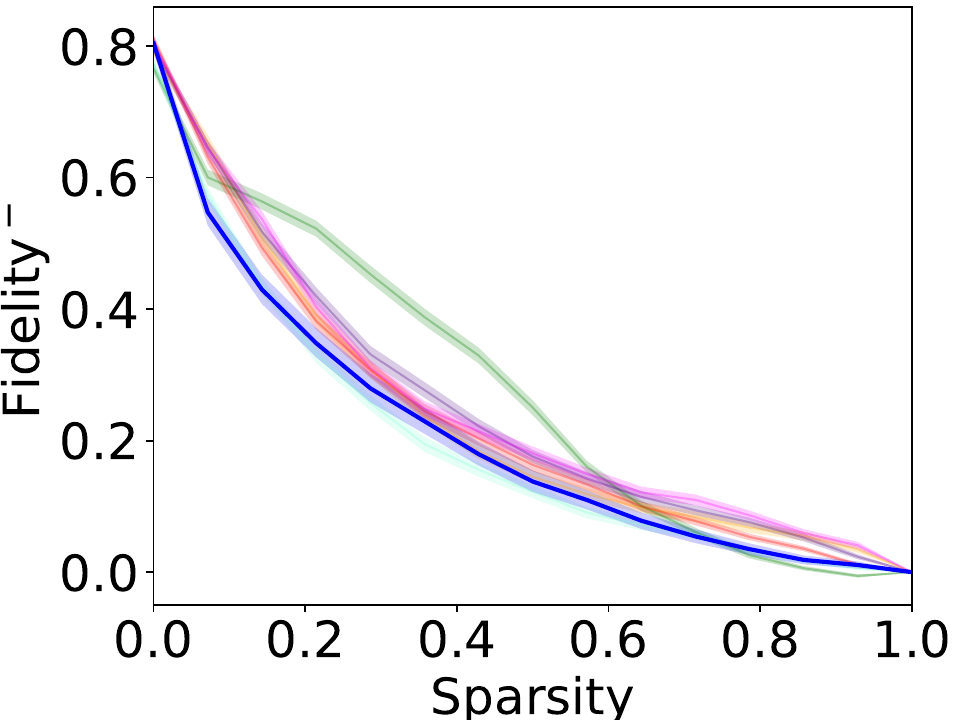}
}
\!\!\!\!
\includegraphics[width=0.8\linewidth]{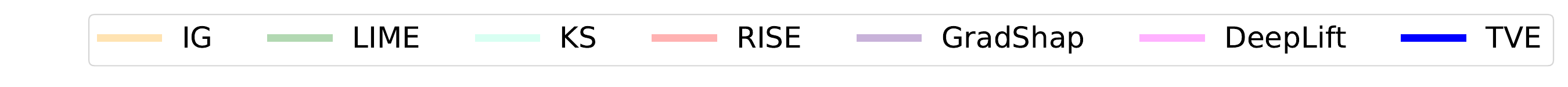}

\caption{\label{fig:fidelity_curve} 
$\mathrm{Fidelity}^+$-sparsity curve for explaining \texttt{ViT-Base} on \texttt{Cats-vs-dogs}~(a), \texttt{Imagenette}~(a), and \texttt{CIFAR-10}~(c).
$\mathrm{Fidelity}^-$-sparsity curve of \texttt{ViT-Base} on \texttt{Cats-vs-dogs}~(d), \texttt{Imagenette}~(e), and \texttt{CIFAR-10}~(f).
$\mathrm{Fidelity}^+$-sparsity curve of \texttt{Swin-Base} on \texttt{Cats-vs-dogs}~(g), \texttt{Imagenette}~(h), and \texttt{CIFAR-10}~(i).
$\mathrm{Fidelity}^-$-sparsity curve of \texttt{Swin-Base} on \texttt{Cats-vs-dogs}~(j), \texttt{Imagenette}~(k), and \texttt{CIFAR-10}~(l).
$\mathrm{Fidelity}^+$-sparsity curve of \texttt{Deit-Base} on \texttt{Cats-vs-dogs}~(m), \texttt{Imagenette}~(n), and \texttt{CIFAR-10}~(o).
$\mathrm{Fidelity}^-$-sparsity curve of \texttt{Deit-Base} on \texttt{Cats-vs-dogs}~(p), \texttt{Imagenette}~(q), and \texttt{CIFAR-10}~(r).
}
\end{figure}

\section{Fidelity-Sparsity Curve of Section~\ref{sec:fidelity_eval}}
\label{appendix:fidelity-sparsity-curve}

We show the fidelity-sparsity curve for explaining \texttt{ViT-Base}, \texttt{Swin-Base}, and \texttt{Deit-Base} on the \texttt{Cats-vs-dogs}, \texttt{Imagenette}, and \texttt{CIFAR-10} datasets in Figures~\ref{fig:fidelity_curve}~(a)-(r).
It is observed that \Algnameabbr{} consistently exhibits promising performance in terms of both $\mathrm{Fidelity}^+$($\uparrow$) and $\mathrm{Fidelity}^-$($\downarrow$), surpassing the majority of baseline methods. 
This indicates \Algnameabbr{}'s ability to faithfully explain various downstream tasks.

\section{Computational Infrastructure}
\label{appendix:hardware}

The computational infrastructure information is given in Table~\ref{tab:computing_infrastructure}.

\begin{table}[H]
\centering
\caption{Computing infrastructure for the experiments.}
\begin{tabular}{l|c}
\toprule
Device Attribute & Value \\
\hline
Computing Infrastructure & GPU \\
GPU Model & NVIDIA-A5000 \\ %, NVIDIA-A5000 \\
GPU Memory & 24564MB \\ %, 24564MB \\
GPU Number & 8 \\
CUDA Version & 12.1 \\
% CPU model & \\
CPU Memory & 512GB \\
\bottomrule
\end{tabular}
\label{tab:computing_infrastructure}
\end{table}

\clearpage

\section{More Case Studies}
\label{appendix:case_study}

We give more explanation heatmaps of \texttt{ViT-Base} on the \texttt{ImageNet} dataset in Figure~\ref{fig:more_case_study_vit_imagenet-1}, which are generated by \Algnameabbr{}.

\begin{figure}[H]
\centering
\includegraphics[width=0.13\linewidth]{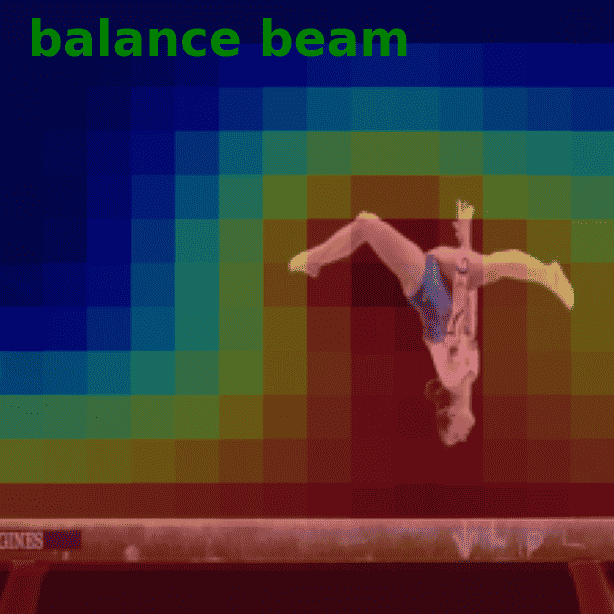}
\includegraphics[width=0.13\linewidth]{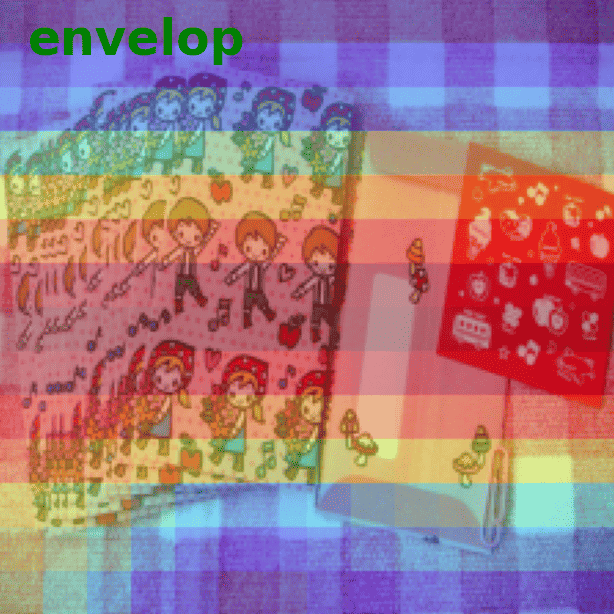}
\includegraphics[width=0.13\linewidth]{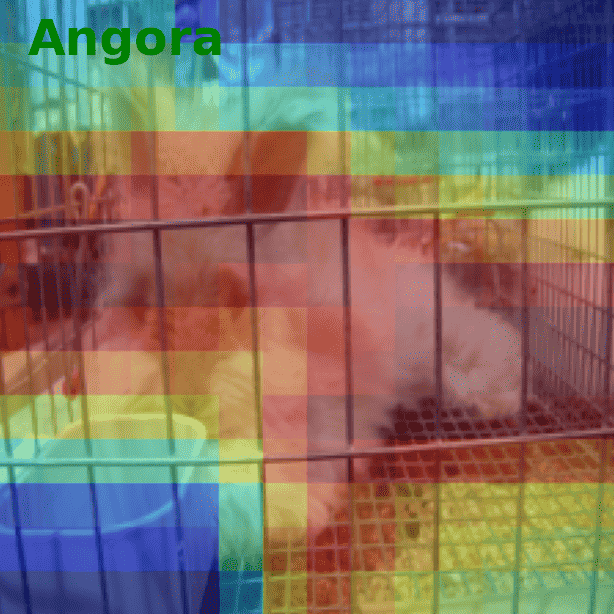}
\includegraphics[width=0.13\linewidth]{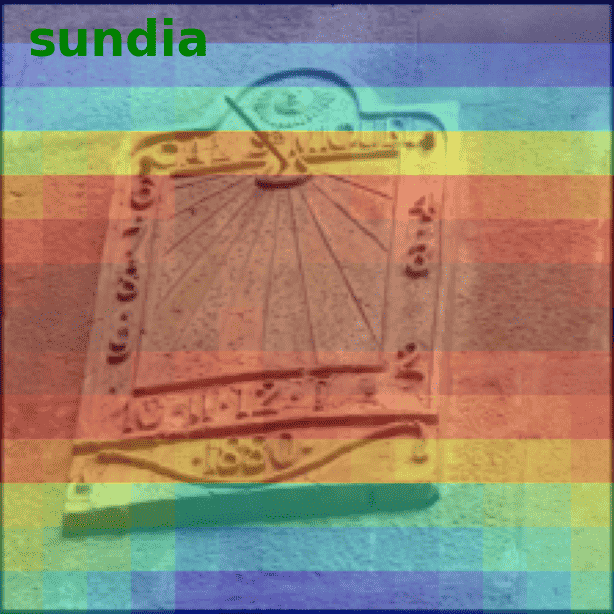}
\includegraphics[width=0.13\linewidth]{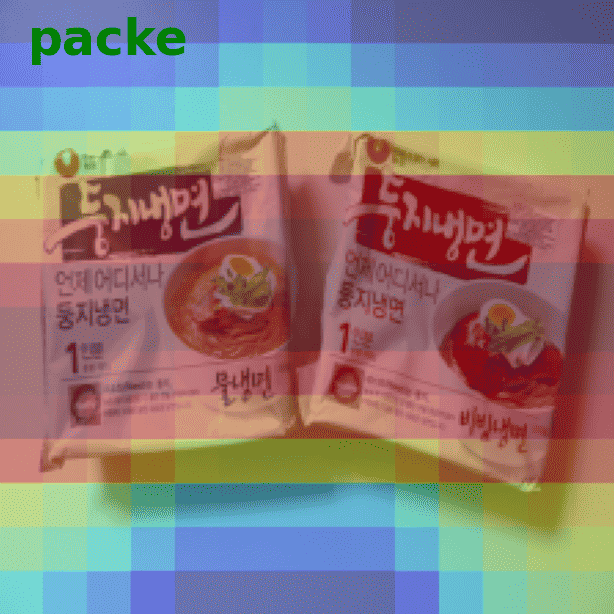}
\includegraphics[width=0.13\linewidth]{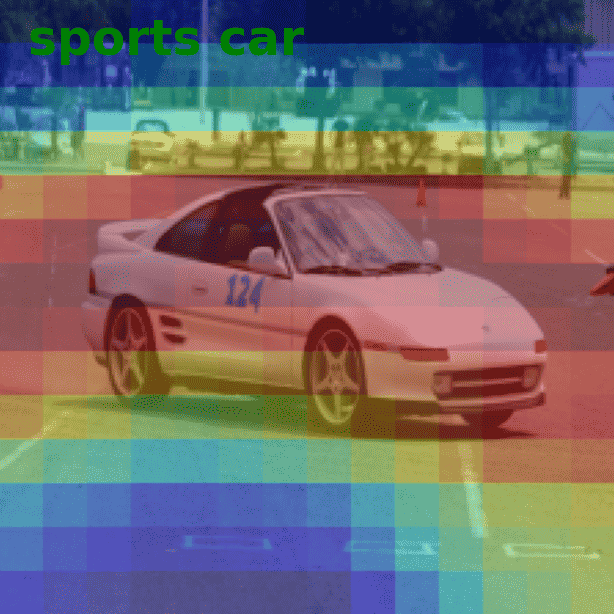}
\\
\includegraphics[width=0.13\linewidth]{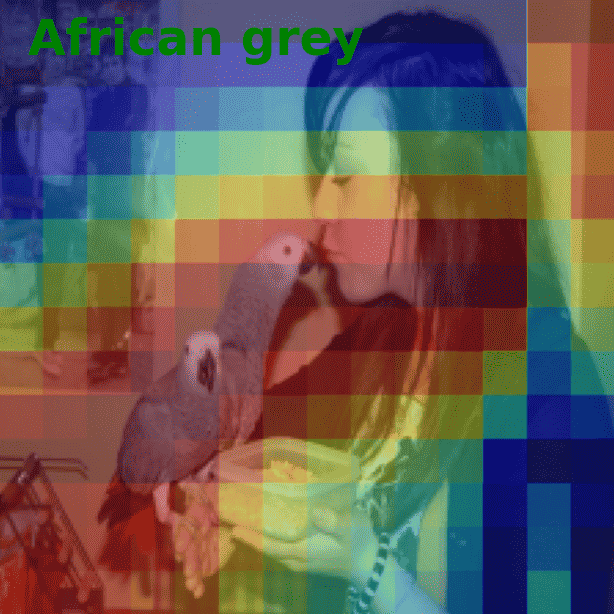}
\includegraphics[width=0.13\linewidth]{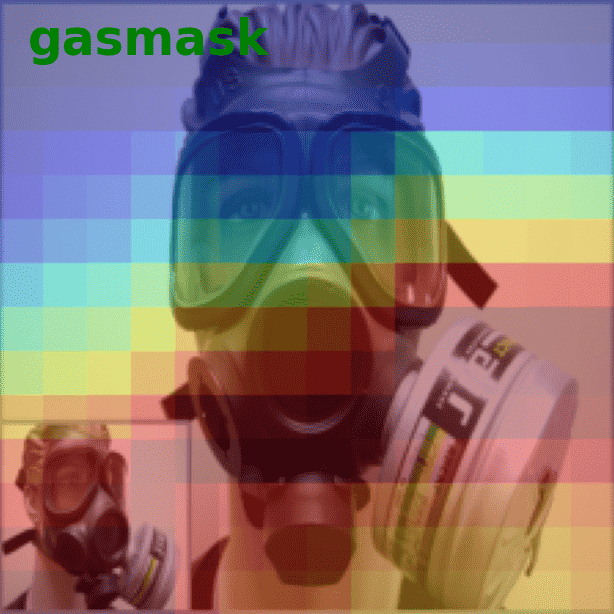}
\includegraphics[width=0.13\linewidth]{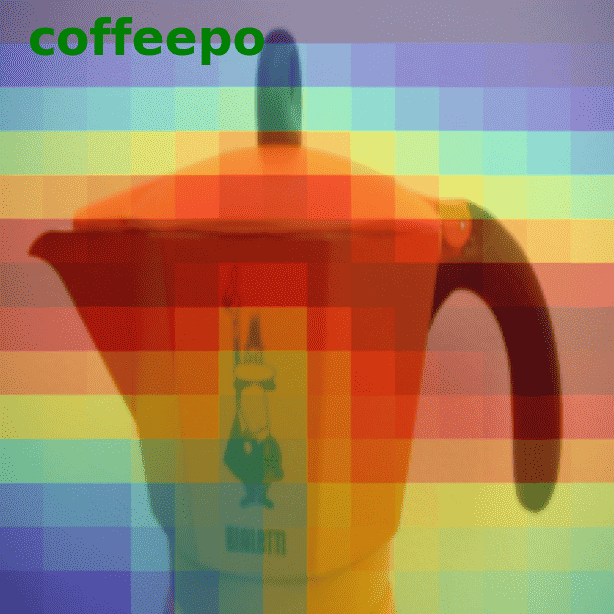}
\includegraphics[width=0.13\linewidth]{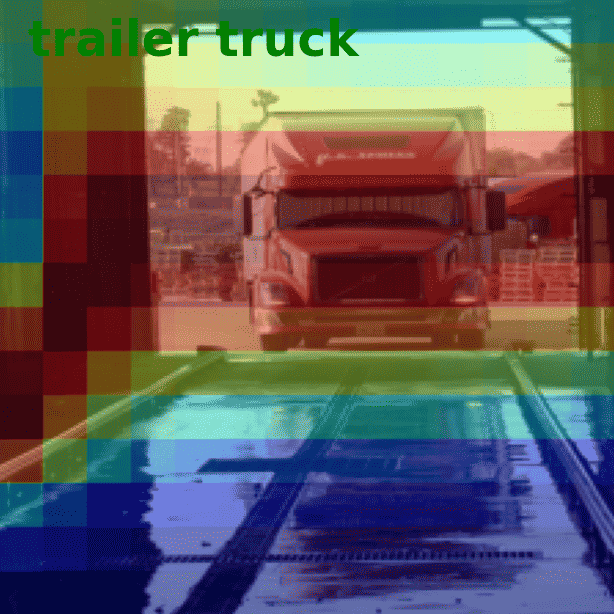}
\includegraphics[width=0.13\linewidth]{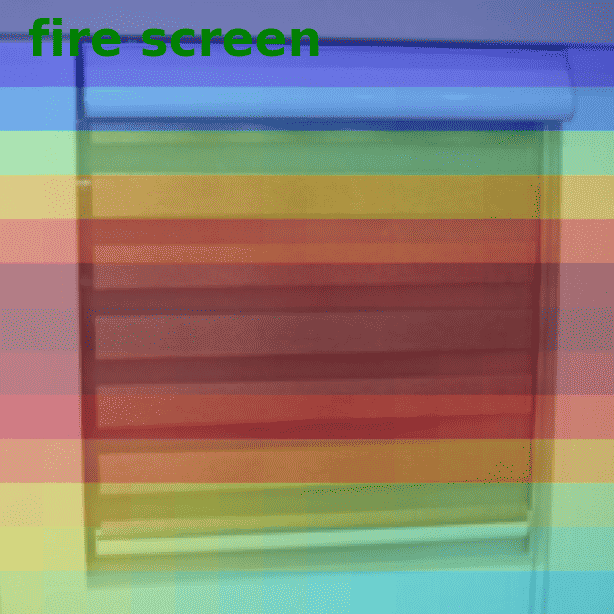}
\includegraphics[width=0.13\linewidth]{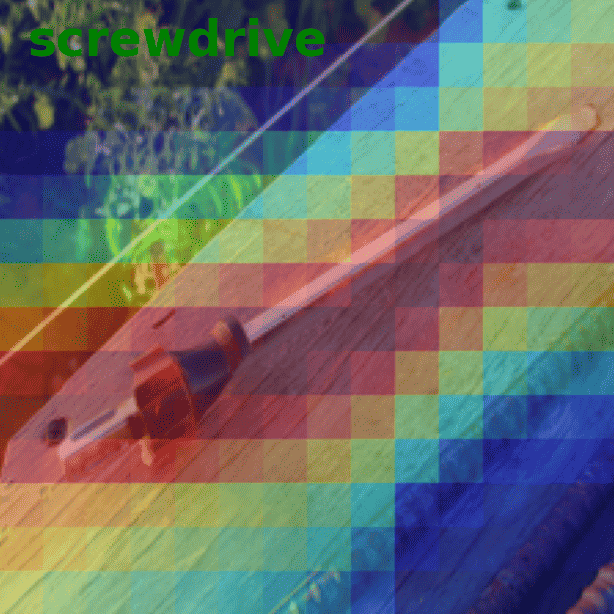}
\\
\includegraphics[width=0.13\linewidth]{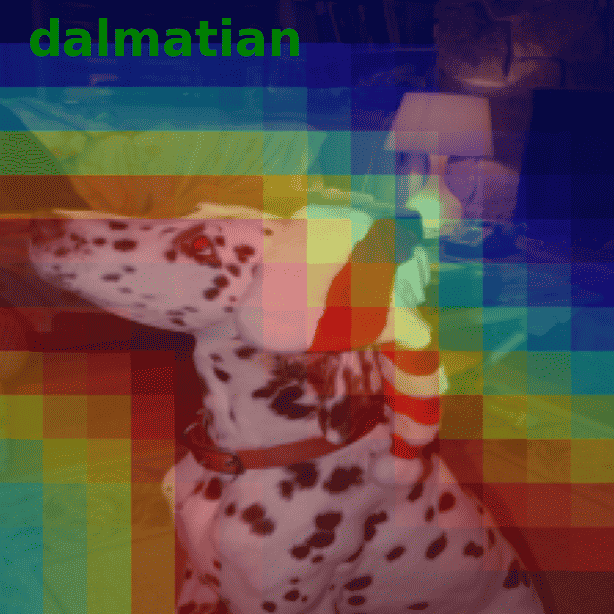}
\includegraphics[width=0.13\linewidth]{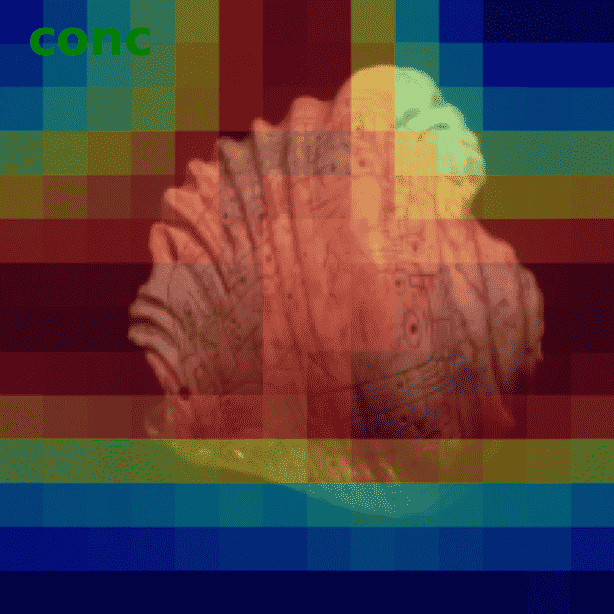}
\includegraphics[width=0.13\linewidth]{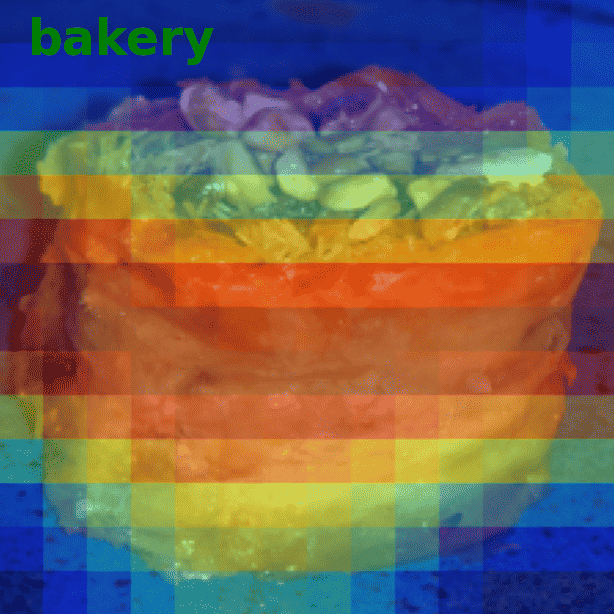}
\includegraphics[width=0.13\linewidth]{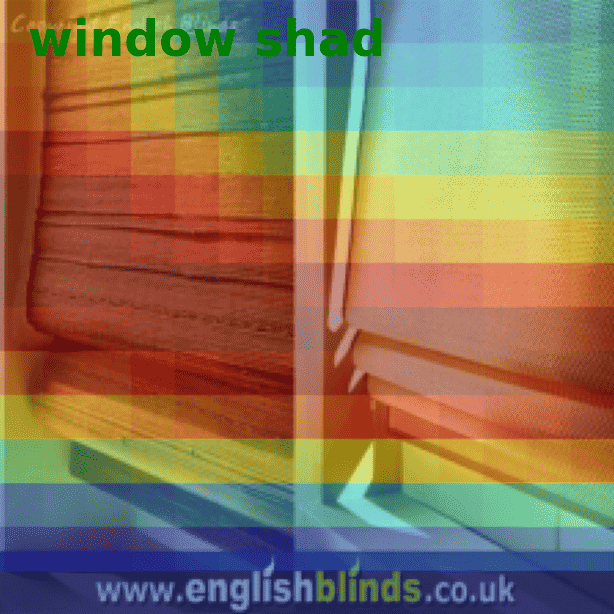}
\includegraphics[width=0.13\linewidth]{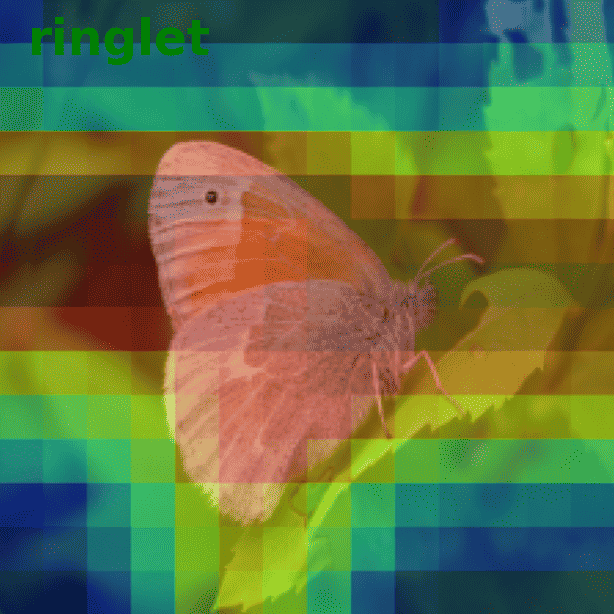}
\includegraphics[width=0.13\linewidth]{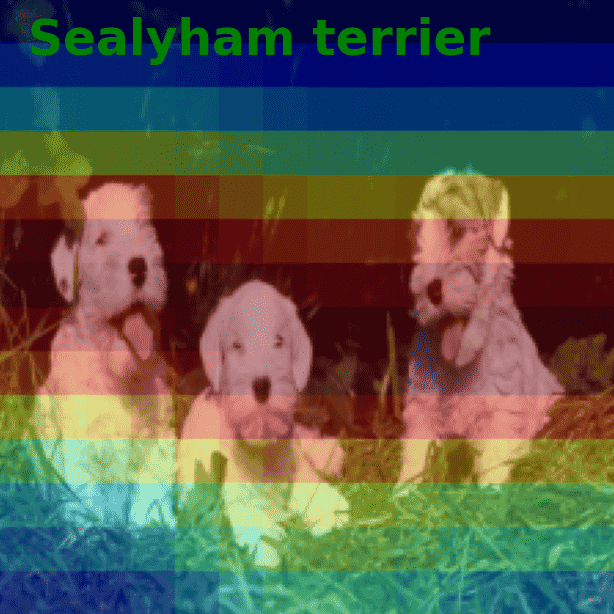}
\\
\includegraphics[width=0.13\linewidth]{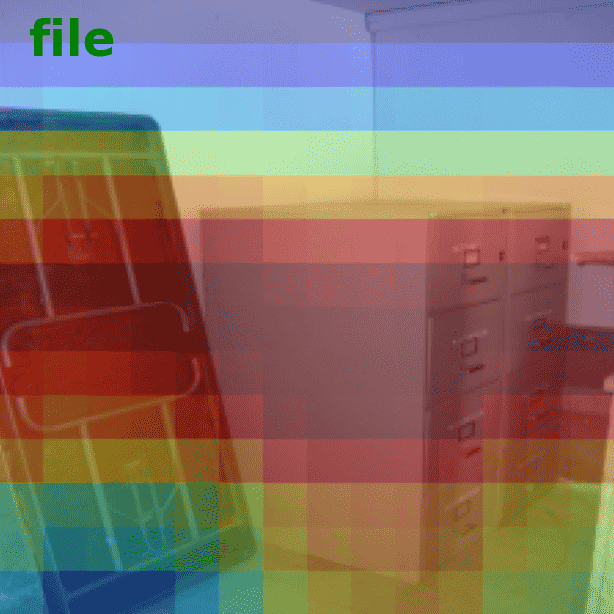}
\includegraphics[width=0.13\linewidth]{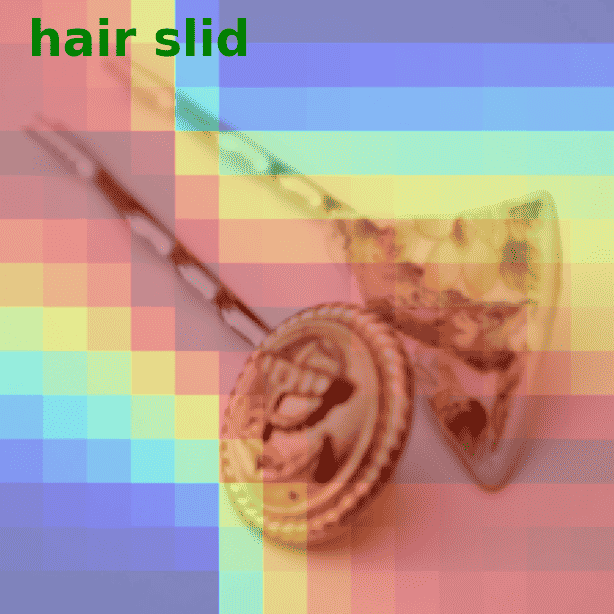}
\includegraphics[width=0.13\linewidth]{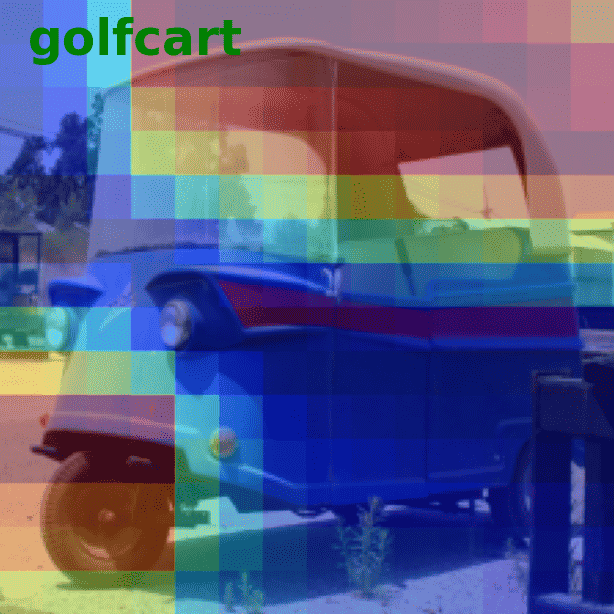}
\includegraphics[width=0.13\linewidth]{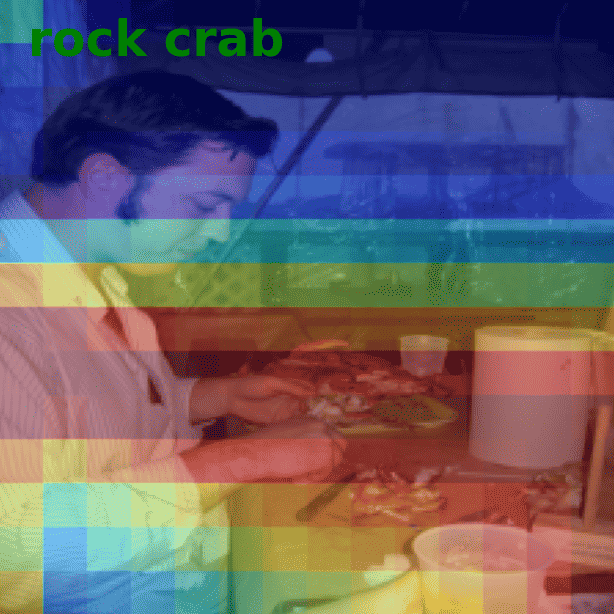}
\includegraphics[width=0.13\linewidth]{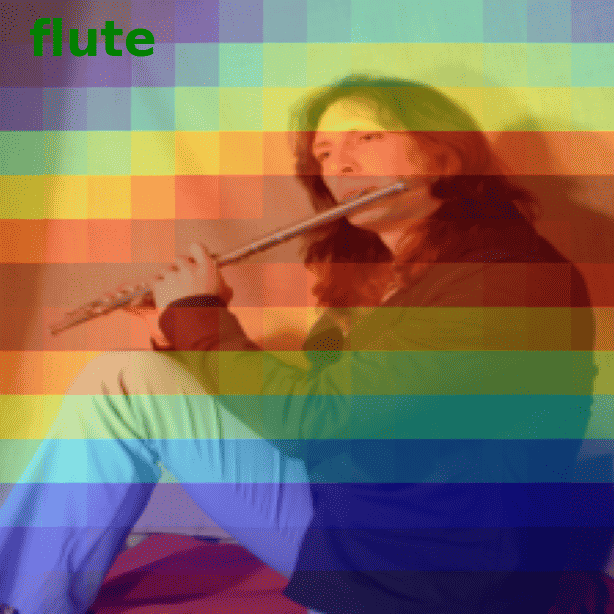}
\includegraphics[width=0.13\linewidth]{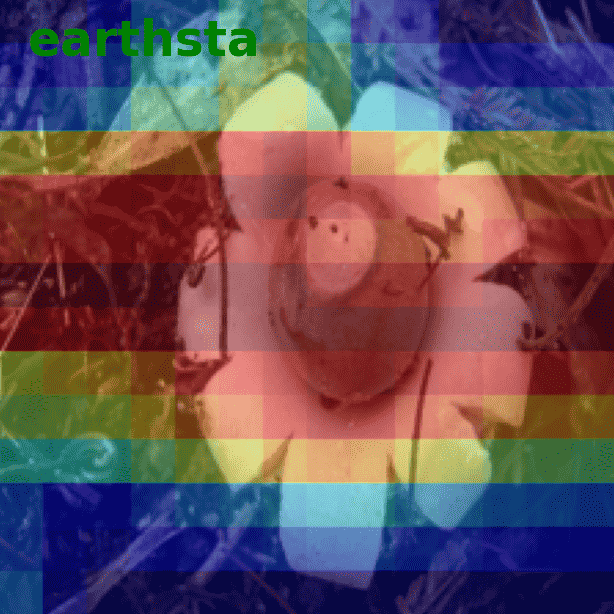}
\\
\includegraphics[width=0.13\linewidth]{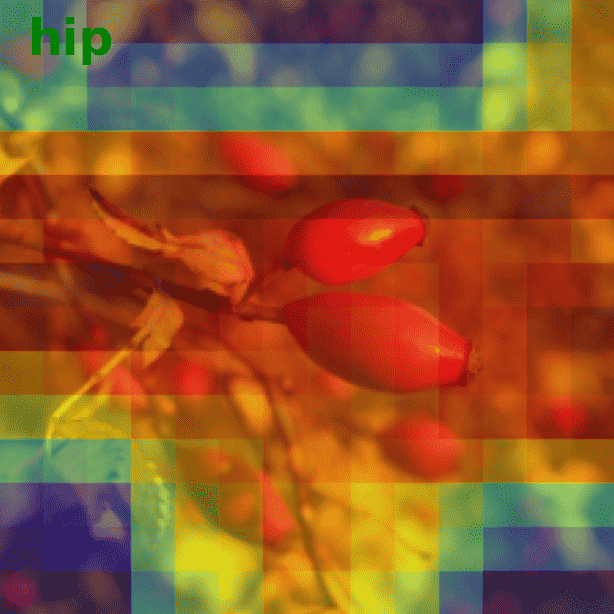}
\includegraphics[width=0.13\linewidth]{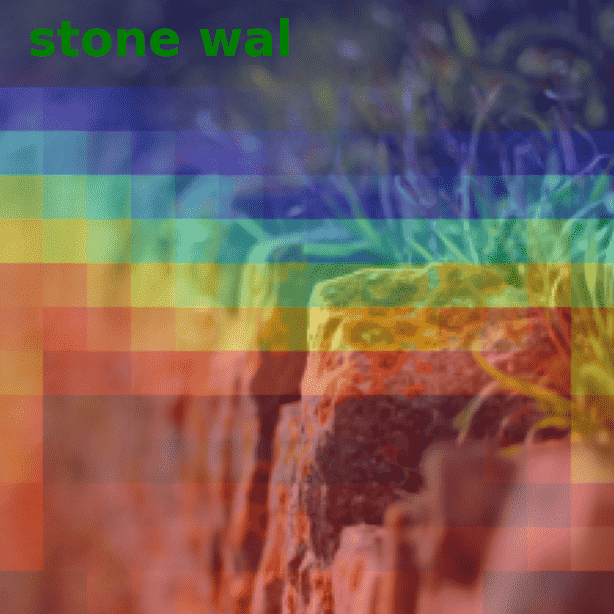}
\includegraphics[width=0.13\linewidth]{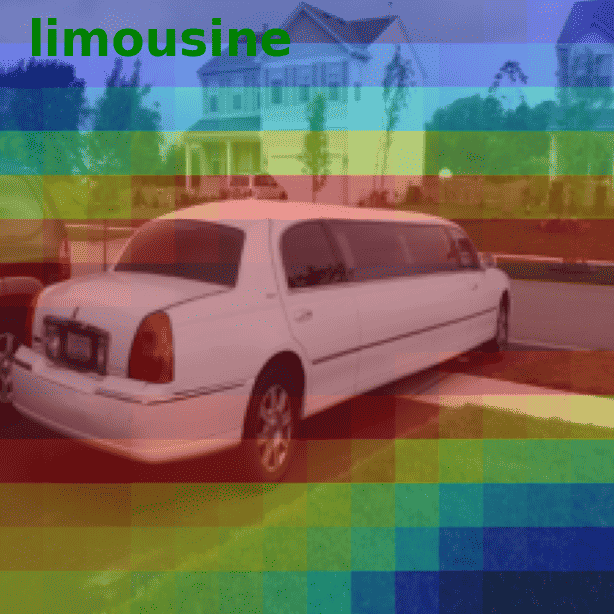}
\includegraphics[width=0.13\linewidth]{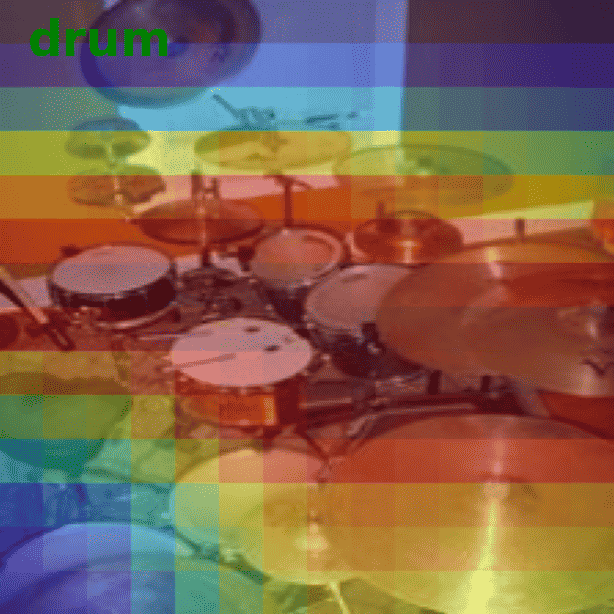}
\includegraphics[width=0.13\linewidth]{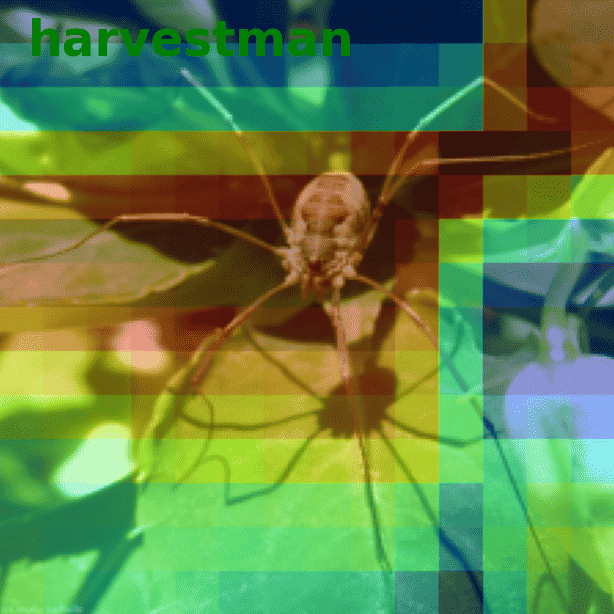}
\includegraphics[width=0.13\linewidth]{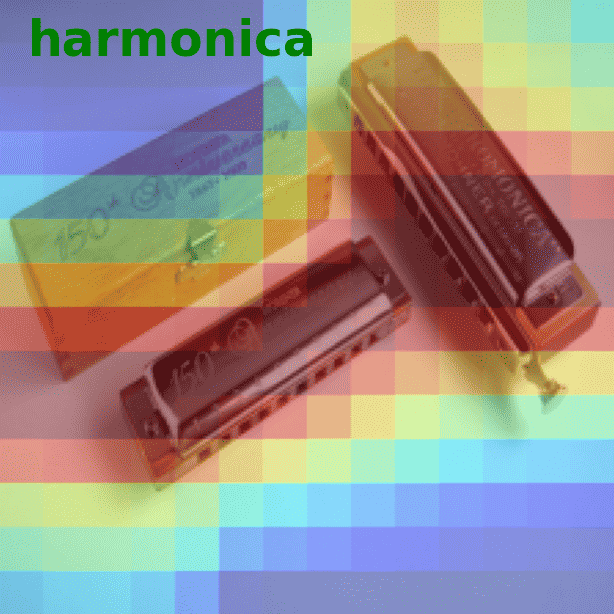}
\\
\includegraphics[width=0.13\linewidth]{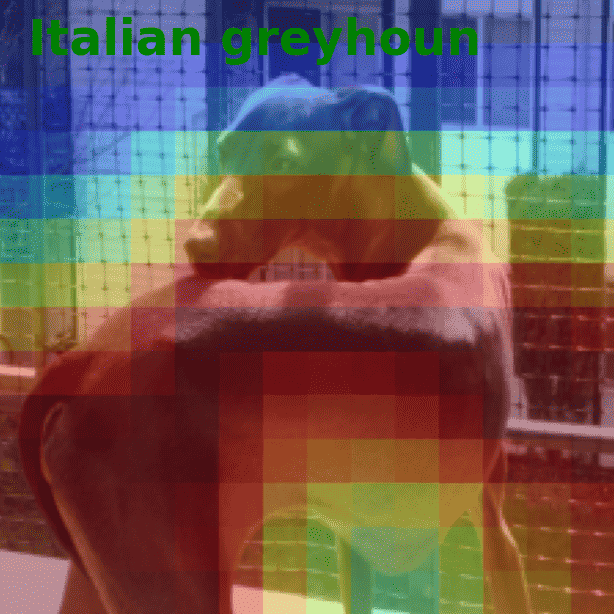}
\includegraphics[width=0.13\linewidth]{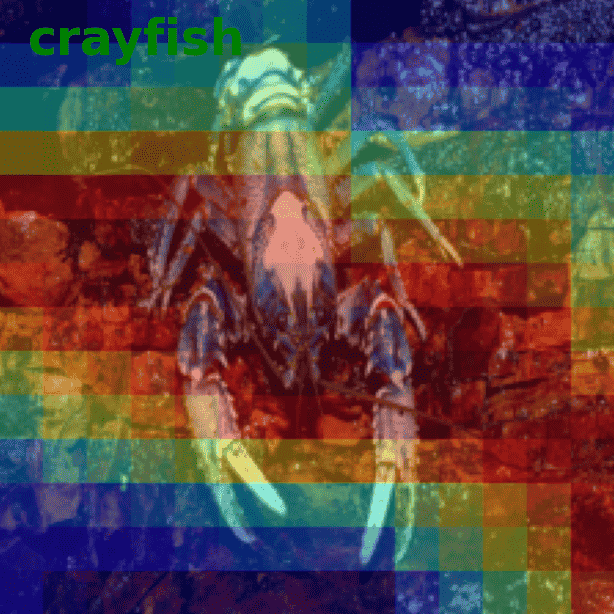}
\includegraphics[width=0.13\linewidth]{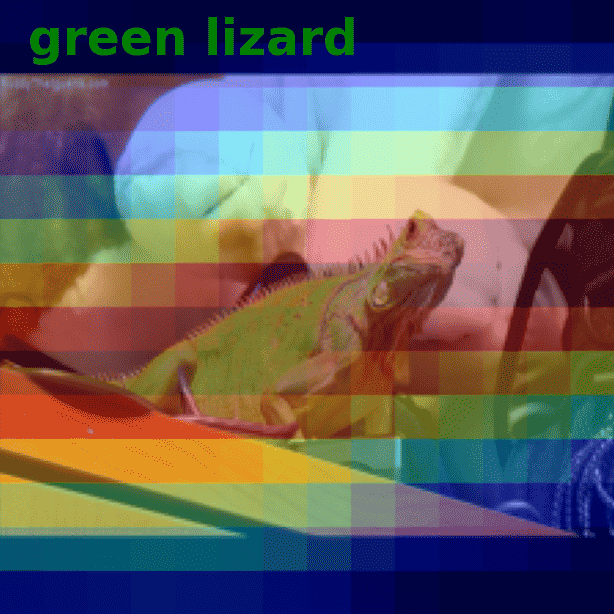}
\includegraphics[width=0.13\linewidth]{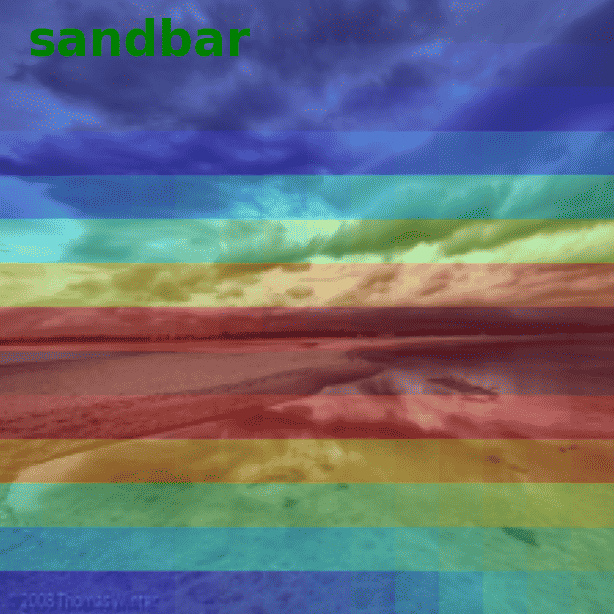}
\includegraphics[width=0.13\linewidth]{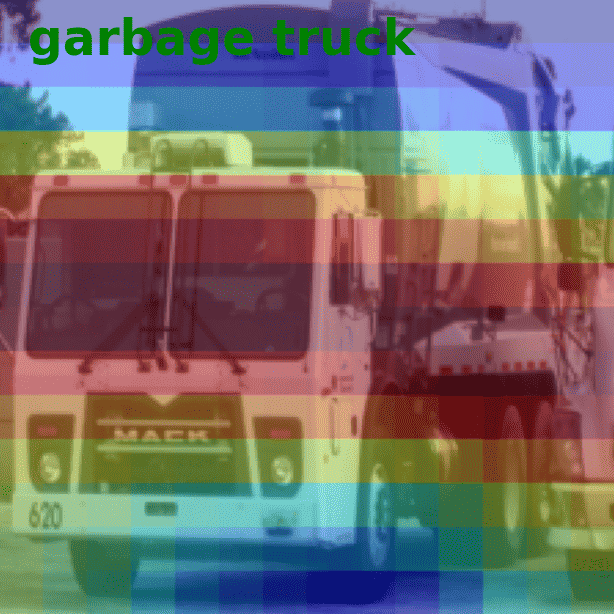}
\includegraphics[width=0.13\linewidth]{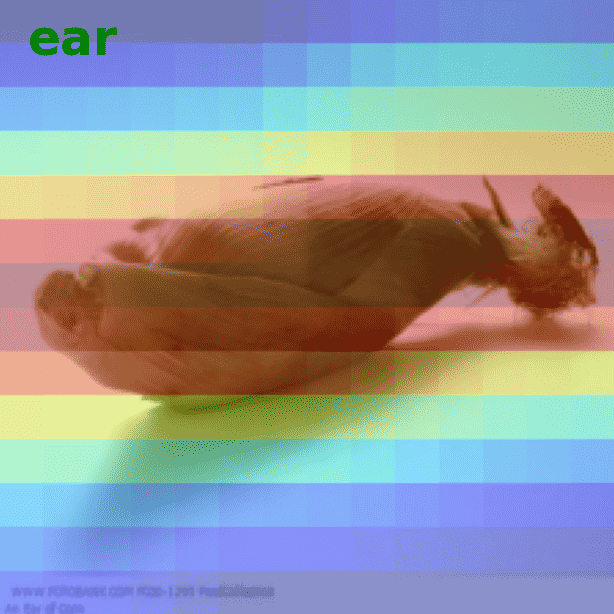}
\\
\includegraphics[width=0.13\linewidth]{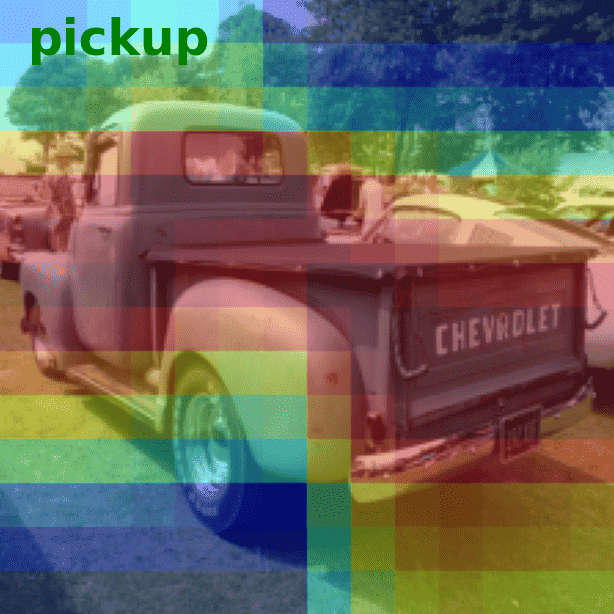}
\includegraphics[width=0.13\linewidth]{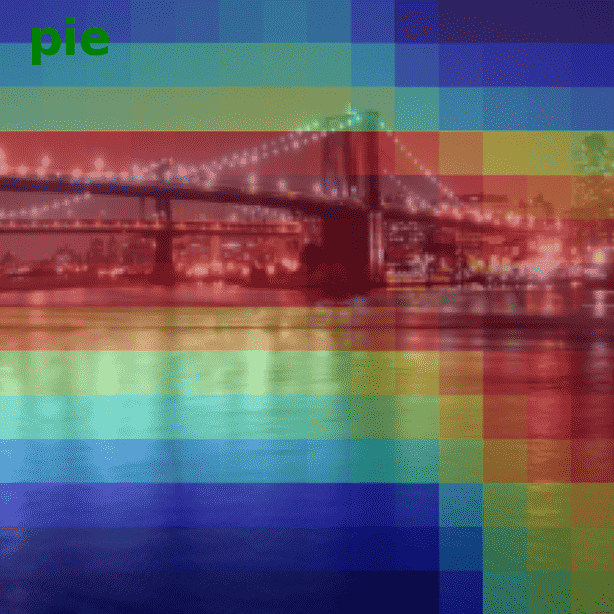}
\includegraphics[width=0.13\linewidth]{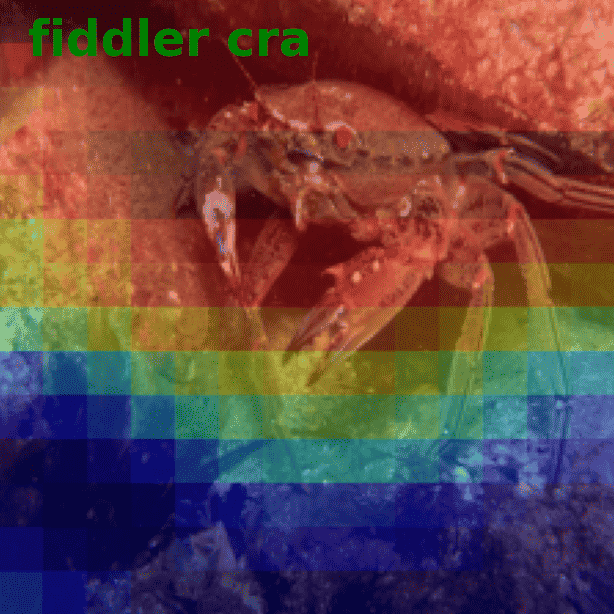}
\includegraphics[width=0.13\linewidth]{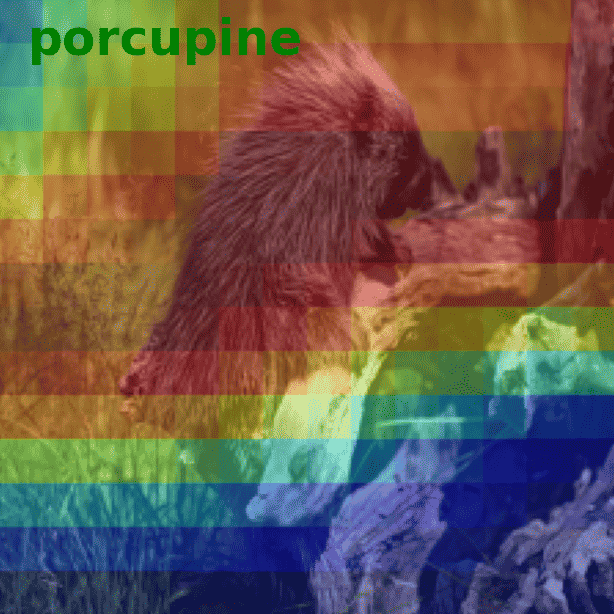}
\includegraphics[width=0.13\linewidth]{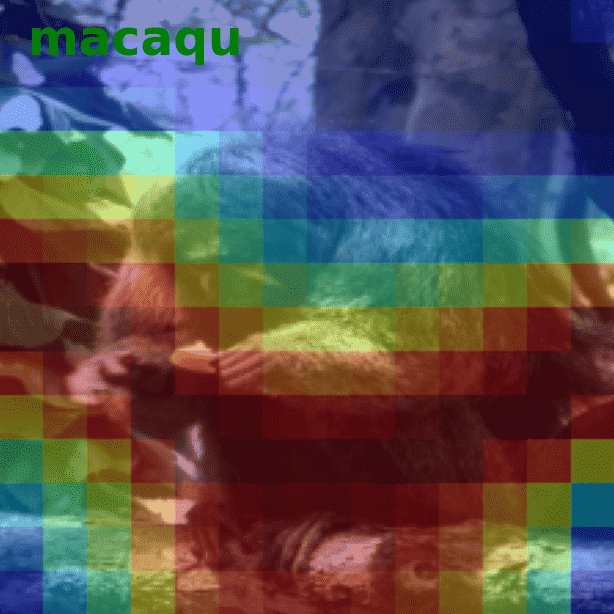}
\includegraphics[width=0.13\linewidth]{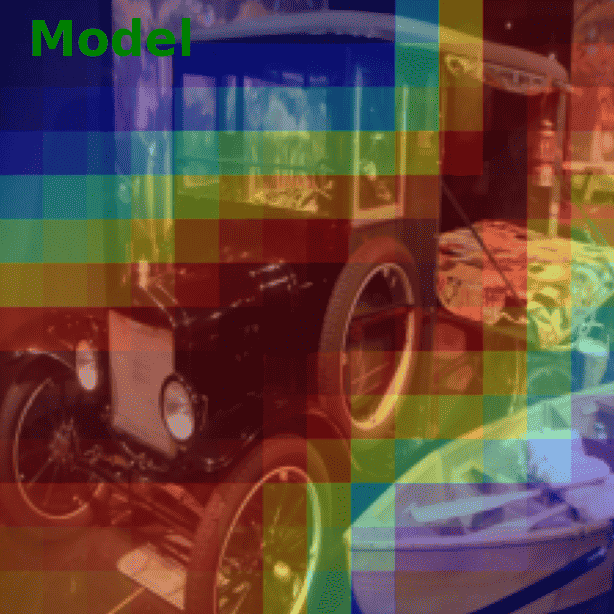}
\\
\includegraphics[width=0.13\linewidth]{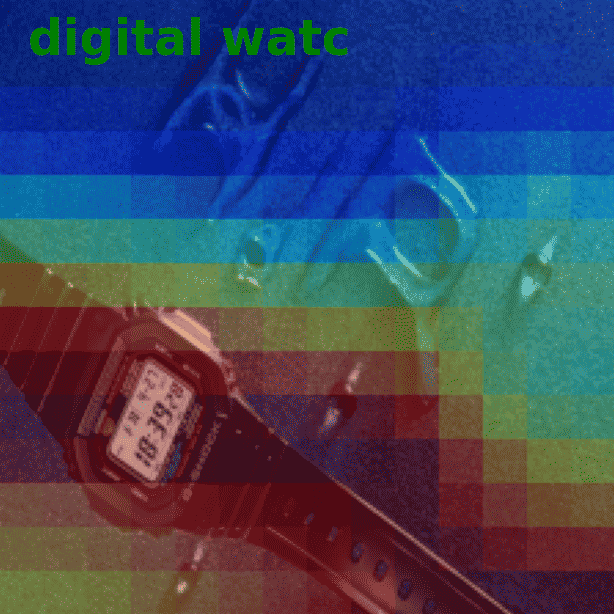}
\includegraphics[width=0.13\linewidth]{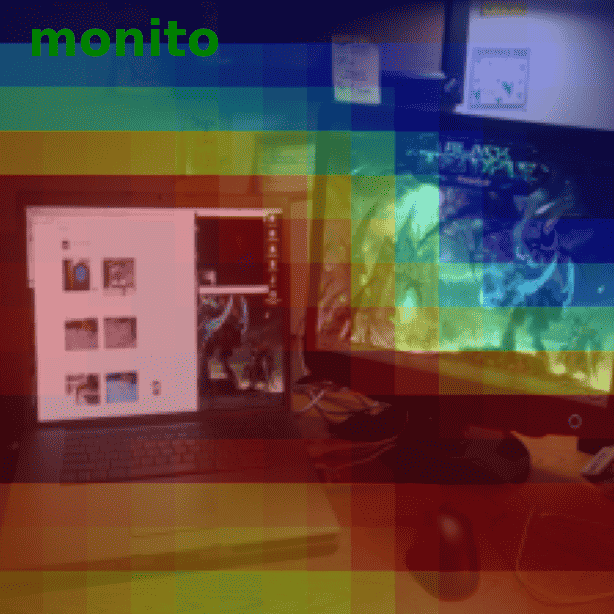}
\includegraphics[width=0.13\linewidth]{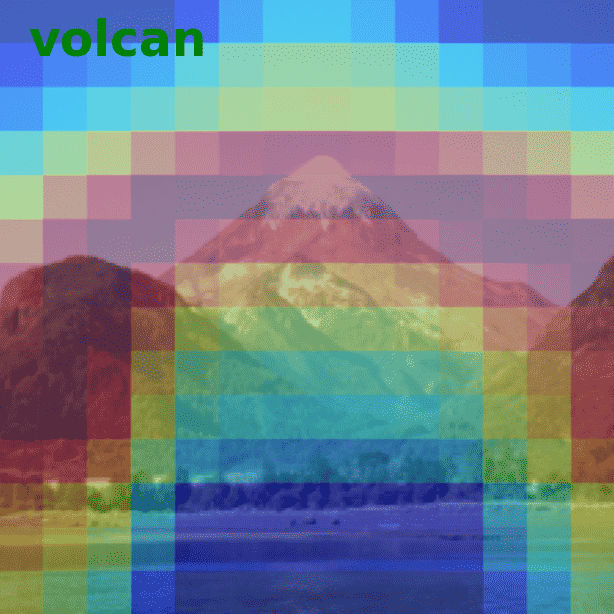}
\includegraphics[width=0.13\linewidth]{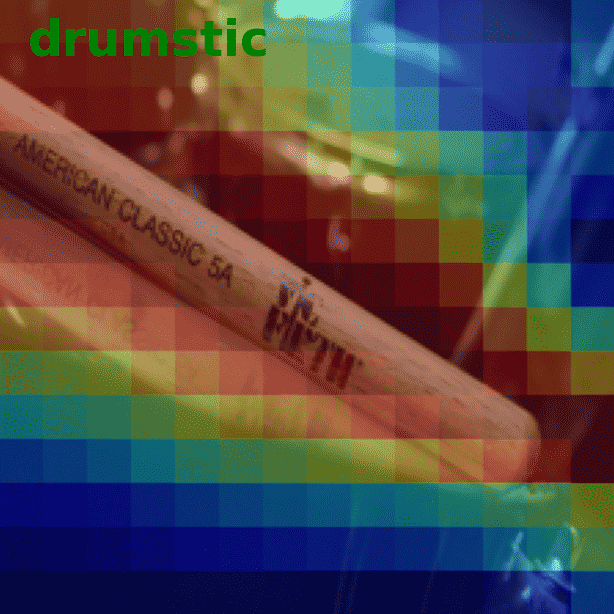}
\includegraphics[width=0.13\linewidth]{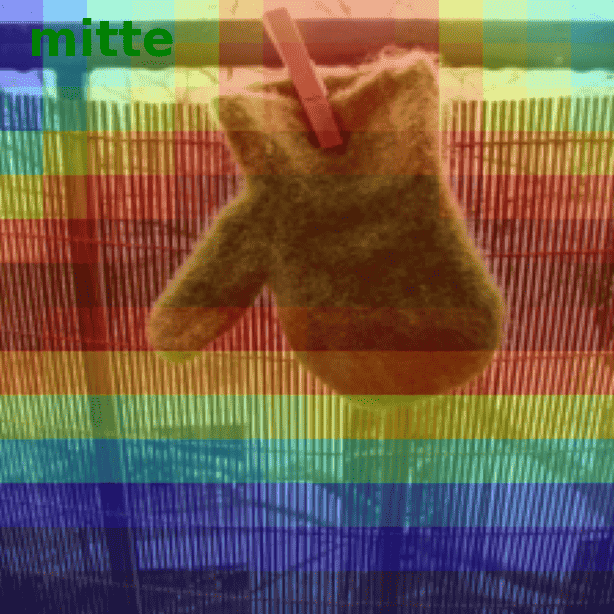}
\includegraphics[width=0.13\linewidth]{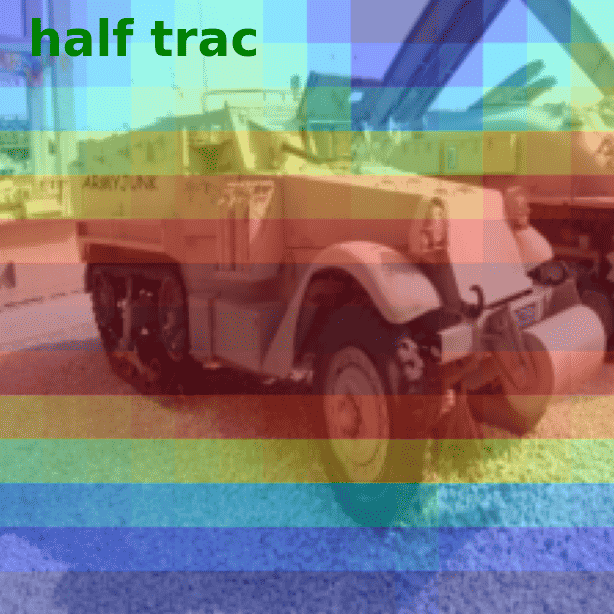}

\caption{\label{fig:more_case_study_vit_imagenet-1} Explanation heatmaps of \texttt{ViT-Base} on the \texttt{ImageNet} dataset.}
\end{figure}

\end{document}